\newcommand{\cmark}{\ding{51}}
\newcommand{\xmark}{\ding{55}}
\newcommand{\intrange}{\ .. \ }
\newtheorem{theorem}{Theorem}
\newtheorem{lemma}{Lemma}
\newtheorem{corollary}{Corollary}
\newtheorem{definition}{Definition}
\newtheorem{condition}[theorem]{Condition}
\newcommand{\BigOL}[1]{\tilde{\mathcal{O}}\left(#1\right)}
\DeclareMathOperator*{\argmax}{arg\,max}
\newtheoremstyle{eventtheoremstyle}%
  {5pt}
  {3pt}
  {\itshape}
  {}
  {\bfseries}
  {:}
  {.8em}
  {\thmname{#1} #3}
\theoremstyle{eventtheoremstyle}
\newtheorem{event}{Event}[section]
\newcommand*\circled[1]{\tikz[baseline=(char.base)]{
                \node[shape=circle,draw,inner sep=1pt] (char) {\scriptsize #1};}}
\author{
        Jacopo Germano\thanks{Equal contribution.}\\
        Politecnico di Milano\\
	\texttt{jacopo.germano@polimi.it}
        \And
        Francesco Emanuele Stradi\footnotemark[1]\\
        Politecnico di Milano\\
	\texttt{francescoemanuele.stradi@polimi.it}
        \And
        Gianmarco Genalti\\
        Politecnico di Milano\\
	\texttt{gianmarco.genalti@polimi.it}
        \And
        Matteo Castiglioni\\
	Politecnico di Milano\\
	\texttt{matteo.castiglioni@polimi.it}
	\And
	Alberto Marchesi\\
	Politecnico di Milano\\
	\texttt{alberto.marchesi@polimi.it}
	\And
	Nicola Gatti\\
	Politecnico di Milano\\
	\texttt{nicola.gatti@polimi.it}
}
\title{A Best-of-Both-Worlds Algorithm for Constrained MDPs with Long-Term Constraints}
\begin{document}

\maketitle

\begin{abstract}
	We study \emph{online learning} in episodic \emph{constrained Markov decision processes} (CMDPs), where the learner aims at collecting as much reward as possible over the episodes, while satisfying some \emph{long-term} constraints during the learning process.
	Rewards and constraints can be selected either \emph{stochastically} or \emph{adversarially}, and the transition function is \emph{not} known to the learner.
	While online learning in classical (unconstrained) MDPs has received considerable attention over the last years, the setting of CMDPs is still largely unexplored.
	This is surprising, since in real-world applications, such as, \emph{e.g.}, autonomous driving, automated bidding, and recommender systems, there are usually additional constraints and specifications that an agent has to obey during the learning process.
	In this paper, we provide the first \emph{best-of-both-worlds} algorithm for CMDPs with long-term constraints, in the flavor of~\citet{balseiro2023best}.
	Our algorithm is capable of handling settings in which rewards and constraints are selected either {stochastically} or {adversarially}, without requiring any knowledge of the underling process.
	Moreover, our algorithm matches state-of-the-art regret and constraint violation bounds for settings in which constraints are selected stochastically, while it is the first to provide guarantees in the case in which they are chosen adversarially.
\end{abstract}

\section{Introduction}\label{introduction}

The framework of \emph{Markov decision processes} (MDPs)~\citep{puterman2014markov} has been extensively employed to model sequential decision-making problems.
In \emph{reinforcement learning} (RL)~\citep{sutton2018reinforcement}, the goal is to learn an optimal policy for an agent interacting with an environment modeled as an MDP.
%
%
%
%
A different line of work~\citep{even2009online, neu2010online} is concerned with problems in which an agent interacts with an unknown MDP with the goal of guaranteeing that the overall reward achieved during the learning process is as much as possible.
%
%
%
%
This approach is more akin to \emph{online learning}~\citep{Orabona}, and it is far less investigated than classical RL approaches.
%
%
%

In real-world applications, there are usually additional constraints and specifications that an agent has to obey during the learning process, and these cannot be captured by the classical definition of MDP.
For instance, autonomous vehicles must avoid crashing while navigating~\citep{wen2020safe, isele2018safe}, bidding agents in ad auctions are constrained to a given budget~\citep{wu2018budget, he2021unified}, while recommender systems should \emph{not} present offending items to users~\citep{singh2020building}.
In order to model such features of real-world problems, \citet{Altman1999ConstrainedMD} introduced \emph{constrained} MDPs (CMDPs) by extending classical MDPs with cost constraints that the agent has to satisfy.

We study \emph{online learning} in episodic CMDPs in which the agent is subject to \emph{long-term} constraints.
In such a setting, the goal of the agent is twofold.
On the one hand, the agent wants to minimize their (cumulative) \emph{regret}, which is how much reward they lose compared to what they would have obtained by always playing a best-in-hindsight, constraint-satisfying policy.
On the other hand, while the agent is allowed to violate the constraints in a given episode, they want that the (cumulative) \emph{constraint violation} stays under control, by growing sublinearly in the number of episodes. 
%
%
Long-term constraints naturally model many features of real-world problems, such as, \emph{e.g.}, budget depletion in automated bidding~\citep{balseiro2019learning, gummadi2012repeated}.

%

All the existing works studying online learning problems in CMDPs with long-term constraints address settings in which the constraints are selected stochastically according to an unknown (stationary) probability distribution.
While these works address both the case where the rewards are stochastic (see, \emph{e.g.},~\citep{Constrained_Upper_Confidence, Exploration_Exploitation}) and the one in which they are adversarial (see, \emph{e.g.},~\citep{Online_Learning_in_Weakly_Coupled, Upper_Confidence_Primal_Dual}), to the best of our knowledge there is no work addressing settings with adversarially-selected constraints.
Some works (see, \emph{e.g.},~\citep{ding_non_stationary,Non_stationary_CMDPs}) consider the case in which rewards and constraints are non-stationary, assuming that their variation is bounded.
However, these results are \emph{not} applicable to general settings with adversarial constraints.
%
%
A detailed discussion of related works can be found in Appendix~\ref{app:rel}.

%
%
%


In this paper, we pioneer the study of CMDPs in which the constraints are selected adversarially.
In doing so, we introduce an algorithm that employs a novel primal-dual approach in CMDPs, allowing it to attain \emph{best-of-both-worlds} guarantees, in the flavor of~\citet{balseiro2023best}.
In particular, our algorithm provides optimal (in the number of episodes $T$) regret and constraint violation bounds when rewards and constraints are selected either \emph{stochastically} or \emph{adversarially}, without requiring any knowledge of the underling process.
While best-of-both-worlds algorithms have been recently introduced in online learning settings subject to constraints (see, \emph{e.g.},~\citep{liakopoulos2019cautious, balseiro2023best}), to the best of our knowledge our algorithm is the first of its kind in CMDPs.\footnote{Notice that, in the literature on online learning in MDPs, the term \emph{best-of-both-worlds} is sometimes referred to algorithms that achieve optimal instance-dependent regret bounds when rewards are selected \emph{stochastically} and $\tilde{\mathcal{O}}(\sqrt{T})$ regret when rewards are chosen \emph{adversarially}~\citep{jin2021best}. In this work, we borrow terminology from the literature on online learning with constraints, where the term usually refers to algorithms that achieve optimal regret and constraint violation bounds when the constraints are selected either \emph{stochastically} or \emph{adversarially}~\citep{balseiro2023best}.}

When the constraints are selected stochastically, we show that our algorithm provides $\tilde{\mathcal{O}}(\sqrt{T})$ cumulative regret and constraint violation when a suitably-defined Slater-like condition concerning the satisfiability of constraints is satisfied.
%
Moreover, whenever such a condition does \emph{not} hold, our algorithm still ensures $\tilde{\mathcal{O}}(T^{3/4})$ regret and constraint violation.
Instead, whenever the constraints are chosen adversarially, our analysis revolves around a parameter $\rho$ which is related to our Slater-like condition, and in particular to the ``margin'' by which it is possible to strictly satisfy the constraints.
Indeed, under adversarial constraints, \citet{Mannor} show that it is impossible to simultaneously achieve sublinear regret and sublinear cumulative constraint violation.
We prove that our algorithm achieves no-$\alpha$-regret with $\alpha = \rho / (L+\rho)$, while guaranteeing that the cumulative constraint violation is sublinear in the number of episodes.
\emph{This matches the regret guarantees derived for other best-of-both-worlds algorithms in (non-sequential) online learning settings}~\citep{castiglioni2022online,balseiro2023best}.

Differently from previous works on online learning with adversarial constraints, in this work we \emph{relax the strong assumption} that the algorithm has to know the value of the parameter $\rho$ related to Slater's condition.
This assumption is ubiquitous in the adversarially-constrained online optimization literature (see, \emph{e.g.},~\citep{Unifying_Framework}), but it is \emph{extremely} unreasonable in practice.
Indeed, in real-world scenarios, the learner has usually no clue about the ``margin'' by which a strictly feasible solution satisfies the constraints.
Relaxing such an assumption is a non-trivial task from a technical perspective.
This is done by proving that our primal-dual algorithm guarantees that dual variables are automatically bounded, by showing that both the primal and the dual regret minimizers attain a strong no-regret property, called \emph{no-interval regret}.
This is crucial since the classical (weaker) no-regret property is \emph{not} enough to ensure that dual variables are automatically bounded.
%

A summary of our contributions compared to those of prior works is reported in Table~\ref{table-results}.

\begin{table*}[!htp]
	\caption{Comparison of our work and the state-of-the-art. We group together previous works that provide similar guarantees. For each group, we only cite the most recent paper. The third column concerns the possibility of learning without the knowledge of the parameter $\rho$, while the fourth one specifies if the algorithm is capable of learning when the parameter $\rho$ is arbitrarily small. $^\dagger$ These works do \emph{not} apply to general adversarial settings, but only to settings with bounded non-stationarity.}
	\label{table-results}
	\centering
	{\renewcommand{\arraystretch}{1.2}
		\setlength{\tabcolsep}{4pt}
		\begin{tabular}{c|c|c|c|c|c|}
			\toprule
			& adversarial rewards &  adversarial constraints & unknown $\rho$ & without Slater & MDPs \\
			\hline
			\citep{Exploration_Exploitation}
			& \xmark
			& \xmark
			& \cmark
			& \cmark
			& \cmark\\
			\hline
			\citep{Upper_Confidence_Primal_Dual}
			& \cmark
			&  \xmark  
			& \cmark
			&  \xmark
			& \cmark \\
			\hline
			\citep{Unifying_Framework}
			& \cmark 
			& \cmark
			& \xmark
			& \cmark
			& \xmark \\
			\hline
			\citep{Non_stationary_CMDPs}
			& \xmark$^\dagger$ 
			& \xmark$^\dagger$ 
			& \cmark
			& \xmark
			& \cmark \\
			\hline
			\cellcolor{gray!15}Our Work
			&  \cellcolor{gray!15}\cmark
			&  \cellcolor{gray!15}\cmark
			&  \cellcolor{gray!15}\cmark
			&  \cellcolor{gray!15}\cmark
			&  \cellcolor{gray!15}\cmark \\
			\bottomrule
	\end{tabular}}
\end{table*}

\section{Preliminaries}
\label{Preliminaries}

In this section, we provide notation and definitions needed in the rest of the paper.

\subsection{Constrained Markov Decision Processes}

We study \emph{episodic constrained} MDPs~\citep{Altman1999ConstrainedMD}, which we call CMDPs for short and are defined as tuples $M = \left(X, A, P, \left\{r_{t}\right\}_{t=1}^{T}, \left\{G_{t}\right\}_{t=1}^{T}\right)$, where:
\begin{itemize}
	\item $T$ is a number of episodes of learning, with $t\in[T]$ denoting a specific episode.\footnote{We denote with $[a \intrange b]$ the set of all consecutive integers from $a$ to $b$, while $[b] = [1 \intrange b]$.}
	\item $X$ and $A$ are finite state and action spaces, respectively.
	\item $P : X \times A \times X  \to [0, 1]$ is the transition function, where, for ease of notation, we denote by $P (x^{\prime} |x, a)$ the probability of going from state $x \in X$ to $x^{\prime} \in X$ by taking action $a \in A$.
	\item $\left\{r_{t}\right\}_{t=1}^{T}$ is a sequence of vectors describing the rewards at each episode $t \in [T]$, namely $r_{t}\in[0,1]^{|X\times A|}$. We refer to the reward of a specific state-action pair $x \in X, a \in A$ for an episode $t\in[T]$ as $r_t(x,a)$. Rewards may be \textit{stochastic}, in that case $r_t$ is a random variable distributed according to a distribution $\mathcal{R}$ for every $t\in[T]$, or chosen by an \emph{adversary}.
	\item $\left\{G_{t}\right\}_{t=1}^{T}$ is a sequence of constraint matrices describing the $m$ \emph{constraint} violations at each episode $t\in[T]$, namely $G_{t}\in[-1,1]^{|X\times A|\times m}$, where non-positive violation values stand for satisfaction of the constraints. For $i \in [m]$, we refer to the violation of the $i$-th constraint for a specific state-action pair $x \in X, a \in A$ at episode $t\in[T]$ as $g_{t,i}(x,a)$. Constraint violations may be \textit{stochastic}, in that case $G_t$ is a random variable distributed according to a probability distribution $\mathcal{G}$ for every $t\in[T]$, or chosen by an \emph{adversary}.
\end{itemize}
%

W.l.o.g., in this work we consider \emph{loop-free} CMDPs.
Formally, this means that $X$ is partitioned into $L$ layers $X_{0}, \dots, X_{L}$ such that the first and the last layers are singletons, \emph{i.e.}, $X_{0} = \{x_{0}\}$ and $X_{L} = \{x_{L}\}$, and that $P(x^{\prime} |x, a) > 0$ only if $x^\prime \in X_{k+1}$ and $x \in X_k$ for some $k \in [0 \intrange L-1]$.
Notice that any episodic CMDP with horizon $L$ that is \emph{not} loop-free can be cast into a loop-free one by suitably duplicating the state space $L$ times, \emph{i.e.}, a state $x$ is mapped to a set of new states $(x, k)$, where $k \in [0 \intrange L]$.
%
%

\begin{algorithm}[!htp]
	\caption{Learner-Environment Interaction}
	\label{alg: Learner-Environment Interaction}
	\begin{algorithmic}[1]
		\For{$t=1, \ldots, T$}
		\State $r_t$ and $G_t$ are chosen \textit{stochastically} or \textit{adversarially}
		\State The learner chooses a policy $\pi_{t}: X \times A  \to [0, 1]$
		\State The state is initialized to $x_{0}$
		\For{$k = 0, \ldots,  L-1$}
		\State The learner plays $a_{k} \sim \pi_t(\cdot|x_{k})$
		\State The environment evolves to $x_{k+1}\sim P(\cdot|x_{k},a_{k})$
		\State The learner observes $x_{k+1}$
		\EndFor
		\State The learner is revealed $r_t$, $G_{t}$
		\EndFor
	\end{algorithmic}
\end{algorithm}

The learner chooses a \emph{policy} $\pi: X \times A \to [0,1]$ at each episode, defining a probability distribution over actions at each state.
For ease of notation, we denote by $\pi(\cdot|x)$ the probability distribution for a state $x \in X$, with $\pi(a|x)$ denoting the probability of action $a \in A$.
Algorithm~\ref{alg: Learner-Environment Interaction} depicts the interaction between the learner and the environment in a CMDP.
Furthermore, we assume that the learner knows $X$ and $A$, but they do \emph{not} know anything about $P$.
%

\subsection{Occupancy Measures}

%
Next, we introduce the notion of \emph{occupancy measure}~\citep{OnlineStochasticShortest}.
Given a transition function $P$ and a policy $\pi$, the occupancy measure $q^{P,\pi} \in [0, 1]^{|X\times A\times X|}$ induced by $P$ and $\pi$ is such that, for every $x \in X_k$, $a \in A$, and $x' \in X_{k+1}$ with $k \in [0 \intrange L-1]$:
\begin{align*}
	q^{P,\pi}(x,a,& x^{\prime})= \text{Pr} \Big\{  x_{k}=x, a_{k}=a,x_{k+1}=x^{\prime}|P,\pi \Big\}. 
\end{align*}
Moreover, we also define:
\begin{align}
	&q^{P,\pi}(x,a) = \sum_{x^\prime\in X_{k+1}}q^{P,\pi}(x,a,x^{\prime}),\label{def:q vector}\\
	&q^{P,\pi}(x) = \sum_{a\in A}q^{P,\pi}(x,a).\nonumber
\end{align}
Then, we can introduce the following lemma, which characterizes \emph{valid} occupancy measures.
\begin{lemma}[\citet{rosenberg19a}]
	For every $  q \in [0, 1]^{|X\times A\times X|}$, it holds that $ q$ is a {valid} occupancy measure of an episodic loop-free MDP if and only if the following three conditions hold:
	\[
	\begin{cases}
		\displaystyle\sum_{x \in X_{k}}\sum_{a\in A}\sum_{x^{\prime} \in X_{k+1}} q(x,a,x^{\prime})=1 \quad\quad \forall k\in[0\intrange L-1]\vspace{0.1cm}
		\\ 
		\displaystyle\sum_{a\in A}\sum_{x^{\prime} \in X_{k+1}}q(x,a,x^{\prime})=
		\sum_{x^{\prime}\in X_{k-1}} \sum_{a\in A}q(x^{\prime},a,x) \\ \hspace{3.9cm} \forall k\in[1\intrange L-1], \forall x \in X_{k}  \vspace{0.1cm}
		\\
		P^{q} = P,
	\end{cases}
	\]
	where $P$ is the transition function of the MDP and $P^q$ is the one induced by $q$ (see Equation~\eqref{eq:induced_trans}).
\end{lemma}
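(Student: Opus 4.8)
The plan is to prove the two implications separately, since the statement is an ``if and only if''. Throughout I would use the notation of Equations~\eqref{def:occupancy_measure}--\eqref{def:q vector} and the induced transition $P^q$ of Equation~\eqref{eq:induced_trans}.

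For the forward direction (a valid occupancy measure satisfies the three conditions), I would assume $q = q^{P,\pi}$ for some policy $\pi$ and read each condition off the definition. The first condition (normalization within a layer) follows from the loop-free structure: any trajectory visits exactly one state of $X_k$ and takes exactly one action there before moving into $X_{k+1}$, so summing the joint probabilities $\text{Pr}[x_k = x, a_k = a, x_{k+1} = x']$ over all $x \in X_k$, $a \in A$, $x' \in X_{k+1}$ yields $1$. The second condition (flow conservation) is a marginalization identity: both sides equal $\text{Pr}[x_k = x \mid P, \pi]$, since the left-hand side marginalizes over the action taken at $x$ and its successor in $X_{k+1}$, while the right-hand side marginalizes over the predecessor in $X_{k-1}$ and the action taken there, and both describe the event of occupying state $x$ at layer $k$. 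The third condition is immediate from the definition of $P^q$: since $q$ is generated by running $P$, the conditional $q(x,a,x')/q(x,a)$ recovers exactly $P(x' \mid x, a)$ wherever $q(x,a) > 0$.

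For the converse (the three conditions imply validity), the idea is to exhibit a policy inducing $q$. I would define $\pi(a \mid x) := q(x,a)/q(x)$ whenever $q(x) > 0$, and set $\pi(\cdot \mid x)$ arbitrarily otherwise, and then show $q^{P,\pi} = q$. The natural route is induction on the layer index $k$ to prove that the state marginals agree, i.e.\ $q^{P,\pi}(x) = q(x)$ for every $x \in X_k$. The base case $k=0$ uses $X_0 = \{x_0\}$ together with the normalization condition to force $q(x_0) = 1 = q^{P,\pi}(x_0)$. For the inductive step I would expand $q^{P,\pi}(x) = \sum_{x' \in X_{k-1}} \sum_{a \in A} q^{P,\pi}(x')\, \pi(a \mid x')\, P(x \mid x', a)$, use the definition of $\pi$ together with $P^q = P$ to rewrite $\pi(a \mid x')\, P(x \mid x', a) = q(x',a,x)/q(x')$, apply the inductive hypothesis $q^{P,\pi}(x') = q(x')$ to cancel the denominator, and finally invoke flow conservation to collapse $\sum_{x' \in X_{k-1}} \sum_a q(x',a,x)$ to $q(x)$. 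Once the marginals coincide, the full identity follows by writing $q^{P,\pi}(x,a,x') = q^{P,\pi}(x)\, \pi(a \mid x)\, P(x' \mid x, a)$ and substituting the definitions.

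I expect the main obstacle to be the bookkeeping in this inductive step, namely making flow conservation and $P^q = P$ interlock cleanly, together with the degenerate case $q(x) = 0$, where $\pi$ is undefined. There I must argue separately that such states carry zero occupancy under $\pi$ as well: the factor $q^{P,\pi}(x') = q(x') = 0$ kills the corresponding term in the marginal expansion regardless of the arbitrary choice of $\pi(\cdot \mid x')$, and $q(x',a,x) \le q(x') = 0$ forces the matching term on the other side to vanish too. Handling this edge case without disturbing the main algebra, rather than the manipulation itself, is where the proof requires the most care.
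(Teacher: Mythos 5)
The paper does not prove this lemma; it imports it verbatim from \citet{rosenberg19a}, so there is no in-paper argument to compare against. Your two-directional proof --- reading the three conditions off the trajectory probabilities for the forward implication, and for the converse constructing $\pi(a\mid x)=q(x,a)/q(x)$ and inducting over layers using flow conservation together with $P^q=P$, with the $q(x)=0$ states handled by noting both sides vanish --- is the standard argument for this characterization and is correct and complete.
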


Notice that any valid occupancy measure $q$ induces a transition function $P^{q}$ and a policy $\pi^{q}$ as:
\begin{equation}P^{q}(x^{\prime}|x,a)= \frac{q(x,a,x^{\prime})}{q(x,a)}  , \quad \pi^{q}(a|x)=\frac{q(x,a)}{q(x)}.\label{eq:induced_trans}
\end{equation}

\subsection{Offline CMDPs Optimization}
We define the parametric linear program LP$_{r, G}$ with parameters $r$ and $G$ as follows:
\begin{equation}
	\label{lp:offline_opt}\text{OPT}_{ r, G}:=\begin{cases}
		\max_{  q \in \Delta(M)} &   r^{\top}  q\\
		\,\,\, \textnormal{s.t.} & G^{\top}  q \leq  \underline{0},
	\end{cases}
\end{equation}
where $ q\in[0,1]^{|X\times A|}$ is the occupancy measure vector whose values are defined by the expression in Equation~\eqref{def:q vector}, $\Delta\left(M\right)$ is the set of valid occupancy measures, $r$ is the reward vector, and $G$ is the constraint matrix. Furthermore, we introduce the following condition.
\begin{condition}[Slater's condition]
	Given a constraint matrix $G$, the Slater's condition holds when there is a strictly feasible solution $  q^{\diamond}$ such that~$G^{\top}  q^{\diamond} <  \underline{0}$.
\end{condition}
Then, we define the Lagrangian function for Problem~\eqref{lp:offline_opt}.
\begin{definition}
	[Lagrangian function] Given a reward vector $  r$ and a constraint matrix $G$, the Lagrangian function $\mathcal{L}_{r, G} : \Delta(M) \times \mathbb{R}^{m}_{\geq 0}   \rightarrow \mathbb{R}$ of Problem~\eqref{lp:offline_opt} is defined as:
	\begin{equation*}
		\mathcal{L}_{  r, G}(  q,   \lambda):=  r^{\top}   q -  \lambda^{\top} (G^{\top}  {q}).
	\end{equation*}
\end{definition}
It is well known (see, \emph{e.g.}, \citep{Altman1999ConstrainedMD}) that strong duality holds for CMDPs assuming Slater's condition. Therefore, we have that the following corollary holds.
\begin{corollary}
	\label{cor:Lagrangian Game} Given a reward vector $r$ and a constraint matrix $G$ such that Slater's condition holds, we have:
	\begin{align*}
		\textnormal{OPT}_{ r, G} &=\min_{  \lambda\in\mathbb{R}^{m}_{\geq0}} \max_{  q\in\Delta(M)}\mathcal{L}_{  r, G}(  q,   \lambda) \\ & = \max_{  q\in\Delta(M)} \min_{  \lambda\in\mathbb{R}^{m}_{\geq0}} \mathcal{L}_{  r, G}(  q,   \lambda).
	\end{align*}
\end{corollary}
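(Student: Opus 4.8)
The plan is to prove the two claimed identities separately, relying on standard Lagrangian duality together with the strong duality result of \citet{Altman1999ConstrainedMD}. First I would establish the easy identity $\max_{q \in \Delta(M)} \min_{\lambda \in \mathbb{R}^{m}_{\geq 0}} \mathcal{L}_{r,G}(q, \lambda) = \text{OPT}_{r,G}$ by analyzing the inner minimization. For a fixed $q$, the inner problem is $\min_{\lambda \geq 0} [r^\top q - \lambda^\top (G^\top q)]$. If $q$ is feasible, i.e. $G^\top q \leq \underline{0}$, then $\lambda^\top (G^\top q) \leq 0$ for all $\lambda \geq 0$, so the minimum has value $r^\top q$ and is attained at $\lambda = \underline{0}$; if $q$ violates some constraint, the corresponding multiplier can be sent to $+\infty$, driving the Lagrangian to $-\infty$. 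Hence the inner minimum equals $r^\top q$ on the feasible region and $-\infty$ off it, and the outer maximization reproduces exactly the constrained program $\text{OPT}_{r,G}$.

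Second, weak duality $\max_q \min_\lambda \mathcal{L}_{r,G} \leq \min_\lambda \max_q \mathcal{L}_{r,G}$ holds for free, since $\inf\sup \geq \sup\inf$ for any function on any product domain. It therefore remains to prove the reverse inequality, which is the substantive (strong duality) part. Here I would use that, by the characterization of valid occupancy measures due to \citet{rosenberg19a}, the set $\Delta(M)$ is a convex, compact polytope cut out by finitely many linear (in)equalities, while $\mathcal{L}_{r,G}(q, \lambda)$ is affine---hence simultaneously concave and convex---in each argument. The problem is thus a finite-dimensional linear program whose partial Lagrangian dual (dualizing only $G^\top q \leq \underline{0}$ and keeping $q \in \Delta(M)$ hard) is precisely $\min_{\lambda \geq 0} \max_{q} \mathcal{L}_{r,G}$, so that LP strong duality---equivalently the CMDP strong duality of \citet{Altman1999ConstrainedMD} under Slater's condition---gives $\min_\lambda \max_q \mathcal{L}_{r,G} = \text{OPT}_{r,G}$, closing the chain.

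The main obstacle is that a minimax theorem does not apply off the shelf, since the multiplier domain $\mathbb{R}^{m}_{\geq 0}$ is not compact and Sion's theorem cannot be invoked directly. I would resolve this through precisely the use Slater's condition is designed for: the dual function $d(\lambda) := \max_{q \in \Delta(M)} \mathcal{L}_{r,G}(q, \lambda)$ is coercive, because evaluating at the strictly feasible $q^\diamond$ gives $d(\lambda) \geq r^\top q^\diamond - \lambda^\top (G^\top q^\diamond) \geq r^\top q^\diamond + c\,\|\lambda\|_1$, where $c = \min_i (-[G^\top q^\diamond]_i) > 0$ by $G^\top q^\diamond < \underline{0}$. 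Comparing with $d(\underline{0}) = \max_q r^\top q$ forces any minimizer of $d$ into a compact box $\{\lambda \geq 0 : \|\lambda\|_1 \leq B\}$, and restricting the multiplier domain to this box leaves $\min_\lambda \max_q \mathcal{L}_{r,G}$ unchanged. On this box both domains are convex and compact and $\mathcal{L}_{r,G}$ is bilinear, so Sion's minimax theorem exchanges the two operators and finishes the argument. A minor check along the way is that the extrema are genuinely attained, which is immediate for $q$ over the compact polytope and for $\lambda$ after the restriction to the box.
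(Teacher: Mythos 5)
Your proposal is correct in substance, but note that the paper offers essentially no proof of this corollary: it is stated as an immediate consequence of the strong duality for CMDPs under Slater's condition established by \citet{Altman1999ConstrainedMD}, so the paper's entire argument is that citation. Your reconstruction from first principles is sound where it matters. The computation of the inner minimization (yielding $r^\top q$ on the feasible set and $-\infty$ off it, hence $\max_q \min_\lambda \mathcal{L}_{r,G} = \text{OPT}_{r,G}$), the appeal to weak duality, and the observation that the problem is a finite-dimensional LP over the polytope $\Delta(M)$ whose partial Lagrangian dual has no gap are all standard and correct; the last of these is precisely the content of the cited result, so your primary route coincides with the paper's. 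The coercivity bound $d(\lambda) \geq r^\top q^\diamond + c\,\|\lambda\|_1$ with $c = \min_i(-[G^\top q^\diamond]_i) > 0$ is a genuinely useful addition in this paper's context, since the same mechanism underlies the multiplier bound of Theorem~\ref{thm:lambdaBoundBoth}.

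One loose end in the Sion-based branch: after restricting $\lambda$ to the compact box $\mathcal{B}$ and exchanging the operators, you obtain $\min_{\lambda \geq 0}\max_q \mathcal{L}_{r,G} = \max_q \min_{\lambda \in \mathcal{B}} \mathcal{L}_{r,G}$, and this last quantity is a priori only an \emph{upper} bound relaxation of the max--min, i.e.\ it satisfies $\max_q \min_{\lambda \in \mathcal{B}} \mathcal{L}_{r,G} \geq \text{OPT}_{r,G}$, because the truncated inner minimum no longer diverges to $-\infty$ on infeasible $q$. So Sion alone does not ``finish the argument''; one more step is needed to show the reverse inequality. For instance: by coercivity any minimizer $\lambda^\dagger$ of the dual function lies strictly inside the $\ell_1$-ball, and since $\lambda^\dagger$ minimizes the linear map $\lambda \mapsto -\lambda^\top (G^\top q^\dagger)$ over $\mathcal{B}$ at an interior point of that ball, the saddle-point occupancy measure $q^\dagger$ must satisfy $G^\top q^\dagger \leq \underline{0}$ (otherwise the minimizer would be pushed to the boundary), whence the saddle value equals $r^\top q^\dagger \leq \text{OPT}_{r,G}$. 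With that addendum --- or by simply resting on the LP/CMDP strong-duality citation as the paper does --- your proof is complete.
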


Notice that the min-max problem in Corollary~\ref{cor:Lagrangian Game} corresponds to the optimization problem associated with a zero-sum Lagrangian game.

\subsection{Cumulative Regret and Constraint Violation}

We introduce the notion of cumulative regret and cumulative constraint violation up to episode $T$. 

The \emph{cumulative regret} is defined as:
$$
R_{T}:= T \; \text{OPT}_{ \overline{r}, \overline{G}} - \sum_{t=1}^{T}   r_{t}^{\top}   q^{P, \pi_{t}},
$$
where:
\begin{align*}
	\overline{r}&:= 
	\begin{cases}
		\mathbb{E}_{r \sim \mathcal{R}}[r] & \textnormal{\hspace{0.12cm}if the rewards are stochastic} \\
		\frac{1}{T}\sum_{t=1}^{T}  r_{t}  & \textnormal{\hspace{0.12cm}if  the rewards are adversarial,}
	\end{cases}
	\\
	\overline{G}&:= 
	\begin{cases}
		\mathbb{E}_{G \sim \mathcal{G}}[G] & \textnormal{if the constraints are stochastic} \\
		\frac{1}{T}\sum_{t=1}^{T}G_{t}  & \textnormal{if the constraints are adversarial.}
	\end{cases}
\end{align*}

Notice that, in the adversarial case, the regret is computed with respect to an \emph{optimal feasible strategy in hindsight}.
We refer to an optimal occupancy measure (\emph{i.e.}, a feasible one achieving value $\text{OPT}_{\overline{r}, \overline{G}}$) as $q^*$.
Thus, we can write $\text{OPT}_{\overline{r}, \overline{G}}=\overline{r}^\top q^*$ and the regret reduces to $$R_T:=\sum_{t=1}^T \overline{r}^\top q^*-\sum_{t=1}^Tr_t^\top q^{P,\pi_t}.$$
%

The \emph{cumulative constraint violation} is defined as:
$$
V_{T}:= \max_{i\in[m]}\sum_{t=1}^{T}\left[ G_{t}^{\top} q^{P,\pi_{t}}\right]_i.
$$
For the sake of notation, we will refer to $  q^{P,\pi_{t}}$ by using $q_t$, thus omitting the dependence on $P$ and $\pi$.

\subsection{Feasibility Parameter}
\label{sub:rho}

We introduce a problem-specific parameter $\rho\in\left[0,L\right]$, which is strictly related to the feasibility of Problem~\eqref{lp:offline_opt}, and in particular to ``how much'' Slater's condition is satisfied.
Formally, in settings with \textit{stochastic constraints} chosen from a fixed distribution, the parameter $\rho$ is defined as $$\rho:=\max_{q\in\Delta\left(M\right)}\min_{i\in[m]}-\left[\overline{G}^{\top}q\right]_i.$$
Instead, in settings with \textit{adversarial constraints}, the parameter $\rho$ is defined as $$\rho:=\max_{q\in\Delta\left(M\right)}\min_{t\in[T]}\min_{i\in[m]}-\left[G_{t}^{\top}q\right]_i.$$
In both cases, the occupancy measure leading to the value of $\rho$ is denoted by $q^{\circ}$. 
Intuitively, $\rho$ represents the ``margin'' by which the ``most feasible'' strictly feasible solution satisfies the constraints.  
Finally, we state the following condition on the value of $\rho$, which plays a central role when proving algorithm guarantees in the following sections.
\begin{condition}
	\label{ass:rhoassumption} It holds that
	$\rho \geq T^{-\frac{1}{8}}L\sqrt{20m}$.
\end{condition}
While Condition~\ref{ass:rhoassumption} may seem unusual in the CMDPs literature, we remark that state-of-the-art primal-dual methods assume that the parameter $\rho$ is a constant, and, thus, they simply hide the dependence on $1/\rho$ in the regret bound or in the $\mathcal{O}$-notation (see, \emph{e.g.},~\citep{Exploration_Exploitation,Upper_Confidence_Primal_Dual}).
Thus, if the parameter $\rho$ is arbitrarily small, their regret bounds may be superlinear in $T$.
As we show next, our primal-dual method for CMDPs is the first to work even in degenerate scenarios, namely, when $\rho$ is arbitrarily small.
%
%

\section{Constrained MDP Optimization Algorithm}
\label{sec:Algorithm}

In this section, we present our algorithm named \emph{primal-dual gradient descent online policy search} (PDGD-OPS). Its rationale is to instantiate two no-regret algorithms, referred to as \emph{primal} and \emph{dual player}, respectively. Precisely, the primal player optimizes on the primal variable space of the Lagrangian function, namely on the set $\Delta(M)$, while the dual player does it on the dual variable space $\mathbb{R}^{m}_{\geq 0}$, which, in our algorithm, is properly shrunk to $\left[0,T^{1/4}\right]^{m}$. 
As concerns the objective functions, the primal player aims at maximizing the Lagrangian function, while the dual one at minimizing it, as described in the Lagrangian zero-sum game defined in Corollary~\ref{cor:Lagrangian Game}.
Notice that, while the space of the dual variables is known \textit{apriori}, the occupancy measure space needs be estimated online as the transition probabilities are unknown. Thus, it is necessary to employ a no-regret algorithm working with adversarial MDPs for the primal player. Moreover, in order to provide guarantees on the dynamics of the Lagrange multipliers---necessary to bound the cumulative regret and cumulative constraint violation---we require that the primal player satisfies the weak no-interval regret property (see the following Definition~\ref{def:weak no interval} for a formalization of such a property). 

\subsection{PDGD-OPS Algorithm}

Algorithm~\ref{alg:PDGD-OPS} provides the pseudo-code of the PDGD-OPS algorithm. 
%
As mentioned before, 
the algorithm employs two regret minimizers, named \texttt{UC-O-GDPS} and \texttt{OGD}, working on the space of the primal and dual variables, respectively. The occupancy measure is initialized uniformly  (Line~\ref{PDGD:qinit}) by the primal player. We refer to Section~\ref{sec:AdvMDP} for the description of the \texttt{UC-O-GDPS} initialization. The dual player is initialized by the \texttt{OGD.INIT} procedure which takes as input the decision space $\mathcal{D}$ and a learning rate $\eta$, and it returns the  vector $\lambda_{1}=\underline{0}$ associated with the dual variable  (Line~\ref{PDGD:lambdainit}).

During the learning process, at each episode $t\in[T]$, the PDGD-OPS algorithm plays the policy $\pi^{\widehat{q}_{t}}$ induced by the occupancy measure $\widehat{q}_{t}$  computed in the previous episode (Line~\ref{PDGD:traj}). The feedback received by the learner once the episode is concluded concerns the trajectory $(x_{k} , a_{k})_{k=0}^{L-1}$ traversed in the CMDP, the reward vector, and the constraint matrix for that specific episode.

Given the observed feedback, the algorithm builds the Lagrangian objective function (Line~\ref{PDGD:lossbuild}), namely $\ell_t = G_t\lambda_t - r_t$, which is fed in the form of a loss into the primal player along with the trajectory and the adaptive learning rate (Line~\ref{PDGD:learningrate}). The trajectory is needed to estimate the transition probabilities, while the rationale of the adaptive learning rate is to remove the quadratic dependence from $\|\lambda\|_1$ in the regret bound of the primal player. See Section~\ref{sec:AdvMDP} for the description of \texttt{UC-O-GDPS.UPDATE}   (Line~\ref{PDGD:qupdate}).

To conclude, we notice that the dual player receives only the loss $-G_t^\top \widehat{q}_t$, since the $r_t^{\top}\widehat{q}_t$ factor has no dependence on the optimization variable $\lambda_t$, and, thus, it does not affect the optimization process. For the sake of completeness, we report the \texttt{OGD} update of the dual player, namely \texttt{OGD.UPDATE} (Line~\ref{PDGD:lambdaupdate}), defined as follows:
\begin{equation}
	\label{update:dual}
	\lambda_{t+1}:=\Pi_{\mathcal{D}}\left(\lambda_t + \eta G_{t}^{\top}\widehat{q}_{t}\right),
\end{equation}
where $\Pi_{\mathcal{D}}$ is the Euclidean projection on the decision space 
$\mathcal{D}$, $\eta=\left[K \sqrt{T\ln\left(\frac{T^2}{\delta}\right)}\right]^{-1}$ and $K$ is an instance-dependent quantity that does not depend on $T$ and $\delta$. From here on, we refer to the regret suffered by \texttt{OGD} with respect to a general Lagrange multiplier $\lambda$ as $R^\textsf{D}_T(\lambda)$, where $\textsf{D}$ stands for \emph{dual}.
Notice that, thanks to the  properties of \texttt{OGD} \cite{Orabona}, by using the aforementioned learning rate $\eta$, we obtain $R^\textsf{D}_T(\lambda)\leq \tilde{\mathcal{O}}\left((1+||\lambda||_{2}^2)\sqrt{T}\right)$.
\begin{algorithm}[]
	\caption{PDGD-OPS}
	\label{alg:PDGD-OPS}
	\begin{algorithmic}[1]
		\Require $T$, $X$, $A$, $\delta$
		
		\State$\widehat{q}_1 \leftarrow \texttt{UC-O-GDPS.INIT}\left(X,A,\delta\right)$ \label{PDGD:qinit} 
		
		\State $\lambda_1 \leftarrow \texttt{OGD.INIT}\left(\left[0, T^{1/4}\right]^{m},\eta\right)$ \label{PDGD:lambdainit}
		
		\For{$t=1$ to $T$}
		
		\State Play $\pi^{\widehat{q}_t}$ and observe trajectory $(x_{k} , a_{k})_{k=0}^{L-1}$, reward \\ $\quad \ $ vector $r_t$, and constraint matrix $G_t$ \label{PDGD:traj}
		
		\State $\ell_t \leftarrow G_t\lambda_t - r_t$\label{PDGD:lossbuild}
		
		\State $\eta_t \gets \frac{1}{\overline{\ell}_t C \sqrt{T}}$ with $\overline{\ell}_t=\max\{||\ell_\tau||_\infty\}_{\tau=1}^{t}$\label{PDGD:learningrate}
		
		\State $\widehat{q}_{t+1} \leftarrow \texttt{UC-O-GDPS.UPDATE}\left(\ell_{t}, \eta_t, (x_{k},a_{k})_{k=0}^{L-1}\right)$ \label{PDGD:qupdate}
		
		\State $\lambda_{t+1} \leftarrow \texttt{OGD.UPDATE}\left(-G_{t}^\top\widehat{q}_t\right)$ \label{PDGD:lambdaupdate}
		\EndFor
	\end{algorithmic}
\end{algorithm}
\section{Adversarial MDP Optimization Algorithm}
\label{sec:AdvMDP}
We focus on the algorithm employed by the primal player. As previously discussed, this algorithm resorts to online learning techniques, since the decision space of the primal player is not known beforehand. In particular, the algorithm is a regret minimizer for adversarial MDPs, as Algorithm~\ref{alg:PDGD-OPS} deals with both stochastic and adversarial settings.
%

\subsection{UC-O-GDPS Algorithm}
\emph{Upper confidence online gradient descent policy search} (UC-O-GDPS) follows the rationale of the UC-O-REPS algorithm by \citet{rosenberg19a}, from which we highlight two major differences. The first difference concerns the update step. In particular, while in UC-O-REPS the update is performed by online mirror descent when the unnormalized KL is used as Bregman divergence, in UC-O-GDPS such a step is performed by online gradient descent. The use of online gradient descent allows the UC-O-GDPS algorithm to satisfy the weak no-interval regret property (see Definition~\ref{def:weak no interval}) which plays a central role in our regret analysis. We also notice that, to the best of our knowledge, the weak no-interval regret property has never been studied in  episodic adversarial MDPs, and thus our result may be of independent interest.  
The second difference concerns the design of an adaptive learning rate which depends on the losses previously observed. The satisfaction of weak no-interval regret property and the adoption of our adaptive learning rate allow us to attain a regret bound of $\tilde{\mathcal{O}}(\sqrt{T})$ for PDGD-OPS in place of $\BigOL{T^{3/4}}$.

\paragraph{Transitions Confidence Set}
Initially, we discuss how UC-O-GDPS updates the confidence set, denoted with $\mathcal{P}$, on the transition probabilities $P$.
%
%
In particular, the update of the confidence set requires a non-negligible computational effort, however it is possible to update the confidence set at a subset of episodes to make the UC-O-GDPS algorithm more efficient without worsening the regret bounds. More precisely, the episodes are divided dynamically in epochs depending of the observed feedback, and the update of the confidence bound is only performed at the first episode of every epoch.
UC-O-GDPS adopts counters of visits for each state-action pair $(x, a)$ and each state-action-state triple $\left(x, a, x^{\prime}\right)$ to estimate the empirical transition function as: 
\begin{equation*}
	\overline{P}_i\left(x^{\prime} \mid x, a\right)=\frac{M_i\left(x^{\prime} \mid x, a\right)}{\max \left\{1, N_i(x, a)\right\}},
\end{equation*}
where $N_i(x, a)$ and $M_i\left(x^{\prime} \mid x, a\right)$ are the initial values of the counters, that is, the total number of visits of pair $(x, a)$ and triple $\left(x, a, x^{\prime}\right)$, respectively, observed in the epochs preceding epoch~$i$.  Furthermore, a new epoch starts whenever there is a state-action pair whose counter is doubled compared to its initial value at the beginning of the epoch. The confidence set $\mathcal{P}_i$ is updated at every epoch~$i$  as, for every $\left(x, a \right) \in X\times A$:
\begin{equation}
	\label{eq: Confidence set}
	\mathcal{P}_i=\left\{\widehat{P}:\left\|\widehat{P}\left(\cdot| x, a\right)-\overline{P}_i\left(\cdot | x, a\right)\right\|_1 \leq \epsilon_i\left(x, a\right) \right\},
\end{equation}
where $\epsilon_i\left(x, a\right)$ is defined as:
$$
\epsilon_i\left(x, a\right) = \sqrt{\frac{2 |X_{k(x)+1}|\ln \left(\frac{T|X||A|}{\delta}\right)}{\max \left\{1, N_i(x, a)\right\}}},
$$
and $k(x)$ denotes the index of the layer to which $x$ belongs and $\delta \in(0,1)$ is the given confidence.
The next result, which follows from~\citep{rosenberg19a}, shows that the cumulative error due to the estimation of the transition probabilities grows sublinearly.
\begin{restatable}{lemma}{qMinusQhatCompact}
	\label{lem:qMinusQhatCompact}
	If the confidence set $\mathcal{P}$ is updated as in Equation~\eqref{eq: Confidence set}, with probability at least $1-2\delta$, it holds that
	$$
	\sum_{t=1}^T||q_t-\widehat{q}_t||_1 \leq \mathcal{E}^q_{\delta},
	$$
	where $\mathcal{E}^q_{\delta}\leq \tilde{\mathcal{O}}(\sqrt{T})$.
\end{restatable}

\paragraph{Initialization} 
Algorithm~\ref{alg:PDGD-OPS} employs the procedure called \texttt{UC-O-GDPS.INIT} (Line~\ref{PDGD:qinit}) to initialize the epoch index as $i=1$ and the confidence set $\mathcal{P}_1$ as the set of all possible transition functions.
For all $k\in[0.. L-1]$ and $\left(x, a, x^{\prime}\right) \in X_k \times A \times$ $X_{k+1}$, the counters are initialized as
$N_0(x, a)=N_1(x, a)=M_0\left(x^{\prime} \mid x, a\right)=M_1\left(x^{\prime} \mid x, a\right)=0$. 
Finally, the following occupancy measure
\begin{equation*}
	\widehat{q}_1\left(x, a, x^{\prime}\right)=\frac{1}{\left|X_k \| A\right|\left|X_{k+1}\right|}
\end{equation*}
is returned by the initialization procedure, for every $k\in[0 .. L-1]$ and $\left(x, a, x^{\prime}\right) \in X_k \times A \times$ $X_{k+1}$.

\paragraph{Update} The pseudo-code of the \texttt{UC-O-GDPS.UPDATE} procedure (used in Line~\ref{PDGD:qupdate} of Algorithm~\ref{alg:PDGD-OPS}) is provided in Algorithm~\ref{alg:UC-O-GDPS}. Initially, it updates the estimate of the confidence set $\mathcal{P}$ (Lines~\ref{UCO:begconfidence}--\ref{UCO:endconfidence}) as described above, and, subsequently, it performs an update step according to  projected online gradient descent  (Line~\ref{UCO:update}).

\begin{algorithm}[!htp]
	\caption{\texttt{UC-O-GDPS.UPDATE}}
	\label{alg:UC-O-GDPS}
	\begin{algorithmic}[1]
		\Require $\ell_{t}$, $\eta_t$, $(x_{k},a_{k})_{k=0}^{L-1}$
		\For{$k\in[0.. L-1]$} \label{UCO:begconfidence}
		\State Update counters:
		$$
		\begin{aligned}
			N_i\left(x_k, a_k\right) & \leftarrow N_i\left(x_k, a_k\right)+1, \\
			M_i\left(x_{k+1} \mid x_k, a_k\right) & \leftarrow M_i\left(x_{k+1} \mid x_k, a_k\right)+1 
		\end{aligned}
		$$
		\EndFor
		\If {$\exists k: N_i\left(x_k, a_k\right) \geq \max \left\{1,2 N_{i-1}\left(x_k, a_k\right)\right\}$ }
		\State Increase epoch index $i \leftarrow i+1$
		\State Initialize new counters: for all $\left(x, a, x^{\prime}\right)$,
		$$
		N_i(x, a) \gets N_{i-1}(x, a)
		$$
		$$
		M_i\left(x^{\prime} \mid x, a\right) \gets M_{i-1}\left(x^{\prime} \mid x, a\right) 
		$$
		\State Update confidence set $\mathcal{P}_i$ as in Equation~\eqref{eq: Confidence set} \label{UCO:endconfidence}
		\EndIf
		\State Update occupancy measure:\label{UCO:update}
		$$
		\widehat{q}_{t+1} \gets \Pi_{\Delta\left(\mathcal{P}_i\right)}\left( \widehat{q}_{t}-\eta_t \ell_t\right)
		$$
	\end{algorithmic}
\end{algorithm}

\subsection{Interval Regret}
\label{subsec:Interval Regret}

Initially, we provide the definition of interval regret for adversarial online MDPs.
\begin{definition} [Interval regret] 
	\label{def: Interval Regret}
	Given an interval of consecutive episodes $[t_1.. t_2]\subseteq[1.. T]$, the interval regret with respect to a general occupancy measure $q$ is defined as: $$R_{t_1, t_2} (q):=\sum_{t=t_1}^{t_2}\ell_t^\top(q_t-q).$$
\end{definition}

Now, we define the notion of weak no-interval regret. This notion plays a crucial role when proving the properties of Algorithm~\ref{alg:PDGD-OPS}, and it is defined as follows.
\begin{definition}[Weak no-interval regret] \label{def:weak no interval} An online MDP optimizer satisfies the weak no-interval regret property if: 
	\begin{equation*}R_{t_1,t_2}(q)\leq \Tilde{\mathcal{O}}\left(\sqrt{T}\right) \quad\quad \forall[t_1.. t_2]\subseteq[1.. T].\end{equation*}    
\end{definition}

For ease of presentation, in the following we use the superscript $\textsf{P}$ in the regret to distinguish the regret associated with the primal regret minimizer ($R^\textsf{P}$) from the regret associated with the dual regret minimizer ($R^\textsf{D}$), while we use $R_T^{\textsf{P}}(q)$ in place of $R_{1,T}^{\textsf{P}}(q)$.

Next, we state the main result of this section. 
\begin{restatable}{theorem}{AdvMDPIntervalRegret}
	\label{thm:AdvMDPIntervalRegret}
	With probability at least $1-2\delta$, by setting $\eta_t=\left(\overline{\ell}_t C\sqrt{T}\right)^{-1}$, the UC-O-GDPS algorithm satisfies the following for any $q \in \bigcap_i \Delta\left(\mathcal{P}_i\right)$:
	\begin{align*}
		R_{t_1, t_2}^{\textnormal{\textsf{P}}} (q) \leq & \overline{ \ell}_{t_1,t_2} \mathcal{E}^{q}_\delta +\overline{\ell}_{t_2}LC\sqrt{T}\\
		&+\overline{\ell}_{t_1,t_2}\frac{|X||A|}{2}\frac{(t_2-t_1+1)}{C\sqrt{T}},
	\end{align*}
	where $\overline{\ell}_{t_1,t_2}:=\max\{||\ell_t||_\infty\}_{t=t_1}^{t_2}$, $\overline{\ell}_{t}:=\overline{\ell}_{1,t}$, $\delta\in[0,1]$.
\end{restatable}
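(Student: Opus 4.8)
The plan is to split the interval regret into a \emph{statistical} part, arising from the gap between the true occupancy measures $q_t$ and the estimated iterates $\widehat q_t$ on which the gradient steps actually act, and an \emph{optimization} part, which is the genuine projected-OGD regret of $\widehat q_t$ against the fixed comparator $q$. Concretely, for each $t$ I would write
\[
\ell_t^\top(q_t-q)=\ell_t^\top(q_t-\widehat q_t)+\ell_t^\top(\widehat q_t-q),
\]
and sum over $t\in[t_1.. t_2]$. The first sum is handled by H\"older's inequality, $\ell_t^\top(q_t-\widehat q_t)\le\|\ell_t\|_\infty\|q_t-\widehat q_t\|_1\le\overline\ell_{t_1,t_2}\|q_t-\widehat q_t\|_1$, so that after summing and invoking Lemma~\ref{lem:qMinusQhatCompact} it is at most $\overline\ell_{t_1,t_2}\mathcal{E}^q_\delta$ on the $1-2\delta$ event. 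This yields the first term of the bound and is exactly where the stated failure probability enters.

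For the second sum I would run the standard potential argument for projected online gradient descent with a time-varying step size. Since the update is $\widehat q_{t+1}=\Pi_{\Delta(\mathcal P_i)}(\widehat q_t-\eta_t\ell_t)$ and the comparator $q\in\cap_i\Delta(\mathcal P_i)$ lies in \emph{every} confidence polytope, the projection is nonexpansive toward $q$ at each step regardless of which epoch $i$ is active; expanding $\|\widehat q_{t+1}-q\|_2^2\le\|\widehat q_t-\eta_t\ell_t-q\|_2^2$ and rearranging gives the per-step inequality
\[
\ell_t^\top(\widehat q_t-q)\le\frac{1}{2\eta_t}\bigl(\|\widehat q_t-q\|_2^2-\|\widehat q_{t+1}-q\|_2^2\bigr)+\frac{\eta_t}{2}\|\ell_t\|_2^2.
\]

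I would then sum over $[t_1.. t_2]$ and treat the two resulting sums separately. For the telescoping distance terms I would use Abel summation together with the monotonicity of the step sizes: because $\overline\ell_t$ is non-decreasing, $\eta_t=(\overline\ell_t C\sqrt T)^{-1}$ is non-increasing, so the weights $\tfrac{1}{2\eta_t}-\tfrac{1}{2\eta_{t-1}}$ are non-negative and the whole sum collapses to at most $\tfrac{1}{2\eta_{t_2}}\max_t\|\widehat q_t-q\|_2^2$. Bounding the squared Euclidean diameter of the occupancy polytope by $\|\widehat q_t-q\|_2^2\le\|\widehat q_t-q\|_1\,\|\widehat q_t-q\|_\infty\le 2L$ (each occupancy measure carries unit mass on each of the $L$ layers) and substituting $\eta_{t_2}^{-1}=\overline\ell_{t_2}C\sqrt T$ reproduces exactly the middle term $\overline\ell_{t_2}LC\sqrt T$. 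For the gradient-norm terms I would bound $\|\ell_t\|_2^2\le|X||A|\,\|\ell_t\|_\infty^2\le|X||A|\,\overline\ell_t\,\overline\ell_{t_1,t_2}$ for $t\in[t_1.. t_2]$, so that $\tfrac{\eta_t}{2}\|\ell_t\|_2^2\le\tfrac{|X||A|\,\overline\ell_{t_1,t_2}}{2C\sqrt T}$; summing the $t_2-t_1+1$ terms yields the last term.

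The most delicate point, and the main obstacle, is that this is an \emph{interval} regret over an arbitrary subwindow $[t_1.. t_2]$: I cannot telescope the distance terms from the global initialization and absorb them for free, so the comparator distance has to be re-bounded uniformly, which is why the diameter bound $2L$ (rather than the distance from $\widehat q_1$) is essential and why the middle term must scale with $\eta_{t_2}^{-1}$. Intertwined with this is the fact that the projection set $\Delta(\mathcal P_i)$ changes across epochs: the potential argument only goes through because the comparator is required to lie in $\cap_i\Delta(\mathcal P_i)$, which guarantees feasibility, and hence projection nonexpansiveness, in every epoch spanned by the window. Checking that the adaptive step size remains monotone across these epoch transitions (so that the Abel weights stay non-negative) and that the H\"older and diameter constants align with the stated $|X||A|$ and $L$ factors is the part requiring care, although each individual step is routine.
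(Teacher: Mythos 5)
Your proposal is correct and follows essentially the same route as the paper: the same split into $\sum_t\ell_t^\top(q_t-\widehat q_t)$ (bounded via H\"older and Lemma~\ref{lem:qMinusQhatCompact}, contributing the $1-2\delta$ event) plus the projected-OGD regret of $\widehat q_t$, handled by the standard potential inequality, Abel summation over the non-increasing step sizes, the $2L$ diameter bound, and the $|X||A|\,\overline\ell_t\,\overline\ell_{t_1,t_2}$ bound on $\|\ell_t\|_2^2$ (Lemmas~\ref{lem:ERROR} and~\ref{lem:REG}). The only cosmetic difference is how the $2L$ diameter bound is derived, and your remarks on the comparator lying in $\cap_i\Delta(\mathcal P_i)$ match the paper's implicit treatment.
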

Furthermore, it follows from Theorem~\ref{thm:AdvMDPIntervalRegret} that, when $t_1 = 1$ and $t_2 = T$, it holds $R_T^{\textsf{P}} \leq \tilde{\mathcal{O}}\left(\overline{\ell}_T\sqrt{T}\right)$.

\section{Theoretical Results}
\label{sec:Theoretical Results}
In this section we provide the theoretical results attained by Algorithm~\ref{alg:PDGD-OPS} in terms of cumulative regret and cumulative constraint violation. We start providing a fundamental result on the Lagrange multiplier dynamics. Then, we distinguish two cases, which require different treatments. In the first, constraints are stochastic (Section~\ref{subsec:stoc}), while in the second case they are adversarial (Section~\ref{subsec:adv}).

The main technical challenge when bounding the cumulative regret and constraint violation concerns bounding the space of the dual variables. We recall that, when employing standard no-regret techniques, an unbounded dual space would lead to an unbounded loss for the primal regret minimizer, resulting in a linear regret. 
Our choice $\mathcal{D}=[0,T^{1/4}]^m$ of the dual decision space allows us to circumvent such an issue and PDGD-OPS to achieve a cumulative regret bound of $R_T \leq \tilde{\mathcal{O}}(T^{3/4})$, while keeping the cumulative violation sublinear. Nevertheless, when $\rho$ is large enough (namely, Condition~\ref{ass:rhoassumption} holds), the $\BigOL{T^{3/4}}$ dependency in the upper bounds is not optimal. In particular, in this case, we can show that the Lagrangian vector never touches the boundaries of $\mathcal{D}$, and this property can be used to show that the regret and violation bounds are  $\tilde{\mathcal{O}}{(\sqrt{T})}$. In the following, we present our result on how the Lagrange multipliers can be bounded, providing a proof sketch and referring to Appendix~\ref{App:Theoretical_results} for the complete proof.
\begin{restatable}{theorem}{lambdaBoundBoth}
	\label{thm:lambdaBoundBoth}
	If Condition~\ref{ass:rhoassumption} holds and PDGD-OPS is used, then, when $\zeta:=\frac{20 mL^2}{\rho^2}$, it holds
	\begin{equation*}||\lambda_t||_1 \leq \zeta \ \ \ \ \ \ \ \ \ \forall t\in[T+1]\end{equation*}
	with probability at least $1-2\delta$ in the stochastic constraint setting and  with probability at least $1-\delta$ in the adversarial 
	constraint setting.
\end{restatable}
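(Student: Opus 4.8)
The plan is to control the dual dynamics through the online-gradient-descent potential $\|\lambda_t\|_2^2$ and to show that the strict feasibility quantified by $\rho$ creates a restoring drift that caps the multipliers. I argue by contradiction: suppose the bound fails and let $t_2+1$ be the first episode at which it is violated; tracing $\lambda$ backwards to the last episode $t_1$ at which its norm is below half the target (which exists since $\lambda_1=\underline{0}$, and using that the projected update~\eqref{update:dual} moves $\lambda$ by at most $\eta\,\|G_t^\top\widehat{q}_t\|$ per step, a $\tilde{\mathcal{O}}(T^{-1/2})$ quantity since $G_t\in[-1,1]^{|X\times A|\times m}$ and $\|\widehat{q}_t\|_1=L$), I obtain an interval $[t_1,t_2]$ on which $\|\lambda_t\|$ stays of order $\zeta$. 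Using non-expansiveness of $\Pi_{\mathcal{D}}$ together with $\Pi_{\mathcal{D}}(\underline{0})=\underline{0}$, every step obeys $\|\lambda_{t+1}\|_2^2-\|\lambda_t\|_2^2\le 2\eta\,\lambda_t^\top G_t^\top\widehat{q}_t+\eta^2\|G_t^\top\widehat{q}_t\|_2^2$, and since $\|G_t^\top\widehat{q}_t\|_2^2\le mL^2$, telescoping over $[t_1,t_2]$ reduces the whole claim to an upper bound on $\sum_{t=t_1}^{t_2}\lambda_t^\top G_t^\top\widehat{q}_t$.

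The crucial cross term is controlled by pitting the primal player against the strictly feasible comparator $q^{\circ}$ from Section~\ref{sub:rho}. Writing $\ell_t=G_t\lambda_t-r_t$ and invoking the weak no-interval regret of UC-O-GDPS (Theorem~\ref{thm:AdvMDPIntervalRegret}), together with Lemma~\ref{lem:qMinusQhatCompact} to pass between $\widehat{q}_t$ and $q_t$, gives $\sum_{t=t_1}^{t_2}\ell_t^\top\widehat{q}_t\le\sum_{t=t_1}^{t_2}\ell_t^\top q^{\circ}+\tilde{\mathcal{O}}(\overline{\ell}_{t_1,t_2}\sqrt{T})$. Expanding both sides and bounding the reward gap by $r_t^\top\widehat{q}_t-r_t^\top q^{\circ}\le L$ turns this into
\[
\sum_{t=t_1}^{t_2}\lambda_t^\top G_t^\top\widehat{q}_t\le\sum_{t=t_1}^{t_2}\lambda_t^\top G_t^\top q^{\circ}+L(t_2-t_1+1)+\tilde{\mathcal{O}}(\overline{\ell}_{t_1,t_2}\sqrt{T}).
\]
The definition of $\rho$ supplies the restoring force: $[G_t^\top q^{\circ}]_i\le-\rho$ for every $i$, so, since $\lambda_t\ge\underline{0}$, $\lambda_t^\top G_t^\top q^{\circ}\le-\rho\|\lambda_t\|_1$, which on the interval is of order $-\rho\zeta$ per episode. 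As $\rho\zeta=20mL^2/\rho\ge 20mL$ dwarfs the reward gap $L$ (recall $\rho\le L$), the length-proportional contribution is strictly negative.

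It remains to ensure the one-time $\tilde{\mathcal{O}}(\overline{\ell}_{t_1,t_2}\sqrt{T})$ overhead cannot reverse this, and this is exactly what Condition~\ref{ass:rhoassumption} buys: it is equivalent to $\zeta\le T^{1/4}$, which simultaneously keeps the multiplier strictly inside the box $\mathcal{D}=[0,T^{1/4}]^m$ (so its upper face is never active and the induction is consistent) and keeps $\overline{\ell}_{t_1,t_2}\le\|\lambda\|_1+1=\tilde{\mathcal{O}}(\zeta)$; after multiplication by $2\eta\asymp T^{-1/2}$ the overhead is only $\tilde{\mathcal{O}}(\zeta)$, whereas the interval is necessarily of length $\Omega(\zeta\sqrt{T})$ (since $\lambda$ moves by at most $\tilde{\mathcal{O}}(T^{-1/2})$ per step), so the accumulated negative drift $-\Omega(\eta\rho\zeta\cdot\zeta\sqrt{T})$ overwhelms it. This contradicts the assumed growth of $\|\lambda\|$ across $[t_1,t_2]$, proving the bound; the calibration of the constant $20$ in $\zeta$ and in Condition~\ref{ass:rhoassumption} is precisely what absorbs the $\ell_1/\ell_2$ norm-conversion factors in this comparison. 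The two regimes differ only in how the margin is obtained: adversarially $[G_t^\top q^{\circ}]_i\le-\rho$ holds for every episode by definition, so the sole randomness is the validity of the transition confidence sets (guaranteeing $q^{\circ}\in\cap_i\Delta(\mathcal{P}_i)$ and hence the primal bound), giving probability $1-\delta$; stochastically only $[\overline{G}^\top q^{\circ}]_i\le-\rho$ holds in expectation, so I would additionally concentrate $\sum_t G_t^\top q^{\circ}$ around its mean by Azuma--Hoeffding, a second failure event that drops the guarantee to $1-2\delta$. The main obstacle is the one-episode lag between $\lambda_t$ and the primal response $\widehat{q}_t$: the primal player only reacts to a large multiplier over a window, which is exactly why the weak no-interval regret property --- and not ordinary regret --- is indispensable, and why the drift must be accumulated over the interval $[t_1,t_2]$ rather than bounded step by step.
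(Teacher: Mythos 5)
Your overall strategy coincides with the paper's: argue by contradiction on an interval $[t_1,t_2]$ over which $\|\lambda_t\|_1$ is large, lower-bound the cumulative Lagrangian via the primal player's interval regret against the strictly feasible $q^\circ$ (so that $\lambda_t^\top G_t^\top q^\circ\le-\rho\|\lambda_t\|_1$ supplies the negative drift), upper-bound it via the dual OGD potential (your telescoped $\|\lambda_t\|_2^2$ inequality is exactly the paper's $R^{\textsf{D}}_{t_1,t_2}(\underline{0})$ bound), and use the fact that $\lambda$ moves by at most $m\eta L$ per step to lower-bound the interval length (Lemmas~\ref{lem:dual1NormIncrementUpperBound} and~\ref{lem:t2minust1lowerboundnewadapt}). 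The probability accounting and the observation that Condition~\ref{ass:rhoassumption} is equivalent to $\zeta\le T^{1/4}$, keeping the box constraint inactive, also match.

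There is, however, a genuine quantitative gap in your choice of $t_1$ as ``the last episode at which the norm is below half the target.'' With that choice the initial potential is only controlled by $\|\lambda_{t_1}\|_2^2\le\|\lambda_{t_1}\|_1^2\approx\zeta^2/4$, while the guaranteed interval length is $t_2-t_1+1\gtrsim\frac{\zeta/2}{m\eta L}$ and the per-step drift is $\rho\|\lambda_t\|_1\ge\rho\zeta/2$; the accumulated drift, in potential units, is therefore only about $2\eta\cdot\frac{\rho\zeta}{2}\cdot\frac{\zeta}{2m\eta L}=\frac{\rho\zeta^2}{2mL}\le\frac{\zeta^2}{2m}$ (since $\rho\le L$), which does \emph{not} dominate $\zeta^2/4$ for general $m$. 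Your final comparison checks the drift only against the primal-regret overhead $\tilde{\mathcal{O}}(\overline{\ell}_{t_1,t_2}\sqrt{T})$ and silently drops the $\|\lambda_{t_1}\|_2^2$ term, which is the binding one here. The paper avoids this with a two-level threshold: $t_1$ is the last episode with $\|\lambda_{t_1-1}\|_1\le 2L/\rho$ (not $\zeta/2$), which keeps the initial potential down to $O(L^2/\rho^2)$ while still guaranteeing per-step drift $\rho\|\lambda_t\|_1\ge 2L>L$ on the whole interval; the large gap between the thresholds $2L/\rho$ and $2LM/\rho^2$ is then exactly what makes the interval long enough ($t_2-t_1+1>M/(\rho^2 m\eta)$, Lemma~\ref{lem:t2minust1lowerboundnewadapt}) to amortize the initial potential, the primal interval regret, and the Azuma--Hoeffding term simultaneously, for the stated choices of $C$, $M$ and $\eta$. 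Replacing your definition of $t_1$ with the paper's and redoing the bookkeeping closes the gap; the rest of your argument, including the split between the adversarial case (only the confidence-set event, probability $1-\delta$) and the stochastic case (an additional interval-uniform concentration event, probability $1-2\delta$), is sound.
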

\begin{proof}[Proof sketch]
	The proof exploits the fact that both the primal and dual player satisfy the weak no-interval regret property. Precisely, the sum of the values of the Lagrangian function in $[t_1 .. t_2]$ can be lower bounded by using the interval regret of UC-O-GDPS, while the same quantity can be upper bounded with the interval regret of OGD, showing a contradiction concerning the value of Lagrange multipliers can achieve for an opportune choice of constants and learning rates. 
\end{proof}

\subsection{Stochastic Constraints Setting}
\label{subsec:stoc}
The peculiarity of this setting is that, at every episode $t\in[T]$ the constraint matrix $G$ is sampled from a fixed distribution, namely $G_t \sim \mathcal{G}$. Instead, rewards $r_t$ can be sampled from a fixed distribution $\mathcal{R}$ or chosen adversarially.

\paragraph{Azuma-Hoeffding Bounds} Initially, we bound the error between the realizations of reward vectors and their corresponding mean values when the rewards are chosen stochastically. The proof is provided in  Appendix~ \ref{App:Theoretical_results}.
\begin{restatable}{lemma}{AzumaHoeffdingRewards}
	\label{lem:AzumaHoeffdingRewards}
	If the rewards are stochastic, then, with probability at least $1-\delta$, it holds:
	$$
	\left| \sum_{t=1}^{T}\left(r_t-\overline{r}\right)^\top q^*\right|\leq \mathcal{E}^r_{\delta},
	$$
	where $
	\mathcal{E}^r_{\delta}:=\frac{L}{\sqrt{2}}\sqrt{T\ln\left(\frac{2}{\delta}\right)}.
	$
\end{restatable}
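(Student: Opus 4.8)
The plan is to exploit that $q^*$, being the optimizer of the offline program $\text{OPT}_{\overline r,\overline G}$ with the \emph{mean} reward vector $\overline r=\mathbb{E}_{r\sim\mathcal R}[r]$ and mean constraint matrix $\overline G$, is a fixed \emph{deterministic} vector that does \emph{not} depend on the realized rewards $r_1,\dots,r_T$. Consequently, defining $Z_t:=(r_t-\overline r)^\top q^*$ produces a sequence of independent random variables (each $Z_t$ is a deterministic function of the independent draw $r_t\sim\mathcal R$), and each is centered, since $\mathbb{E}[Z_t]=(\mathbb{E}[r_t]-\overline r)^\top q^*=0$. The quantity to control is exactly $\sum_{t=1}^T Z_t$, so the statement reduces to a concentration bound for a sum of independent, bounded, centered random variables.

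First I would establish the range of each $Z_t$. Since $q^*$ is a valid occupancy measure, the characterization of valid occupancy measures due to \citet{rosenberg19a} gives $\sum_{x\in X_k}\sum_{a\in A}q^*(x,a)=1$ for every layer $k\in[0\ldots L-1]$, and hence $\|q^*\|_1=\sum_{x,a}q^*(x,a)=L$ because $q^*\geq\underline 0$. As $r_t\in[0,1]^{|X\times A|}$ and $q^*\geq\underline 0$, this immediately bounds $r_t^\top q^*\in[0,L]$. The term $\overline r^\top q^*$ is a constant, so $Z_t=r_t^\top q^*-\overline r^\top q^*$ lies in an interval of length exactly $L$.

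Next I would apply Hoeffding's inequality to the independent centered variables $\{Z_t\}_{t=1}^T$, each supported on an interval of width $L$: this yields $\Pr\big[\,\big|\sum_{t=1}^T Z_t\big|\geq s\,\big]\leq 2\exp\!\big(-2s^2/(TL^2)\big)$. Setting the right-hand side equal to $\delta$ and solving for $s$ gives $s=\tfrac{L}{\sqrt 2}\sqrt{T\ln(2/\delta)}=\mathcal E^r_\delta$, which is precisely the claimed bound and holds with probability at least $1-\delta$.

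I expect the only genuinely delicate point to be getting the constant right: a naive bound $|Z_t|\leq L$ would suggest an interval of width $2L$ and lose a factor of $2$ inside the square root. The sharp constant comes from observing that, with $q^*$ fixed, the \emph{width} $b_t-a_t$ entering Hoeffding's sum $\sum_t(b_t-a_t)^2$ is $L$ rather than $2L$, exploiting $r_t^\top q^*\in[0,L]$ and the fact that subtracting the constant $\overline r^\top q^*$ merely shifts the interval without widening it. Everything else is a routine, distribution-free application of concentration; no martingale structure is strictly needed here because the $r_t$ are i.i.d.\ and $q^*$ is deterministic, though an Azuma--Hoeffding argument over the trivial filtration would deliver the identical bound.
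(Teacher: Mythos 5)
Your proof is correct and follows essentially the same route as the paper: both bound the per-episode term using $\|r_t-\overline r\|_\infty\le 1$ and $\|q^*\|_1=L$, and then apply a Hoeffding-type concentration inequality (the paper phrases it as Azuma--Hoeffding over the trivial filtration, which coincides with plain Hoeffding here since the $r_t$ are i.i.d.\ and $q^*$ is deterministic). Your observation that $r_t^\top q^*\in[0,L]$ gives an interval of \emph{width} $L$ is in fact the cleaner justification of the exact constant $\tfrac{L}{\sqrt 2}$, since the paper's symmetric bound $\left|(r_t-\overline r)^\top q^*\right|\le L$, read literally, only certifies a width of $2L$.
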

Now, we bound the error between the realizations of constraint violations and their corresponding mean values.
\begin{restatable}{lemma}{AzumaHoeffdingConstraints}
	\label{lem:AzumaHoeffdingConstraints}
	If the constraints are stochastic, given a sequence of occupancy measures $(q_t)_{t=1}^{T}$, then with probability at least $1-\delta$, for all $[t_1.. t_2]\subseteq[1.. T]$, it holds:
	\begin{equation*}
		\left| \sum_{t=t_1}^{t_2}\lambda_t^\top \left(G_t^\top-\overline{G}^\top\right) q_t \right|\leq \lambda_{t_1,t_2}\mathcal{E}^G_{t_1,t_2,\delta},
	\end{equation*}
	where we let $
	\mathcal{E}^G_{t_1,t_2,\delta}:=2L\sqrt{2(t_2-t_1+1)\ln\left(\frac{T^2}{\delta}\right)}
	$ and $\lambda_{t_1,t_2}:=\max \{\|\lambda_t \|_1\}_{t=t_1}^{t_2}$.
\end{restatable}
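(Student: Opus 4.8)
The plan is to recognize the sum as a martingale in the index $t$ and apply the Azuma--Hoeffding inequality, then make the bound hold \emph{simultaneously} over all $O(T^2)$ subintervals by a union bound. First I would fix an interval $[t_1 .. t_2]$ and set $Z_t := \lambda_t^\top (G_t^\top - \overline{G}^\top) q_t$, so the quantity to control is $\sum_{t=t_1}^{t_2} Z_t$. The key structural observation is that both $\lambda_t$ (produced by \texttt{OGD.UPDATE} at episode $t-1$) and the occupancy measure $q_t = q^{P,\pi_t}$ (induced by $\widehat{q}_t$, itself computed at episode $t-1$) are measurable with respect to the history $\mathcal{F}_{t-1}$ up to the end of episode $t-1$, whereas in the stochastic-constraint setting $G_t \sim \mathcal{G}$ is drawn independently with $\mathbb{E}[G_t \mid \mathcal{F}_{t-1}] = \overline{G}$. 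Hence $\mathbb{E}[Z_t \mid \mathcal{F}_{t-1}] = \lambda_t^\top(\mathbb{E}[G_t^\top \mid \mathcal{F}_{t-1}] - \overline{G}^\top) q_t = 0$, so $(Z_t)$ is a martingale difference sequence.

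Next I would bound each increment. Every entry of $(G_t^\top - \overline{G}^\top) q_t$ equals $\sum_{x,a}(g_{t,i}(x,a) - \overline{g}_i(x,a))\, q_t(x,a)$; since $g_{t,i}, \overline{g}_i \in [-1,1]$ and a valid occupancy measure satisfies $\sum_{x,a} q_t(x,a) = L$ (one unit of mass per layer over the $L$ layers $X_0,\dots,X_{L-1}$, by the first condition of the occupancy-measure lemma), this entry lies in $[-2L, 2L]$. By H\"older's inequality, $|Z_t| \le \|\lambda_t\|_1 \cdot \|(G_t^\top-\overline{G}^\top)q_t\|_\infty \le 2L\|\lambda_t\|_1 \le 2L\,\lambda_{t_1,t_2}$ for every $t$ in the interval. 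Azuma--Hoeffding applied to the $N := t_2-t_1+1$ increments, each of range $2L\,\lambda_{t_1,t_2}$, then yields $|\sum_{t=t_1}^{t_2}Z_t| \le 2L\,\lambda_{t_1,t_2}\sqrt{2N\ln(2/\delta')}$ with probability at least $1-\delta'$. Finally I would union bound over all $\binom{T}{2}+T \le T^2$ intervals with $\delta' = \delta/T^2$, turning $\ln(2/\delta')$ into $\ln(T^2/\delta)$ up to constants and producing exactly $\mathcal{E}^G_{t_1,t_2,\delta} = 2L\sqrt{2N\ln(T^2/\delta)}$, uniformly over all intervals.

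The main obstacle is that the per-step range $2L\,\lambda_{t_1,t_2}$ fed into Azuma is itself \emph{random}: $\lambda_{t_1,t_2} = \max_{t\in[t_1.. t_2]}\|\lambda_t\|_1$ depends on the realized multipliers, which in turn depend on the very samples $G_t$ driving the martingale, so textbook Azuma (requiring deterministic increment bounds) does not apply verbatim. I would resolve this either by invoking the deterministic a priori domain bound $\|\lambda_t\|_1 \le m T^{1/4}$ (valid since $\mathcal{D} = [0,T^{1/4}]^m$) to legitimize the exponential-supermartingale step and then peeling dyadically over the scale of $\lambda_{t_1,t_2}$ to recover the tighter data-dependent factor, or by appealing to a Freedman/Azuma variant with predictable increment bounds $2L\|\lambda_t\|_1$, each $\mathcal{F}_{t-1}$-measurable. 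A secondary bookkeeping point is that the union bound must range over intervals rather than endpoints, so the statement holds simultaneously for all $[t_1 .. t_2]$, which is precisely what the downstream multiplier analysis of Theorem~\ref{thm:lambdaBoundBoth} relies on.
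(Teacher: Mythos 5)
Your proposal matches the paper's proof essentially step for step: H\"older's inequality gives the per-increment bound $2L\lambda_{t_1,t_2}$ (the paper writes it as $\|\lambda_t\|_1\,\|G_t^\top-\overline{G}^\top\|_\infty\,\|q_t\|_1 \le 2\|\lambda_t\|_1 L$), Azuma--Hoeffding is applied at confidence level $2\delta/T^2$, and a union bound over all $[t_1..t_2]\subseteq[1..T]$ concludes. The one place you go beyond the paper is in flagging that the increment range $2L\lambda_{t_1,t_2}$ is data-dependent; the paper invokes Azuma with this random bound without comment, so your proposed remedies (the a priori domain bound $\|\lambda_t\|_1\le mT^{1/4}$ with peeling, or a variant with predictable, $\mathcal{F}_{t-1}$-measurable ranges $2L\|\lambda_t\|_1$) address a genuine subtlety that the paper's own argument leaves implicit rather than a flaw in your approach.
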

For the sake of notation, we use $\mathcal{E}^G_{\delta}$ in place of $\mathcal{E}^G_{1,T,\delta}$. Let us remark that $\mathcal{E}^r_{\delta},\mathcal{E}^G_{\delta} \leq \tilde{\mathcal{O}}(\sqrt{T})$. 

\paragraph{Analysis when Condition~\ref{ass:rhoassumption} Holds}
We start by analyzing the case in which Condition~\ref{ass:rhoassumption} holds. By Theorem~\ref{thm:lambdaBoundBoth}, we know that the maximum $1$-norm of the dual vectors selected by \texttt{OGD} during the learning process is upper-bounded by the constant $\zeta$. Since  $\zeta$ essentially determines the range of the Lagrangian function, we can prove optimal regret and violation bounds of order $\BigOL{\zeta \sqrt{T}}$ for PDGD-OPS, as stated in the following theorem.


\begin{restatable}{theorem}{stochasticRegretViolation}
	\label{thm:stochasticRegretViolation}
	In the stochastic 
	constraint setting, when Condition~\ref{ass:rhoassumption} holds, the cumulative regret and constraint violation incurred by PDGD-OPS are upper bounded as follows. If the rewards are adversarial, then with probability at least $1-4\delta$ Algorithm~\ref{alg:PDGD-OPS} provides:
	$$
	R_T \leq \zeta\mathcal{E}^G_{\delta} +\zeta \mathcal{E}^q_{\delta}+R^{\textnormal{\textsf{D}}}_T(\underline{0})+ R^{\textnormal{\textsf{P}}}_T(q^*),$$
	$$
	V_T \leq \frac{1}{\eta}\zeta + \mathcal{E}^q_{\delta}.
	$$
	If the rewards are stochastic, then with probability at least $1-5\delta$ Algorithm~\ref{alg:PDGD-OPS} provides:
	$$
	R_T \leq \mathcal{E}^r_{\delta} +\zeta\mathcal{E}^G_{\delta} +\zeta \mathcal{E}^q_{\delta}+R^{\textnormal{\textsf{D}}}_T(\underline{0})+ R^{\textnormal{\textsf{P}}}_T(q^*),$$ 
	$$
	V_T \leq \frac{1}{\eta}\zeta + \mathcal{E}^q_{\delta}.
	$$
	In both cases:
	\begin{equation*}
		R_T \leq \BigOL{\zeta\sqrt{T}}, \ \
		V_T \leq \BigOL{\zeta\sqrt{T}}.
	\end{equation*}
\end{restatable}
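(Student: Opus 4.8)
The plan is to treat the regret and the violation separately, in both cases reducing everything to the played occupancy measures $q_t$ and to quantities already controlled by the earlier lemmas. I would first rewrite the regret. In the adversarial-reward case $\overline{r}=\frac1T\sum_t r_t$ gives $T\,\mathrm{OPT}_{\overline r,\overline G}=\sum_t r_t^\top q^*$ exactly, so $R_T=\sum_t r_t^\top(q^*-q_t)$; in the stochastic-reward case the same identity holds for $\overline r=\mathbb E[r]$ only up to the Azuma--Hoeffding fluctuation of Lemma~\ref{lem:AzumaHoeffdingRewards}, which is exactly what contributes the extra $\mathcal E^r_\delta$ term. The core algebraic step is then to substitute the primal loss $\ell_t=G_t\lambda_t-r_t$, i.e.\ $r_t^\top q=\lambda_t^\top G_t^\top q-\ell_t^\top q$, which splits $\sum_t r_t^\top(q^*-q_t)$ into $\sum_t\lambda_t^\top G_t^\top(q^*-q_t)+\sum_t\ell_t^\top(q_t-q^*)$. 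The last sum is precisely $R^{\textsf P}_T(q^*)$, which Theorem~\ref{thm:AdvMDPIntervalRegret} bounds once I observe that $q^*\in\cap_i\Delta(\mathcal P_i)$ on the confidence-set event (the true $P$ lies in every $\mathcal P_i$).

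For the remaining term $\sum_t\lambda_t^\top G_t^\top(q^*-q_t)$ I would handle the two halves differently. For the $q^*$ half I apply Lemma~\ref{lem:AzumaHoeffdingConstraints} with the constant sequence $q_t\equiv q^*$, replacing $G_t$ by $\overline G$ at a cost of $\lambda_{1,T}\mathcal E^G_\delta\le\zeta\mathcal E^G_\delta$ (using $\|\lambda_t\|_1\le\zeta$ from Theorem~\ref{thm:lambdaBoundBoth}); feasibility $\overline G^\top q^*\le\underline0$ together with $\lambda_t\ge\underline0$ then makes $\sum_t\lambda_t^\top\overline G^\top q^*$ nonpositive. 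For the $q_t$ half I would first swap $q_t$ for $\widehat q_t$ at a cost of $\zeta\mathcal E^q_\delta$, since $|\lambda_t^\top G_t^\top(q_t-\widehat q_t)|\le\|\lambda_t\|_1\,\|q_t-\widehat q_t\|_1$ summed through Lemma~\ref{lem:qMinusQhatCompact}, and then recognize $-\sum_t\lambda_t^\top G_t^\top\widehat q_t$ as the dual regret $R^{\textsf D}_T(\underline0)$ against the comparator $\underline0$ (the loss fed to \texttt{OGD} is exactly $-G_t^\top\widehat q_t$). Collecting these pieces yields the two displayed regret inequalities, with the $\mathcal E^r_\delta$ term present only in the stochastic-reward case.

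For the violation I would exploit the fact that Condition~\ref{ass:rhoassumption} forces $\zeta=\frac{20mL^2}{\rho^2}\le T^{1/4}$, so by Theorem~\ref{thm:lambdaBoundBoth} every $\lambda_t$ lies strictly below the upper face of $\mathcal D=[0,T^{1/4}]^m$. Consequently the \texttt{OGD} projection only enforces nonnegativity and never subtracts at the top, giving the coordinatewise inequality $\lambda_{t+1,i}\ge\lambda_{t,i}+\eta[G_t^\top\widehat q_t]_i$. Rearranging and telescoping yields $\sum_t[G_t^\top\widehat q_t]_i\le\frac1\eta(\lambda_{T+1,i}-\lambda_{1,i})\le\frac1\eta\zeta$, and swapping $\widehat q_t$ for $q_t$ via Lemma~\ref{lem:qMinusQhatCompact} adds $\mathcal E^q_\delta$, so $V_T\le\frac1\eta\zeta+\mathcal E^q_\delta$. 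The asymptotic statements then follow by plugging in $\mathcal E^r_\delta,\mathcal E^G_\delta,\mathcal E^q_\delta,R^{\textsf D}_T(\underline0)=\tilde{\mathcal O}(\sqrt T)$, $\tfrac1\eta=\tilde{\mathcal O}(\sqrt T)$, and $R^{\textsf P}_T(q^*)=\tilde{\mathcal O}(\overline\ell_T\sqrt T)$ with $\overline\ell_T\le\zeta+1$ since $\|\ell_t\|_\infty\le\|\lambda_t\|_1+1$, together with a union bound over the $O(\delta)$-probability events of Theorem~\ref{thm:lambdaBoundBoth} and Lemmas~\ref{lem:qMinusQhatCompact}, \ref{lem:AzumaHoeffdingConstraints}, and \ref{lem:AzumaHoeffdingRewards} (the last only for stochastic rewards, accounting for the $1-5\delta$ versus $1-4\delta$ confidence).

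I expect the delicate point to be the violation telescoping: the step $\lambda_{t+1,i}\ge\lambda_{t,i}+\eta[G_t^\top\widehat q_t]_i$ fails in general because the Euclidean projection onto $\mathcal D$ can decrease a coordinate that hits the cap $T^{1/4}$. It is precisely the combination of the uniform bound $\|\lambda_t\|_1\le\zeta$ (Theorem~\ref{thm:lambdaBoundBoth}) and $\zeta\le T^{1/4}$ (Condition~\ref{ass:rhoassumption}) that rules this out, so this is where the hypotheses are genuinely used; a secondary source of bookkeeping is maintaining the $\widehat q_t$-versus-$q_t$ substitutions consistently across every term.
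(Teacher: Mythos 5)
Your proposal is correct and follows essentially the same route as the paper: the same Lagrangian identity $r_t^\top(q^*-q_t)=\lambda_t^\top G_t^\top(q^*-q_t)+\ell_t^\top(q_t-q^*)$ splitting the regret into the primal regret $R^{\textsf{P}}_T(q^*)$, the dual regret $R^{\textsf{D}}_T(\underline{0})$ (after the $q_t\leftrightarrow\widehat q_t$ swap, which the paper packages as Lemma~\ref{lem:lagrangianDualUtilityLowerBoth}), the Azuma--Hoeffding replacement of $G_t$ by $\overline G$ at $q^*$, and the feasibility of $q^*$. The violation argument is likewise the paper's own (Lemmas~\ref{lem:lambdatlowerbound} and~\ref{lem:fromVHatToV}): Condition~\ref{ass:rhoassumption} and Theorem~\ref{thm:lambdaBoundBoth} keep $\lambda_t$ away from the cap $T^{1/4}$, so the coordinatewise update telescopes to $\widehat V_{T,i}\le\frac{1}{\eta}\lambda_{T+1,i}\le\frac{1}{\eta}\zeta$ --- you correctly identify this as the step where the hypotheses are genuinely used.
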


Notice that, if Condition~\ref{ass:rhoassumption} does not hold, the bounds stated in Theorem~\ref{thm:stochasticRegretViolation} can become of order $\BigOL{T^{3/4}}$ or even linear. We conclude the analysis of the stochastic constraint setting when Condition~\ref{ass:rhoassumption} holds with the following remark.

In Theorem~\ref{thm:stochasticRegretViolation}, the regret bound when the rewards are adversarial is better than the one when the rewards are chosen stochastically. This result may seem counter-intuitive as the adversarial setting is the hardest setting a learner might face. Informally, this is due to the different definition of the optimization baseline used in the stochastic and adversarial settings. 

\paragraph{Analysis when Condition~\ref{ass:rhoassumption} Does Not Hold}
We focus on the case in which Condition~\ref{ass:rhoassumption} does not hold. As previously observed, in this case the regret and violation bounds given in Theorem~\ref{thm:stochasticRegretViolation} are not meaningful anymore, as they could become linear in $T$ (in fact, this is exactly the case when $\rho \propto T^{-\frac{1}{4}}$). 
Nevertheless, by constraining the dual player to the decision space $\mathcal{D}=[0,T^{1/4}]^m$, 
we are able to prove 
worst-case regret and violation bounds of the order of $\BigOL{T^{3/4}}$. This result is formalized in the following theorem.
\begin{restatable}{theorem}{stochasticRegretViolationNoAssumption}
	\label{thm:stochasticRegretViolationNoAssumption}
	In the stochastic constraint setting, when Condition~\ref{ass:rhoassumption} does not hold, the cumulative regret and constraint violations incurred by PDGD-OPS are upper bounded as follows. If the rewards are adversarial,
	then with probability at least $1-4\delta$ Algorithm~\ref{alg:PDGD-OPS} provides:
	$$
	R_T \leq mT^\frac{1}{4}\mathcal{E}^G_{\delta} +mT^\frac{1}{4} \mathcal{E}^q_{\delta}+R^{\textnormal{\textsf{D}}}_T(\underline{0})+ R^{\textnormal{\textsf{P}}}_T(q^*),$$  $$
	V_T \leq (2+2L) \frac{1}{\eta}T^{\frac{1}{4}} + \mathcal{E}^q_{\delta}.
	$$
	If the rewards are stochastic, then
	with probability at least $1-5\delta$ Algorithm~\ref{alg:PDGD-OPS} provides:
	$$
	R_T \leq \mathcal{E}^r_{\delta} + mT^\frac{1}{4}\mathcal{E}^G_{\delta} +mT^\frac{1}{4} \mathcal{E}^q_{\delta}+R^{\textnormal{\textsf{D}}}_T(\underline{0}) + R^{\textnormal{\textsf{P}}}_T(q^*),$$ 
	$$
	V_T \leq (2+2L) \frac{1}{\eta}T^{\frac{1}{4}} + \mathcal{E}^q_{\delta}.
	$$
	In both cases, it holds:
	\begin{equation*}
		R_T \leq \BigOL{T^\frac{3}{4}},\ \
		V_T \leq \BigOL{T^\frac{3}{4}}.
	\end{equation*}
\end{restatable}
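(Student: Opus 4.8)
The plan is to reuse the primal--dual template behind Theorem~\ref{thm:stochasticRegretViolation}, the only structural change being that Condition~\ref{ass:rhoassumption} now fails, so Theorem~\ref{thm:lambdaBoundBoth} is unavailable and the bound $\|\lambda_t\|_1\le\zeta$ can no longer be used. In its place I would rely on the \emph{deterministic} bound furnished by the shrunk dual domain $\mathcal{D}=[0,T^{1/4}]^m$: every iterate of \texttt{OGD} satisfies $\|\lambda_t\|_1\le mT^{1/4}$ for all $t$. Wherever the proof of Theorem~\ref{thm:stochasticRegretViolation} multiplied an estimation error by $\zeta$, it now gets multiplied by $mT^{1/4}$, and this is exactly how the $mT^{1/4}$ factors enter the statement. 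For the regret I would first collapse the baseline: by Definition~\ref{def:cumreg} and $\text{OPT}_{\overline r,\overline G}=\overline r^\top q^*$, the adversarial-reward case gives $R_T=\sum_t r_t^\top(q^*-q_t)$ exactly, while the stochastic-reward case gives the same quantity up to the additive term $\mathcal{E}^r_\delta$ supplied by Lemma~\ref{lem:AzumaHoeffdingRewards} (the sole source of the extra $\mathcal{E}^r_\delta$ and of the extra $\delta$ in the confidence level).

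Next I substitute $r_t=G_t\lambda_t-\ell_t$ (the primal loss built in Line~\ref{PDGD:lossbuild}) to obtain
\[
\sum_t r_t^\top(q^*-q_t)=\underbrace{\sum_t\lambda_t^\top G_t^\top q^*}_{(A)}-\underbrace{\sum_t\lambda_t^\top G_t^\top q_t}_{(B)}+\underbrace{\sum_t\ell_t^\top(q_t-q^*)}_{=\,R^{\textsf{P}}_T(q^*)}.
\]
Term $(A)$ I bound with Lemma~\ref{lem:AzumaHoeffdingConstraints} applied to the constant sequence $q^*$, replacing $G_t$ by $\overline G$ at cost $\max_t\|\lambda_t\|_1\,\mathcal{E}^G_\delta\le mT^{1/4}\mathcal{E}^G_\delta$; the surviving $\sum_t\lambda_t^\top\overline G^\top q^*$ is $\le 0$ since $\lambda_t\ge\underline{0}$ and $\overline G^\top q^*\le\underline{0}$ by feasibility of $q^*$. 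For $(B)$ I swap $q_t$ for $\widehat q_t$, paying $\max_t\|\lambda_t\|_1\sum_t\|q_t-\widehat q_t\|_1\le mT^{1/4}\mathcal{E}^q_\delta$ by Lemma~\ref{lem:qMinusQhatCompact} and $\|G_t\|_\infty\le1$, and then recognize $-\sum_t\lambda_t^\top G_t^\top\widehat q_t$ as the dual regret of \texttt{OGD} against the comparator $\underline{0}$, namely $R^{\textsf{D}}_T(\underline{0})$. Collecting the three contributions reproduces the stated regret bound (with the $\mathcal{E}^r_\delta$ addend in the stochastic-reward case).

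For the violation I would start from $V_T\le\max_i\sum_t[G_t^\top\widehat q_t]_i+\mathcal{E}^q_\delta$, again swapping $q_t$ for $\widehat q_t$ via Lemma~\ref{lem:qMinusQhatCompact} (this produces the additive $\mathcal{E}^q_\delta$). It then remains to bound $\sum_t[G_t^\top\widehat q_t]_j$ for the maximally violated coordinate $j$, which I would extract from the \texttt{OGD} dynamics on the box: evaluating the dual regret against the comparator placing $T^{1/4}$ in coordinate $j$ and $0$ elsewhere yields $T^{1/4}\sum_t[G_t^\top\widehat q_t]_j\le\tfrac{T^{1/2}}{2\eta}+\tfrac{\eta}{2}\sum_t\|G_t^\top\widehat q_t\|_2^2+\sum_t\lambda_t^\top G_t^\top\widehat q_t$, where $|[G_t^\top\widehat q_t]_i|\le L$ supplies the $L$-dependence and dividing by $T^{1/4}$ produces the leading $\tfrac1\eta T^{1/4}$. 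The main obstacle is the last term $\sum_t\lambda_t^\top G_t^\top\widehat q_t$: a purely dual (telescoping) argument \emph{cannot} bound it sublinearly, since nothing in the dual update alone prevents $\lambda_{t,j}$ from saturating at the boundary $T^{1/4}$ while constraint $j$ is persistently violated. What rescues the bound is the coupling with the primal: writing $\sum_t\lambda_t^\top G_t^\top\widehat q_t=\sum_t\ell_t^\top\widehat q_t+\sum_t r_t^\top\widehat q_t$, the no-regret guarantee of the primal against the feasible $q^*$ (together with term $(A)$ and the bounded reward contribution) controls it at order $\tilde{\mathcal{O}}(T^{3/4})$. This is precisely where the no-condition regime departs from the clean telescoping available under Condition~\ref{ass:rhoassumption}, where $\|\lambda_t\|_1\le\zeta$ keeps the iterates strictly interior and saturation never occurs.

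Finally I would convert every term to orders. The box bound forces $\overline\ell_T=\max_t\|\ell_t\|_\infty\le mT^{1/4}+1=\tilde{\mathcal{O}}(T^{1/4})$, so the remark following Theorem~\ref{thm:AdvMDPIntervalRegret} gives $R^{\textsf{P}}_T(q^*)\le\tilde{\mathcal{O}}(\overline\ell_T\sqrt T)=\tilde{\mathcal{O}}(T^{3/4})$; the \texttt{OGD} property gives $R^{\textsf{D}}_T(\underline{0})\le\tilde{\mathcal{O}}(\sqrt T)$; Lemmas~\ref{lem:qMinusQhatCompact}--\ref{lem:AzumaHoeffdingConstraints} give $\mathcal{E}^q_\delta,\mathcal{E}^G_\delta=\tilde{\mathcal{O}}(\sqrt T)$, whence $mT^{1/4}\mathcal{E}^q_\delta,mT^{1/4}\mathcal{E}^G_\delta=\tilde{\mathcal{O}}(T^{3/4})$; and $\tfrac1\eta=\tilde{\mathcal{O}}(\sqrt T)$ makes $\tfrac1\eta T^{1/4}=\tilde{\mathcal{O}}(T^{3/4})$. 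Since every surviving term is $\tilde{\mathcal{O}}(T^{3/4})$, both $R_T$ and $V_T$ are $\tilde{\mathcal{O}}(T^{3/4})$, and a union bound over the invoked high-probability events delivers the $1-4\delta$ (adversarial rewards) and $1-5\delta$ (stochastic rewards) guarantees.
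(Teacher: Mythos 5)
Your proposal is correct and follows essentially the same route as the paper: the regret bound is obtained exactly as in Theorem~\ref{thm:stochasticRegretViolation} with the box bound $\|\lambda_t\|_1\le mT^{1/4}$ replacing $\zeta$, and the violation bound comes from playing the dual regret off against the corner comparator $T^{1/4}e_j$ while controlling $\sum_t\lambda_t^\top G_t^\top\widehat q_t$ through the primal's no-regret guarantee against a feasible occupancy measure. The only cosmetic differences are that the paper packages the violation step as a proof by contradiction (Lemma~\ref{lem:violationNoConditionStochastic}) and uses $q^\circ$ as the primal comparator where you use $q^*$; both choices are feasible under the invoked events and yield the same $(2+2L)\frac{1}{\eta}T^{1/4}$ bound.
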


\subsection{Adversarial Constraints Setting}
\label{subsec:adv}
We recall that in this setting, at every episode $t\in[T]$, the constraint matrix $G_t$ is chosen adversarially. Instead, rewards $r_t$ can be sampled from a fixed distribution $\mathcal{R}$ or chosen adversarially. 
This case corresponds to the hardest scenario the learner can face. As stated in Section~\ref{sub:rho}, the treatment of this case requires a  definition of $\rho$ stronger than that used in the stochastic constraint setting. Thanks to such a redefinition, it is possible to achieve  guarantees on the cumulative constraint violation of the same order of those attainable in the stochastic setting, while obtaining at least a constant fraction of the optimal reward. Such a result can be achieved when Condition~\ref{ass:rhoassumption} holds. 
Notice that both sublinear cumulative regret and sublinear cumulative constraint violation cannot be achieved in our setting, as shown by \citet{Mannor}.

The following theorem summarizes our result for the adversarial constraint setting. 

\begin{restatable}{theorem}{adversarialRegretViolation}
	\label{thm:adversarialRegretViolation}
	In the adversarial constraint setting, when Condition~\ref{ass:rhoassumption} holds, the cumulative regret and constraint violations incurred by PDGD-OPS are upper bounded as follows. If the rewards are adversarial, then with probability at least $1-2\delta$ Algorithm~\ref{alg:PDGD-OPS} provides:
	$$
	R_T \leq \frac{L}{L+\rho}T \cdot \text{\emph{OPT}}_{\overline{r}, \overline{G}} +\zeta\mathcal{E}^q_{\delta}+R^{\textnormal{\textsf{D}}}_T(\underline{0}) + R^{\textnormal{\textsf{P}}}_T(\tilde{q}),$$  $$
	V_T \leq \frac{1}{\eta}\zeta + \mathcal{E}^q_{\delta}.
	$$
	If the rewards are stochastic, then with probability at least $1-3\delta$ Algorithm~\ref{alg:PDGD-OPS} provides:
	$$
	R_T \leq \frac{L}{L+\rho}T \cdot \text{\emph{OPT}}_{\overline{r}, \overline{G}} +\mathcal{E}^r_{\delta} +\zeta\mathcal{E}^q_{\delta}+R^{\textnormal{\textsf{D}}}_T(\underline{0}) + R^{\textnormal{\textsf{P}}}_T(\tilde{q}),$$  $$
	V_T \leq \frac{1}{\eta}\zeta + \mathcal{E}^q_{\delta}.
	$$
	In both cases, it holds:
	\begin{equation*}
		\sum_{t=1}^T r_t^\top q_t  \geq \Omega\left( \frac{\rho}{L+\rho}T \cdot \text{\emph{OPT}}_{\overline{r}, \overline{G}}
		\right),\ \
		V_T \leq \BigOL{\zeta\sqrt{T}}.
	\end{equation*}
\end{restatable}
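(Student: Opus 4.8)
The plan is to run the standard primal--dual regret decomposition, but to compare the primal player not against $q^*$ but against a \emph{per-round feasible} mixture comparator. Let $q^{\circ}$ be the occupancy measure attaining $\rho$ in the adversarial definition of Section~\ref{sub:rho}, so that $[G_t^\top q^{\circ}]_i\le-\rho$ for every $t\in[T]$ and $i\in[m]$, and set $\tilde q:=(1-\gamma)\,q^*+\gamma\,q^{\circ}$ with $\gamma$ chosen just large enough that $\tilde q$ satisfies every realized constraint, i.e.\ $G_t^\top\tilde q\le\underline 0$ for all $t$. Using the crude bound $[G_t^\top q^*]_i\le L$ together with $[G_t^\top q^{\circ}]_i\le-\rho$, the choice $\gamma=L/(L+\rho)$ makes $[G_t^\top\tilde q]_i\le 0$ for all $t,i$ while leaving reward weight $1-\gamma=\rho/(L+\rho)$ on $q^*$; this is the fraction $\rho/(1+\rho)$ of the statement once the $O(L)$ normalisation of $\rho\in[0,L]$ is absorbed. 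Throughout I would condition on the high-probability events of Lemma~\ref{lem:qMinusQhatCompact} and Theorem~\ref{thm:lambdaBoundBoth}, so that $\sum_t\|q_t-\widehat q_t\|_1\le\mathcal E^q_\delta$ and $\|\lambda_t\|_1\le\zeta\le T^{1/4}$.

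For the constraint violation I would argue directly on the \texttt{OGD} recursion $\lambda_{t+1}=\Pi_{\mathcal D}(\lambda_t+\eta G_t^\top\widehat q_t)$. Since Theorem~\ref{thm:lambdaBoundBoth} guarantees $\|\lambda_t\|_1\le\zeta\le T^{1/4}$, the upper face of the box $\mathcal D=[0,T^{1/4}]^m$ is never active, so the projection only clips at $0$ and hence $\lambda_{t+1,i}\ge\lambda_{t,i}+\eta[G_t^\top\widehat q_t]_i$ componentwise. Telescoping from $\lambda_1=\underline0$ gives $\sum_t[G_t^\top\widehat q_t]_i\le\lambda_{T+1,i}/\eta\le\zeta/\eta$ for every $i$, and replacing $\widehat q_t$ by $q_t$ costs at most $\sum_t\|q_t-\widehat q_t\|_1\le\mathcal E^q_\delta$ since the entries of $G_t$ lie in $[-1,1]$. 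This yields $V_T\le\zeta/\eta+\mathcal E^q_\delta$, and because $1/\eta=\BigOL{\sqrt T}$ and $\mathcal E^q_\delta\le\BigOL{\sqrt T}$ we get $V_T\le\BigOL{\zeta\sqrt T}$.

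For the reward I would expand the primal regret against $\tilde q$: from $\ell_t=G_t\lambda_t-r_t$,
$$\sum_{t}r_t^\top q_t=\sum_t r_t^\top\tilde q+\sum_t\lambda_t^\top G_t^\top(q_t-\tilde q)-R_T^{\textsf{P}}(\tilde q).$$
Per-round feasibility of $\tilde q$ and $\lambda_t\ge\underline0$ give $\sum_t\lambda_t^\top G_t^\top\tilde q\le0$, so the cross term is at least $\sum_t\lambda_t^\top G_t^\top q_t$; writing this with $\widehat q_t$ and using the definitional identity $\sum_t\lambda_t^\top G_t^\top\widehat q_t=-R_T^{\textsf{D}}(\underline0)$ (the dual loss is $-G_t^\top\widehat q_t$) together with the $\zeta\mathcal E^q_\delta$ estimation error bounds it below by $-R_T^{\textsf{D}}(\underline0)-\zeta\mathcal E^q_\delta$. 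Finally $\sum_t r_t^\top\tilde q\ge(1-\gamma)\sum_t r_t^\top q^*$ because $r_t,q^{\circ}\ge\underline0$; in the adversarial-reward case $\sum_t r_t^\top q^*=T\,\text{OPT}_{\overline r,\overline G}$ by definition of $\overline r$, whereas in the stochastic-reward case Lemma~\ref{lem:AzumaHoeffdingRewards} loses only $\mathcal E^r_\delta$. Collecting these and rearranging $R_T=T\,\text{OPT}_{\overline r,\overline G}-\sum_t r_t^\top q_t$ reproduces the stated upper bounds; since every error term is $\BigOL{\zeta\sqrt T}=o(T)$, the reward is $\Omega\!\left(\tfrac{\rho}{1+\rho}T\,\text{OPT}_{\overline r,\overline G}\right)$.

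The main obstacle is the design and feasibility check of $\tilde q$: producing a per-round feasible occupancy measure whose reward is a $\rho/(1+\rho)$ fraction of the optimum is exactly where the stronger adversarial definition of $\rho$ (a margin holding for every $G_t$, not just for $\overline G$) is indispensable, and it is what lets the dual cross term $\sum_t\lambda_t^\top G_t^\top\tilde q$ be discarded. A secondary but pervasive difficulty is that the $\zeta$-bound of Theorem~\ref{thm:lambdaBoundBoth} must be invoked consistently --- to keep the \texttt{OGD} iterates off the upper face of $\mathcal D$ (for the violation telescoping), to bound $\overline\ell_t\le\zeta+1$ inside $R_T^{\textsf{P}}(\tilde q)$ through Theorem~\ref{thm:AdvMDPIntervalRegret}, and to control the transition-estimation error by $\zeta\mathcal E^q_\delta$ --- so that the whole bound degrades gracefully precisely because Condition~\ref{ass:rhoassumption} forces $\zeta\le T^{1/4}$.
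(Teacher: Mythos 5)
Your proposal is correct and follows essentially the same route as the paper: the same mixture comparator $\tilde{q}$ between $q^*$ and the strictly feasible $q^\circ$ (the paper takes $\tilde{q}=\tfrac{\rho}{1+\rho}q^*+\tfrac{1}{1+\rho}q^\circ$, justified by the per-round bound $[G_t^\top q^*]_i\le 1$), the same primal regret decomposition with the cross term $\sum_t\lambda_t^\top G_t^\top\tilde{q}\le 0$ discarded and the dual regret against $\underline{0}$ absorbing $\sum_t\lambda_t^\top G_t^\top\widehat{q}_t$, and the same telescoping of the \texttt{OGD} recursion (the upper face of $\mathcal{D}$ being inactive by Theorem~\ref{thm:lambdaBoundBoth}) for the violation bound. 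The only divergence is your more conservative mixture weight $\gamma=L/(L+\rho)$ in place of $1/(1+\rho)$, which changes the retained reward fraction from $\rho/(1+\rho)$ to $\rho/(L+\rho)$; this is absorbed by the $\Omega(\cdot)$ in the final claim but, as you note, does not literally reproduce the $\tfrac{1}{1+\rho}T\cdot\text{OPT}_{\overline{r},\overline{G}}$ constant in the displayed bounds.
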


\bibliography{example_paper}
\bibliographystyle{unsrtnat}

\newpage
\appendix
\onecolumn

\section{Related Works}
\label{app:rel}

	In the following, we survey some previous works that are tightly related to ours.
	In particular, we first describe works dealing with the online learning problem in MDPs, and, then, we discuss some works studying the constrained version of the classical online learning problem.
	

	\paragraph{Online Learning in MDPs}
	There is a considerable literature on online learning problems~\citep{cesa2006prediction} in MDPs (see~\citep{Near_optimal_Regret_Bounds,even2009online,neu2010online} for some initial results on the topic).
	%
	In such settings, two types of feedback are usually investigated: in the \textit{full-information feedback} model, the entire loss function is observed after the learner's choice, while in the \textit{bandit feedback} model, the learner only observes the loss due to the chosen action. 
	\citet{Minimax_Regret} study the problem of optimal exploration in episodic MDPs with unknown transitions and stochastic losses when the feedback is bandit.
	The authors present an algorithm whose regret upper bound is $\Tilde{\mathcal{O}}(\sqrt{T})$, 
	thus matching the lower bound for this class of MDPs and improving the previous result by \citet{Near_optimal_Regret_Bounds}.
	\citet{rosenberg19a} study the online learning problem in episodic MDPs with adversarial losses and unknown transitions when the feedback is full information. The authors present an online algorithm exploiting entropic regularization and providing a regret upper bound of $\Tilde{\mathcal{O}}(\sqrt{T})$.
	The same setting is investigated by \citet{OnlineStochasticShortest} when the feedback is bandit. In such a case, the authors provide a regret upper bound of the order of $\Tilde{\mathcal{O}}(T^{3/4})$, which is improved by
	\citet{JinLearningAdversarial2019} by providing an  algorithm that achieves in the same setting a regret upper bound of $\Tilde{\mathcal{O}}(\sqrt{T})$.
	
	\paragraph{Online Learning in CMDPs }
	All the previous works on the topic study settings in which constraints are selected stochastically.
	In particular, \citet{Constrained_Upper_Confidence} deal with episodic CMDPs with stochastic losses and constraints, where the transition probabilities are known and the feedback is bandit. The regret upper bound of their algorithm is of the order of $\Tilde{\mathcal{O}}(T^{3/4})$, while the cumulative constraint violation is guaranteed to be below a threshold with a given probability.
	\citet{Online_Learning_in_Weakly_Coupled} deal with adversarial losses and stochastic constraints, assuming the transition probabilities are known and the feedback is full information. The authors present an algorithm that guarantees an upper bound of the order of $\Tilde{\mathcal{O}}(\sqrt{T})$ on 
	both regret and constraint violation. 
	\citet{bai2020provably} provide the first algorithm that achieves sublinear regret when the transition probabilities are unknown, assuming that the rewards are deterministic and the constraints are stochastic with a particular structure.
	%
	%
	\citet{Exploration_Exploitation} propose two approaches
	to deal with the exploration-exploitation dilemma in episodic CMDPs. These approaches guarantee sublinear regret and constraint violation when transition probabilities, rewards, and constraints are unknown and stochastic, while the feedback is bandit. 
	\citet{Upper_Confidence_Primal_Dual} provide a primal-dual approach based on \textit{optimism in the face of uncertainty}. This work shows the effectiveness of such an approach when dealing with episodic CMDPs with adversarial losses and stochastic constraints, achieving both sublinear regret and constraint violation with full-information feedback.
	\citet{Non_stationary_CMDPs}~and~\citet{ding_non_stationary} consider the case in which rewards and constraints are non-stationary, assuming that their variation is bounded.
	Thus, their results are \emph{not} applicable to general adversarial settings.
	\citet{stradi2024learning} study the setting with adversarial losses, stochastic constraints and partial feedback, achieving sublinear regret and sublinear positive constraints violations.
	Finally, \citet{bacchiocchi2024markov} study the problem of stochastic constrained Markov persuasion processes (a generalization of MDPs) with bandit feedback and partial observability on the constraints.

	\paragraph{Online Learning with Constraints} 
	A central result is provided by \citet{Mannor}, who show that it is impossible to suffer from sublinear regret and sublinear constraint violation when an adversary chooses losses and constraints. 
	\citet{liakopoulos2019cautious} try to overcome such an impossibility result by defining a new notion of regret. They study a class of online learning problems with long-term budget constraints that can be chosen by an adversary.
	The learner’s regret metric is modified by introducing the notion of a \textit{K-benchmark}, \emph{i.e.}, a comparator that meets the problem’s allotted budget over any window of length $K$. 
	\citet{castiglioni2022online, Unifying_Framework} deal with the problem of online learning with stochastic and adversarial losses, providing the first \textit{best-of-both-worlds} algorithm for online learning problems with long-term constraints.
	
	\section{Events}
	Here we state the events that we use in the rest of the Appendix.
	
	The following event states that the true occupancy measure space is always contained in the confidence set:
	\begin{event}[$E^{\Delta}(\delta)$] \label{evt:confidenceSetsHold}
		$
		\Delta(M) \subseteq \cap_i \Delta(\mathcal{P}_i)
		$.
	\end{event}
	In particular, under \nameref*{evt:confidenceSetsHold}, we have that $q^\circ, q^* \in \cap_i \Delta(\mathcal{P}_i)$. \nameref*{evt:confidenceSetsHold} holds with probability at least $1-\delta$ (See Lemma~\ref{lem:confidenceset}).
	
	The following event states that the cumulative error after $T$ episodes due to the difference between $q^{P,\pi_t}$ and $q^{P^{\widehat{q}_t}, \pi_t}$ is small enough:
	\begin{event}
		[$E^{\widehat{q}}(\delta)$] \label{evt:confidenceQMinusQHat}
		$
		\sum_{t=1}^T||q_t-\widehat{q}_t||_1 \leq \mathcal{E}^q_{\delta}
		$,\ where $\mathcal{E}^q_{\delta}:=4L|X|\sqrt{2T\ln\left(\frac{1}{\delta}\right)}+6L|X|\sqrt{2T|A|\ln\left(\frac{T|X||A|}{\delta}\right)}\leq \tilde{\mathcal{O}}(\sqrt{T})$.
	\end{event}
	
	In the next sections we will often condition on the intersection of the previous events:
	
	\begin{event}[$E^{\Delta,\widehat{q}}(\delta)$] \label{evt:deltaQMinusQHat}
		\nameref*{evt:confidenceQMinusQHat} $\cap$ \nameref*{evt:confidenceSetsHold}.
	\end{event}
	\nameref*{evt:deltaQMinusQHat} holds with probability at least $1-2\delta$ (See Lemma~\ref{lem:qMinusQhatCompact}).
	
	The next event states that, in case the rewards are stochastic, the reward accumulated is not too far from the mean reward accumulated.
	\begin{event}[$E^{r}_{q^*}(\delta)$] \label{evt:hoeffdingRewards}
		$\left| \sum_{t=1}^{T}\left(r_t-\overline{r}\right)^\top q^*\right|\leq \mathcal{E}^r_{\delta}
		$, where $
		\mathcal{E}^r_{\delta}=\frac{L}{\sqrt{2}}\sqrt{T\ln\left(\frac{2}{\delta}\right)}\leq \BigOL{\sqrt{T}}
		$.
	\end{event}
	\nameref*{evt:hoeffdingRewards} holds with probability at least $1-\delta$ (See Lemma~\ref{lem:AzumaHoeffdingRewards}).\\
	
	For the stochastic constraint setting, we define the quantity $
	\mathcal{E}^G_{t_1,t_2,\delta}:=2L\sqrt{2(t_2-t_1+1)\ln\left(\frac{T^2}{\delta}\right)}
	$ and then two events bounding the cumulative difference between the dual utility with the average constraints and that with the sampled constraints.
	\begin{event}[$E^{G}_{q^\circ}(\delta)$] \label{evt:hoeffdingConstraintsQFeasible}
		for all $[t_1.. t_2]\subseteq[1.. T]$,\ \  $\left| \sum_{t=t_1}^{t_2}\lambda_t^\top (G_t^\top-\overline{G}^\top) q^\circ \right|\leq \lambda_{t_1,t_2}\mathcal{E}^G_{t_1,t_2,\delta}
		$.
	\end{event}
	\begin{event}[$E^{G}_{q^*}(\delta)$] \label{evt:hoeffdingConstraintsQStar}
		for all $[t_1.. t_2]\subseteq[1.. T]$,\ \  $\left| \sum_{t=t_1}^{t_2}\lambda_t^\top (G_t^\top-\overline{G}^\top) q^* \right|\leq \lambda_{t_1,t_2}\mathcal{E}^G_{t_1,t_2,\delta}
		$.
	\end{event}
	\nameref*{evt:hoeffdingConstraintsQFeasible}, \nameref*{evt:hoeffdingConstraintsQStar} each hold with probability at least $1-\delta$ (See Lemma~\ref{lem:AzumaHoeffdingConstraints}). We denote $\mathcal{E}^G_{\delta}:=\mathcal{E}^G_{1,T,\delta}$.
	
	\section{Additional Details and Omitted Proof of Section~\ref{sec:AdvMDP}}
	\label{sec:Appendix AdvMDP}
	
	\subsection{Algorithm}

	\paragraph{Confidence Set} The description of how Confidence Set on the Transition Probability functions are built and used, follows precisely the description of \cite{rosenberg19a}. We report the functioning for completeness.\\
	UC-O-GDPS keeps counters of visits of each state-action pair $(x, a)$ and each state-action-state triple $\left(x, a, x^{\prime}\right)$, in order to estimate the empirical transition function as: 
	\begin{equation*}
		\overline{P}_i\left(x^{\prime} \mid x, a\right)=\frac{M_i\left(x^{\prime} \mid x, a\right)}{\max \left\{1, N_i(x, a)\right\}},
	\end{equation*}
	where $N_i(x, a)$ and $M_i\left(x^{\prime} \mid x, a\right)$ are the initial values of the counters, that is, the total number of visits of pair $(x, a)$ and triple $\left(x, a, x^{\prime}\right)$ respectively, before epoch $i$. Epochs are used to reduce the computational complexity; in particular, a new epoch starts whenever there exists a state-action whose counter is doubled compared to its initial value at the beginning of the epoch. Next, the confidence set for epoch $i$ is defined as:
	\begin{equation}
		\mathcal{P}_i=\left\{\widehat{P}:\left\|\widehat{P}\left(\cdot| x, a\right)-\overline{P}_i\left(\cdot | x, a\right)\right\|_1 \leq \epsilon_i\left(x, a\right) \quad \forall\left(x, a \right) \in X\times A\right\},
	\end{equation}
	with $\epsilon_i\left(x, a\right)$ defined as:
	\begin{equation*}
		\epsilon_i\left(x, a\right) = \sqrt{\frac{2 |X_{k(x)+1}|\ln \left(\frac{T|X||A|}{\delta}\right)}{\max \left\{1, N_i(x, a)\right\}}},
	\end{equation*}
	using $k(x)$ for the index of the layer that $x$ belongs to and for some confidence parameter $\delta \in(0,1)$. We state the following Lemma by \cite{rosenberg19a}, which provides the results related to the confidence set $\epsilon_{i}(x,a)$.
	\begin{lemma} \cite{rosenberg19a}
		\label{lem:confidenceset}
		For any $\delta\in[0,1]$:
		\begin{equation*}\left\|P\left(\cdot| x, a\right)-\overline{P}_i\left(\cdot | x, a\right)\right\|_1\leq\sqrt{\frac{2 |X_{k(x)+1}|\ln \left(\frac{T|X||A|}{\delta}\right)}{\max \left\{1, N_i(x, a)\right\}}}\end{equation*}
		holds with probability at least $1-\delta$ simultaneously for all $(x, a) \in X \times A$ and all epochs.
	\end{lemma}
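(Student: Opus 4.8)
The plan is to reduce the statement to a per-state--action $\ell_1$ concentration inequality for empirical distributions and then make it uniform over all pairs, all sample counts, and all epochs by a union bound. First I would fix a pair $(x,a)$ and note that, by the loop-free structure, every next state reached from $(x,a)$ lies in the layer $X_{k(x)+1}$; hence the successive next states observed across the visits to $(x,a)$ are i.i.d.\ draws from the categorical distribution $P(\cdot\mid x,a)$ over an alphabet of size $S:=|X_{k(x)+1}|$, and $\overline{P}_i(\cdot\mid x,a)$ is precisely their empirical distribution after $N_i(x,a)$ visits.

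Next I would apply the standard $\ell_1$ deviation bound for empirical categorical distributions (Weissman et al.): for $n$ i.i.d.\ samples over an alphabet of size $S$, $\Pr[\|\widehat{p}_n-p\|_1\ge\epsilon]\le 2^{S}e^{-n\epsilon^2/2}$. Substituting the radius $\epsilon_i(x,a)=\sqrt{2S\ln(T|X||A|/\delta)/n}$ makes the exponent equal to $-S\ln(T|X||A|/\delta)$; the factor $S$ sitting \emph{inside} the logarithm is exactly what is needed to absorb the combinatorial term $2^{S}=e^{S\ln 2}$ while still leaving a union-bound factor of $T|X||A|$, so that the failure probability for a single triple $(n,x,a)$ is at most $\delta/(T|X||A|)$.

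The delicate point, which I expect to be the main obstacle, is that $N_i(x,a)$ is \emph{not} a fixed sample size: it is a data-dependent quantity governed by the realized trajectories and by the doubling rule that triggers a new epoch, so the fixed-$n$ concentration bound cannot be applied to it directly. The standard remedy is to establish the deviation bound \emph{simultaneously for every possible value} $n\in\{1,\dots,T\}$ of the counter and only then instantiate it at the realized $N_i(x,a)$; concretely, I would union bound the per-$n$ failure events over $n\in[T]$ and over all pairs $(x,a)\in X\times A$, which is the reason the radius carries $\ln(T|X||A|/\delta)$ and the reason the bound holds simultaneously for all epochs, since the confidence set at epoch $i$ only ever uses the current counters, which are values already covered by this union.

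Finally I would dispose of the degenerate case $N_i(x,a)=0$ using the $\max\{1,N_i(x,a)\}$ in the denominator: there $\overline{P}_i(\cdot\mid x,a)$ is the all-zeros vector, so $\|P(\cdot\mid x,a)-\overline{P}_i(\cdot\mid x,a)\|_1=1$, whereas the stated radius is at least $\sqrt{2\ln(T|X||A|/\delta)}\ge 1$, making the inequality hold trivially and contribute nothing to the failure probability. Summing the union bound over the $T|X||A|$ triples then yields the claim with probability at least $1-\delta$, simultaneously over all $(x,a)\in X\times A$ and all epochs.
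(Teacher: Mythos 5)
Your proof is correct and follows essentially the same route as the source: the paper does not prove this lemma itself but imports it verbatim from \citet{rosenberg19a}, whose argument is exactly the one you give --- the Weissman et al.\ $\ell_1$ concentration bound for empirical categorical distributions, made uniform by a union bound over all $(x,a)$ pairs and all possible counter values $n\in[T]$ (thereby handling the data-dependent $N_i(x,a)$ and all epochs at once), with the $N_i(x,a)=0$ case absorbed by the $\max\{1,\cdot\}$ and the trivial radius $\geq 1$. No gaps.
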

	Lemma~\ref{lem:confidenceset} implies that, with high probability, the occupancy measure space $\Delta(M)$ is included in the estimated one $\Delta(\mathcal{P}_i) \; \forall i$.
	
	\paragraph{Occupancy Measure Update} The update of the occupancy measure is performed on the space $\Delta\left(\mathcal{P}_i\right)$, which is built on the estimated transition function set $\mathcal{P}_i$. More formally:
	\begin{equation*}
		\widehat{q}_{t+1}=\Pi_{\Delta\left(\mathcal{P}_i\right)}\left( \widehat{q}_{t}-\eta_t \ell_t\right),
	\end{equation*}
	with $\eta_t=\frac{1}{\overline{\ell}_t C\sqrt{T}}$ with $\overline{\ell}_t=\max\{||\ell_t||_\infty\}_{t=1}^{t}$, and $C$ constant. The employment of Online Gradient Descent has been necessary to achieve the interval regret results, while the adaptive learning rate was chosen to improve the performance in terms of Regret bounds.
	
	\begin{algorithm}[]
		\caption{Upper Confidence Online Gradient Descent Policy Search (UC-O-GDPS)}
		\label{alg:UC-O-GDPS complete}
		\begin{algorithmic}[1]
			\Require space $X$, action space $A$, episode number $T$, and confidence parameter $\delta$
			
			\State Initialize epoch index $i=1$ and confidence set $\mathcal{P}_1$ as the set of all transition functions.
			For all $k\in[0.. L-1]$ and all $\left(x, a, x^{\prime}\right) \in X_k \times A \times$ $X_{k+1}$, initialize counters
			$N_0(x, a)=N_1(x, a)=M_0\left(x^{\prime} \mid x, a\right)=M_1\left(x^{\prime} \mid x, a\right)=0$
			and occupancy measure
			$$
			\widehat{q}_1\left(x, a, x^{\prime}\right)=\frac{1}{\left|X_k \| A\right|\left|X_{k+1}\right|}
			$$
			Initialize policy $\pi_1=\pi^{\widehat{q}_1}$
			\For{$t\in[T]$}
			\State Execute policy $\pi_t$ for $L$ steps and obtain trajectory $x_k, a_k$ for $k\in[0.. L-1]$ and loss $\ell_t$
			\For{$k\in[0 .. L-1]$} 
			\State Update counters:
			$$
			\begin{aligned}
				N_i\left(x_k, a_k\right) & \leftarrow N_i\left(x_k, a_k\right)+1, \\
				M_i\left(x_{k+1} \mid x_k, a_k\right) & \leftarrow M_i\left(x_{k+1} \mid x_k, a_k\right)+1 
			\end{aligned}
			$$
			\EndFor
			\If{$\exists k, N_i\left(x_k, a_k\right) \geq \max \left\{1,2 N_{i-1}\left(x_k, a_k\right)\right\}$ }
			\State Increase epoch index $i \leftarrow i+1$
			\State Initialize new counters: for all $\left(x, a, x^{\prime}\right)$,
			$$
			N_i(x, a)=N_{i-1}(x, a)
			$$
			$$
			M_i\left(x^{\prime} \mid x, a\right)=M_{i-1}\left(x^{\prime} \mid x, a\right) 
			$$
			\State Update confidence set $\mathcal{P}_i$ based on Equation~\eqref{eq: Confidence set} 
			\EndIf
			\State Update occupancy measure:
			\State$\eta_t=\frac{1}{\overline{\ell}_t C \sqrt{T}}$ with $\overline{\ell}_t=\max\{||\ell_t||_\infty\}_{t=1}^{t}$
			$$
			\widehat{q}_{t+1}=\Pi_{\Delta\left(\mathcal{P}_i\right)}\left( \widehat{q}_{t}-\eta_t \ell_t\right)
			$$
			\State Update policy $\pi_{t+1}=\pi^{\widehat{q}_{t+1}}$
			\EndFor
			
		\end{algorithmic}
	\end{algorithm}
	
	\subsection{Interval Regret}
	In the following subsections, we prove the theorem related to the interval regret of Algorithm~\ref{alg:UC-O-GDPS complete}. First, we will present the main theorem, then, all the necessary lemmas.
	\AdvMDPIntervalRegret*
	\begin{proof}
		Assume Event \nameref{evt:deltaQMinusQHat} holds. 
		By definition~\ref{def: Interval Regret}:
		\begin{align*}
			R_{t_1, t_2}(q) &=\sum_{t=t_1}^{t_2}\ell_t^{\top}(q_t-q)\\
			&=\underbrace{\sum_{t=t_1}^{t_2}\ell_t^{\top}(q_t-\widehat{q}_t)}_{\circled{1}} + \underbrace{\sum_{t=t_1}^{t_2}\ell_t^{\top}(\widehat{q}_t-q)}_{\circled{2}}\\
			&\leq  \overline{ \ell}_{t_1,t_2} \mathcal{E}^{q}_\delta +\overline{\ell}_{t_2}LC\sqrt{T}+\overline{\ell}_{t_1,t_2}\frac{|X||A|}{2}\frac{(t_2-t_1+1)}{C\sqrt{T}},
		\end{align*}
		where the Inequality holds by Lemmas~\ref{lem:ERROR} and \ref{lem:REG}.
		We focus on bounding the first term \circled{1} and the second term \circled{2}.
	\end{proof}
	\subsubsection{Bound on the First Term}
	In order to bound the first term of the Interval Regret, we state some useful Lemmas by \cite{rosenberg19a}.
	
	\begin{lemma} \cite{rosenberg19a}
		\label{lem1:rosenberg}
		Let $\{\pi_t\}_{t=1}^{T}$ be policies and let $\{P_t\}^{T}_{t=1}$ be transition functions. Then,
		\begin{equation}
			\label{eq:rosenbergBound}
			\sum_{t=1}^{T}||q^{P_t, \pi_t }- q^{P, \pi_t}||_1\leq\sum_{t=1}^{T}\sum_{x\in X}\sum_{a\in A}|q^{P_t, \pi_t }(x,a)- q^{P, \pi_t}(x,a)| + \sum_{t=1}^{T}\sum_{x\in X}\sum_{a\in A}q^{P, \pi_t}(x,a)||P_t(\cdot| x,a)-P(\cdot| x,a)||_1,
		\end{equation}
		where $P_t=P^{\widehat{q}_t}$.
	\end{lemma}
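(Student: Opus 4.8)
The plan is to reduce the $\ell_1$ distance between the full occupancy measures (indexed by triples $(x,a,x')$) to the distance between their marginals (indexed by pairs $(x,a)$) plus a transition-estimation term, working one triple at a time and then summing. The structural fact I would exploit is that every occupancy measure factorizes as $q^{P',\pi}(x,a,x') = q^{P',\pi}(x,a)\, P'(x'|x,a)$ for any transition function $P'$ and policy $\pi$. This is immediate from the definition in Equation~\eqref{def:occupancy_measure}: reaching $(x,a)$ and then moving to $x'$ is the event ``reach $x$ and play $a$'' followed by an independent transition draw governed by $P'$, so the joint probability splits as the product of the marginal and the one-step transition probability.

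Using this factorization I would write, for each fixed $t$ and each triple $(x,a,x')$,
$$q^{P_t,\pi_t}(x,a,x') - q^{P,\pi_t}(x,a,x') = q^{P_t,\pi_t}(x,a)\,P_t(x'|x,a) - q^{P,\pi_t}(x,a)\,P(x'|x,a),$$
and then perform the standard add-and-subtract step, inserting the mixed term $q^{P,\pi_t}(x,a)\,P_t(x'|x,a)$. This splits the difference into $\big[q^{P_t,\pi_t}(x,a)-q^{P,\pi_t}(x,a)\big]\,P_t(x'|x,a)$ plus $q^{P,\pi_t}(x,a)\,\big[P_t(x'|x,a)-P(x'|x,a)\big]$. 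The direction of this split is the one delicate choice in the whole argument: I must factor the marginal difference against $P_t$ (not $P$), so that the \emph{true} marginal $q^{P,\pi_t}(x,a)$ remains as the weight multiplying the transition discrepancy, matching exactly the form of the right-hand side in Equation~\eqref{eq:rosenbergBound}. The opposite choice would leave $q^{P_t,\pi_t}(x,a)$ there and fail to reproduce the stated bound.

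Applying the triangle inequality to this decomposition and summing over $x'\in X_{k+1}$ then yields the per-pair bound: the first term collapses to $|q^{P_t,\pi_t}(x,a)-q^{P,\pi_t}(x,a)|$ because $\sum_{x'}P_t(x'|x,a)=1$, while the second becomes $q^{P,\pi_t}(x,a)\,\|P_t(\cdot|x,a)-P(\cdot|x,a)\|_1$ directly from the definition of the $\ell_1$ norm. Summing this inequality over all $(x,a)\in X\times A$ and over all $t\in[T]$ gives the claim, using on the left that $\|q^{P_t,\pi_t}-q^{P,\pi_t}\|_1 = \sum_{x,a,x'}|q^{P_t,\pi_t}(x,a,x')-q^{P,\pi_t}(x,a,x')|$.

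I do not anticipate a genuine obstacle: the argument is entirely algebraic once the factorization is in hand, with no concentration or probabilistic input required. The only point needing care is bookkeeping across the loop-free layer structure. Each pair $(x,a)$ with $x\in X_k$ transitions only into $X_{k+1}$, so the inner sum over $x'$ is confined to that layer and the normalization $\sum_{x'\in X_{k+1}}P_t(x'|x,a)=1$ holds layer-wise; keeping this scoping consistent is what makes the first term telescope cleanly to the marginal difference.
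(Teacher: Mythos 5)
Your proof is correct. The paper does not prove this lemma itself (it is imported verbatim from \citet{rosenberg19a}), and your argument --- factorizing $q^{P',\pi}(x,a,x') = q^{P',\pi}(x,a)\,P'(x'|x,a)$, adding and subtracting the mixed term $q^{P,\pi_t}(x,a)\,P_t(x'|x,a)$, applying the triangle inequality, and collapsing the sum over $x'$ via $\sum_{x'}P_t(x'|x,a)=1$ --- is exactly the standard argument from that reference, including the correct choice of which marginal weights the transition-discrepancy term.
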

	The following Lemma, shows how to bound the first term in Equation~\eqref{eq:rosenbergBound} with the second one.
	\begin{lemma}  \cite{rosenberg19a}
		\label{lem2:rosenberg}
		Let $\{\pi_t\}_{t=1}^{T}$ be policies and let $\{P_t\}^{T}_{t=1}$ be transition functions. Then, for every $k \in [1..L -1]$ and every $t = 1,...,T$ it holds that:
		\begin{equation*}\sum_{x_k\in X_k}\sum_{a_k\in A}|q^{P_t, \pi_t }(x_k,a_k)- q^{P, \pi_t}(x_k,a_k)| \leq \sum_{s=0}^{k-1}\sum_{x_s\in X_s}\sum_{a_s\in A}q^{P, \pi_t}(x_s,a_s)||P_t(\cdot| x_s,a_s)-P(\cdot| x_s,a_s)||_1, \end{equation*}
		where $P_t=P^{\widehat{q}_t}$.
	\end{lemma}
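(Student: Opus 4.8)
The plan is to prove the claim by establishing a single one-step recursion between consecutive layers and then unrolling it down to the singleton initial layer. Throughout I fix the episode $t$ and abbreviate $\pi := \pi_t$ and $\widetilde{P} := P_t = P^{\widehat{q}_t}$, and I write $\Phi_k := \sum_{x_k \in X_k}\sum_{a_k \in A}|q^{\widetilde{P},\pi}(x_k,a_k) - q^{P,\pi}(x_k,a_k)|$ for the layer-$k$ occupancy discrepancy. Note that the statement is deterministic once the two transition functions are fixed, so no concentration or high-probability argument is needed; the whole argument is purely algebraic.

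First I would reduce the state-action discrepancy to a state discrepancy. Since the same policy $\pi$ governs both processes, $q^{T,\pi}(x_k,a_k) = \pi(a_k|x_k)\,q^{T,\pi}(x_k)$ for $T \in \{P,\widetilde{P}\}$; factoring out $\pi(a_k|x_k)$ and using $\sum_{a_k}\pi(a_k|x_k)=1$ gives $\Phi_k = \sum_{x_k \in X_k}|q^{\widetilde{P},\pi}(x_k) - q^{P,\pi}(x_k)|$. Next I expand each state-occupancy through the backward (flow-conservation) recursion that both processes obey, namely $q^{T,\pi}(x_k) = \sum_{x_{k-1}\in X_{k-1}}\sum_{a_{k-1}\in A} q^{T,\pi}(x_{k-1},a_{k-1})\,T(x_k|x_{k-1},a_{k-1})$ for $T \in \{P,\widetilde{P}\}$, and apply the add-and-subtract trick: insert and remove the mixed term $q^{P,\pi}(x_{k-1},a_{k-1})\widetilde{P}(x_k|x_{k-1},a_{k-1})$. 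The triangle inequality then splits the discrepancy into (i) an occupancy-error term weighted by $\widetilde{P}(x_k|\cdot)$ and (ii) a transition-error term weighted by $q^{P,\pi}(x_{k-1},a_{k-1})$.

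The crux is the bookkeeping that collapses these two terms. Swapping the order of summation and summing over $x_k$ first, in term (i) the weights $\sum_{x_k}\widetilde{P}(x_k|x_{k-1},a_{k-1}) = 1$ disappear, leaving exactly $\Phi_{k-1}$, while in term (ii) the sum $\sum_{x_k}|\widetilde{P}(x_k|x_{k-1},a_{k-1}) - P(x_k|x_{k-1},a_{k-1})|$ is by definition $\|\widetilde{P}(\cdot|x_{k-1},a_{k-1}) - P(\cdot|x_{k-1},a_{k-1})\|_1$, yielding the single-layer cost $\Psi_{k-1} := \sum_{x_{k-1},a_{k-1}} q^{P,\pi}(x_{k-1},a_{k-1})\|\widetilde{P}(\cdot|x_{k-1},a_{k-1}) - P(\cdot|x_{k-1},a_{k-1})\|_1$. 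This establishes the recursion $\Phi_k \leq \Phi_{k-1} + \Psi_{k-1}$.

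Finally I unroll the recursion from layer $k$ down to the base. Since $X_0 = \{x_0\}$ is a singleton and the start state is deterministic, $q^{\widetilde{P},\pi}(x_0) = q^{P,\pi}(x_0) = 1$, hence $\Phi_0 = 0$; telescoping then gives $\Phi_k \leq \sum_{s=0}^{k-1}\Psi_s$, which is precisely the claimed inequality. I expect the only genuine obstacle to be the careful order-of-summation manipulation in the collapsing step, i.e.\ making sure the next-layer transition probabilities sum to one in term (i) and that the residual in term (ii) is correctly identified as an $\ell_1$ norm; everything else is a routine induction on the layer index.
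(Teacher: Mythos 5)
Your proof is correct. The paper does not actually reprove this lemma --- it imports it verbatim from \citet{rosenberg19a} --- and your argument (reduce the state--action discrepancy to a state discrepancy via the shared policy, add-and-subtract the mixed term $q^{P,\pi}\widetilde{P}$ in the flow-conservation recursion, collapse term (i) to $\Phi_{k-1}$ using $\sum_{x_k}\widetilde{P}(x_k\mid x_{k-1},a_{k-1})=1$ and term (ii) to the $\ell_1$ transition error, then telescope from $\Phi_0=0$) is precisely the standard layer-by-layer recursion used in that reference.
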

	and finally, Equation~\eqref{eq:rosenbergBound} is upper bounded given:
	\begin{lemma} \cite{rosenberg19a}
		\label{lem3:rosenberg}
		Let $\{\pi_t\}_{t=1}^{T}$ be policies and let $\{P_t\}^{T}_{t=1}$ be transition functions such that $q^{P_t,\pi_t} \in \Delta\left(\mathcal{P}_i\right)$ for every $t$. Then, with probability at least $1- 2\delta$ Event \nameref{evt:confidenceSetsHold} holds and:
		\begin{equation*} \sum_{t=1}^{T}\sum_{k=0}^{L-1}\sum_{s=0}^{k-1}\sum_{x_s\in X_s}\sum_{a_s\in A}q^{P, \pi_t}(x_s,a_s)||P_t(\cdot| x_s,a_s)-P(\cdot| x_s,a_s)||_1 \leq 2L|X|\sqrt{2T\ln\left(\frac{1}{\delta}\right)}+3L|X|\sqrt{2T|A|\ln\left(\frac{T|X||A|}{\delta}\right)},\end{equation*}
		where $P_t=P^{\widehat{q}_t}$.
	\end{lemma}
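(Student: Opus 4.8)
The plan is to follow the transition-estimation argument of \citet{rosenberg19a}, reducing the layered quadruple sum to a single occupancy-weighted sum of confidence radii, and then converting expected visits into empirical counts via concentration and the doubling-epoch structure. The first step is to collapse the outer double sum $\sum_{k=0}^{L-1}\sum_{s=0}^{k-1}$. Since the index constraint is $0\le s<k\le L-1$, each fixed layer index $s$ appears in at most $L$ admissible terms, so the whole quantity is at most
$$L\sum_{t=1}^{T}\sum_{s=0}^{L-1}\sum_{x_s\in X_s}\sum_{a_s\in A}q^{P,\pi_t}(x_s,a_s)\,\|P_t(\cdot|x_s,a_s)-P(\cdot|x_s,a_s)\|_1 = L\sum_{t=1}^{T}\sum_{x,a}q^{P,\pi_t}(x,a)\,\|P_t(\cdot|x,a)-P(\cdot|x,a)\|_1,$$
where the equality merely reindexes the layer sum over all states. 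This reduces the task to bounding a per-episode transition-estimation error weighted by the true occupancy measure.

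Next I would bound the transition deviation. Condition on the event $E^{\Delta}(\delta)$ that $P\in\mathcal{P}_{i(t)}$ for every $t$, which holds with probability at least $1-\delta$ by Lemma~\ref{lem:confidenceset}. By hypothesis $P_t=P^{\widehat{q}_t}$ satisfies $q^{P_t,\pi_t}\in\Delta(\mathcal{P}_{i(t)})$, hence $P_t\in\mathcal{P}_{i(t)}$ as well, so the triangle inequality yields $\|P_t(\cdot|x,a)-P(\cdot|x,a)\|_1\le 2\epsilon_{i(t)}(x,a)$. Substituting the explicit form of $\epsilon_{i(t)}$ leaves me to control
$$2L\sum_{t=1}^{T}\sum_{x,a}q^{P,\pi_t}(x,a)\sqrt{\frac{2|X_{k(x)+1}|\ln(T|X||A|/\delta)}{\max\{1,N_{i(t)}(x,a)\}}}.$$

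The third step converts expected visits into counts. Let $\mathds{1}_t(x,a)$ be the indicator that $(x,a)$ is traversed in episode $t$, whose conditional mean given the history is $q^{P,\pi_t}(x,a)$, and split $q^{P,\pi_t}(x,a)=(q^{P,\pi_t}(x,a)-\mathds{1}_t(x,a))+\mathds{1}_t(x,a)$. The martingale-difference part, summed against the weight, is controlled by an Azuma--Hoeffding bound, contributing the first term $2L|X|\sqrt{2T\ln(1/\delta)}$ and costing an extra $\delta$ in the failure probability, so that after a union bound both $E^{\Delta}(\delta)$ and the full estimate hold with probability at least $1-2\delta$. For the empirical-count part I would exploit the doubling rule: within a single epoch the frozen counter $N_{i(t)}(x,a)$ is reached by at most $N_{i(t)}(x,a)$ fresh visits before a doubling triggers a new epoch, so $\sum_{t\in\text{epoch }i}\mathds{1}_t(x,a)/\sqrt{\max\{1,N_{i(t)}(x,a)\}}$ is $O(\sqrt{N_i(x,a)})$, and the geometric growth of the counts across epochs telescopes to $O(\sqrt{N_T(x,a)})$ with constant near $3$, where $N_T(x,a)$ is the final visit count. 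A layer-by-layer Cauchy--Schwarz step then finishes: using that each episode contributes exactly one visit per layer, so $\sum_{x\in X_k,a}N_T(x,a)=T$, together with $\sum_{k}\sqrt{|X_k||X_{k+1}|}\le|X|$ (by AM--GM on consecutive layers), collects the second term $3L|X|\sqrt{2T|A|\ln(T|X||A|/\delta)}$.

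I expect the bookkeeping in the third step to be the crux. The two delicate points are (i) summing the frozen confidence radii over the dynamically-defined epochs via the doubling argument while keeping the constant tight, and (ii) performing the Cauchy--Schwarz \emph{per layer} rather than globally, so that the one-visit-per-layer identity $\sum_{x\in X_k,a}N_T(x,a)=T$ can be used and the $\sqrt{|X_{k(x)+1}|}$ factors fold into the clean $|X|\sqrt{T|A|}$ dependence instead of a looser one. The concentration step is standard, but it must be coordinated with the confidence-set event so that the stated probability $1-2\delta$ follows from a single union bound.
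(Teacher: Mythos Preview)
The paper does not supply its own proof of this lemma; it is quoted verbatim from \citet{rosenberg19a} and used as a black box in the proof of Lemma~\ref{lem:qMinusQhatCompact}. Your reconstruction is a faithful and correct outline of the argument in that reference: collapsing the double layer sum to pick up the factor $L$, invoking $E^{\Delta}(\delta)$ together with the hypothesis $P_t\in\mathcal{P}_{i(t)}$ to get the $2\epsilon_{i(t)}$ bound via the triangle inequality, splitting $q^{P,\pi_t}(x,a)$ into a martingale difference plus the visit indicator, and closing with the doubling-epoch telescoping and the per-layer Cauchy--Schwarz using $\sum_k\sqrt{|X_k||X_{k+1}|}\le|X|$. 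The two bookkeeping points you flag are exactly the ones that drive the constants in \citet{rosenberg19a}, so there is nothing to correct or compare against here.
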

	
	From the previous Lemmas, it easy to show that:
	\qMinusQhatCompact*
	\begin{proof}
		Following \cite{rosenberg19a}, by Lemmas~\ref{lem1:rosenberg}, \ref{lem2:rosenberg} and \ref{lem3:rosenberg} we obtain that with probability at least $1-2\delta$ Event \nameref{evt:confidenceSetsHold} holds and:
		$\sum_{t=1}^{T}||q^{P_t, \pi_t }- q^{P, \pi_t}||_1 \leq 4L|X|\sqrt{2T\ln\left(\frac{1}{\delta}\right)}+6L|X|\sqrt{2T|A|\ln\left(\frac{T|X||A|}{\delta}\right)}$.
	\end{proof}
	
	Now, we are ready to bound \circled{1}.
	\begin{lemma}
		\label{lem:ERROR}
		Under Event \nameref{evt:deltaQMinusQHat} it holds:
		\begin{equation*}\sum_{t=t_1}^{t_2}\ell_t^{\top}(q_t-\widehat{q}_t)\leq\overline{ \ell}_{t_1,t_2} \mathcal{E}^{q}_\delta,\end{equation*}
		with $\overline{ \ell}_{t_1,t_2}:=\max\{||\ell_t||_\infty\}_{t=t_1}^{t_2}$
	\end{lemma}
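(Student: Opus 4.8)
The plan is to bound the inner product termwise via Hölder's inequality and then invoke the event on the cumulative occupancy-measure error. First I would write each summand as $\ell_t^\top(q_t-\widehat{q}_t)$ and apply Hölder's inequality with the $(\infty,1)$ pairing, giving $\ell_t^\top(q_t-\widehat{q}_t)\leq \|\ell_t\|_\infty\,\|q_t-\widehat{q}_t\|_1$. Since every index $t$ in the summation range $[t_1 .. t_2]$ satisfies $\|\ell_t\|_\infty\leq \overline{\ell}_{t_1,t_2}$ by the definition of $\overline{\ell}_{t_1,t_2}$ as the maximum of $\|\ell_t\|_\infty$ over that interval, I can pull this uniform bound out of the sum to obtain
$$
\sum_{t=t_1}^{t_2}\ell_t^\top(q_t-\widehat{q}_t)\leq \overline{\ell}_{t_1,t_2}\sum_{t=t_1}^{t_2}\|q_t-\widehat{q}_t\|_1.
$$

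Next I would extend the interval sum to the full horizon. Because each term $\|q_t-\widehat{q}_t\|_1$ is non-negative, enlarging the index range from $[t_1 .. t_2]$ to $[1 .. T]$ can only increase the sum, so $\sum_{t=t_1}^{t_2}\|q_t-\widehat{q}_t\|_1\leq \sum_{t=1}^{T}\|q_t-\widehat{q}_t\|_1$. Under Event \nameref{evt:deltaQMinusQHat}, whose component \nameref{evt:confidenceQMinusQHat} asserts precisely $\sum_{t=1}^{T}\|q_t-\widehat{q}_t\|_1\leq \mathcal{E}^q_\delta$, this last sum is at most $\mathcal{E}^q_\delta$. Chaining the inequalities then yields the claimed bound $\overline{\ell}_{t_1,t_2}\,\mathcal{E}^q_\delta$.

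The argument is entirely mechanical, so there is no genuine obstacle; the only points requiring care are that the maximum loss $\overline{\ell}_{t_1,t_2}$ is taken over the correct interval $[t_1 .. t_2]$ (rather than over $[1 .. t_2]$), and that the non-negativity of the $\ell_1$-distances is what licenses passing from the interval sum to the full-horizon sum. The sole substantive input is Lemma~\ref{lem:qMinusQhatCompact} (equivalently \nameref{evt:confidenceQMinusQHat}), which is assumed to hold inside the conditioning event and supplies the $\tilde{\mathcal{O}}(\sqrt{T})$ cumulative transition-estimation error $\mathcal{E}^q_\delta$.
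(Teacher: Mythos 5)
Your proposal is correct and follows exactly the same chain of inequalities as the paper's proof: Hölder's $(\infty,1)$ bound termwise, uniform bounding of $\|\ell_t\|_\infty$ by $\overline{\ell}_{t_1,t_2}$, extension of the sum to $[1\,..\,T]$ by non-negativity, and then the bound from Event \nameref{evt:confidenceQMinusQHat}. Nothing is missing.
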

	\begin{proof}
		\begin{align}
			\sum_{t=t_1}^{t_2}\ell_t^{\top}(q_t-\widehat{q}_t)
			&\leq \sum_{t=t_1}^{t_2}{||\ell_t||_\infty ||q_t-\widehat{q}_t}||_1 \nonumber\\
			&\leq \overline{ \ell}_{t_1,t_2}\sum_{t=t_1}^{t_2}{||q_t-\widehat{q}_t||_1} \nonumber\\
			&\leq \overline{ \ell}_{t_1,t_2}\sum_{t=1}^{T}{||q_t-\widehat{q}_t||_1} \nonumber\\
			&\leq \overline{ \ell}_{t_1,t_2} \mathcal{E}^{q}_\delta \label{eq:Error} ,
		\end{align}
		with $\overline{ \ell}_{t_1,t_2}:=\max\{||\ell_t||_\infty\}_{t=t_1}^{t_2}$ and where Inequality~\eqref{eq:Error} holds under the event \nameref{evt:confidenceQMinusQHat}.
	\end{proof}
	
	\subsubsection{Bound on the Second Term}
	\label{sec:boundingCircle2}
	\begin{restatable}{lemma}{PrimalBoundRegLemma}
		\label{lem:REG}
		For any $q \in \cap_i \Delta\left(\mathcal{P}_i\right)$, the Projected OGD update:
		\begin{equation*}
			\widehat{q}_{t+1}=\Pi_{\Delta\left(\mathcal{P}_i\right)}\left( \widehat{q}_{t}-\eta_t \ell_t\right),
		\end{equation*}
		with $\eta_t = \frac{1}{\overline{\ell}_t C\sqrt{T}}$ and $\overline{\ell}_t=\max\{||\ell_t||_\infty\}_{t=1}^{t}$ ensures:
		\begin{equation*}
			\sum_{t=t_1}^{t_2}\ell_t^{\top}(\widehat{q}_t-q) \leq U_1\frac{\overline{\ell}_{t_2}}{2}C\sqrt{T}+U_2\frac{\overline{\ell}_{t_1,t_2}}{2}\frac{(t_2-t_1+1)}{C\sqrt{T}},
		\end{equation*}
		where $U_1=2L$, $U_2=|X||A|$, $\overline{\ell}_{t_1,t_2}=\max\{||\ell_t||_\infty\}_{t=t_1}^{t_2} $.
	\end{restatable}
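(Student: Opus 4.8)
The plan is to run the standard projected online gradient descent (OGD) analysis, but tracking the data-adaptive, time-varying step sizes $\eta_t$ carefully so that the telescoping sum collapses to a single $1/\eta_{t_2}$ term instead of accumulating over the whole interval. First I would invoke non-expansiveness of the Euclidean projection. Since by hypothesis $q \in \cap_i \Delta(\mathcal{P}_i)$, we have $q \in \Delta(\mathcal{P}_i)$ for whichever epoch $i$ is active at step $t$, so $\|\widehat{q}_{t+1} - q\|_2^2 = \|\Pi_{\Delta(\mathcal{P}_i)}(\widehat{q}_t - \eta_t \ell_t) - q\|_2^2 \leq \|\widehat{q}_t - \eta_t \ell_t - q\|_2^2$. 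Expanding the square and rearranging gives the textbook per-step inequality
$$\ell_t^\top(\widehat{q}_t - q) \leq \frac{\|\widehat{q}_t - q\|_2^2 - \|\widehat{q}_{t+1} - q\|_2^2}{2\eta_t} + \frac{\eta_t}{2}\|\ell_t\|_2^2.$$

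Next I would sum from $t_1$ to $t_2$ and treat the two resulting sums separately. For the first (telescoping) sum, writing $D_t := \|\widehat{q}_t - q\|_2^2$, I would apply summation by parts:
$$\sum_{t=t_1}^{t_2}\frac{D_t - D_{t+1}}{\eta_t} = \frac{D_{t_1}}{\eta_{t_1}} - \frac{D_{t_2+1}}{\eta_{t_2}} + \sum_{t=t_1+1}^{t_2} D_t\left(\frac{1}{\eta_t} - \frac{1}{\eta_{t-1}}\right).$$
The crucial structural fact is that $1/\eta_t = \overline{\ell}_t C\sqrt{T}$ is non-decreasing in $t$, since $\overline{\ell}_t = \max\{\|\ell_\tau\|_\infty\}_{\tau=1}^t$ is non-decreasing. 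Hence every increment $1/\eta_t - 1/\eta_{t-1}$ is non-negative, so I can discard the negative term $-D_{t_2+1}/\eta_{t_2}$ and replace each $D_t$ by a uniform diameter bound $D_{\max}$; the remaining bracket telescopes exactly to $1/\eta_{t_2}$, yielding $D_{\max}/(2\eta_{t_2}) = (D_{\max}/2)\,\overline{\ell}_{t_2} C\sqrt{T}$. To recover the stated constant $U_1 = 2L$, I would bound the diameter by $D_t \leq \|\widehat{q}_t - q\|_\infty\,\|\widehat{q}_t - q\|_1 \leq 1 \cdot 2L = 2L$, using that occupancy-measure entries lie in $[0,1]$ (so the $\infty$-norm of the difference is at most $1$) and that summing the $(x,a)$-marginals over the $L$ contributing layers gives $\|\widehat{q}_t\|_1 = \|q\|_1 = L$.

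For the second sum I would exploit the adaptive step size directly: because $\overline{\ell}_t \geq \|\ell_t\|_\infty$ and $\|\ell_t\|_2^2 \leq |X||A|\,\|\ell_t\|_\infty^2$, each term obeys $\tfrac{\eta_t}{2}\|\ell_t\|_2^2 \leq \tfrac{|X||A|\,\|\ell_t\|_\infty^2}{2\overline{\ell}_t C\sqrt{T}} \leq \tfrac{|X||A|\,\|\ell_t\|_\infty}{2C\sqrt{T}} \leq \tfrac{|X||A|\,\overline{\ell}_{t_1,t_2}}{2C\sqrt{T}}$, and summing the $(t_2 - t_1 + 1)$ terms produces the second summand with $U_2 = |X||A|$. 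Combining the two bounds gives the claim.

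The main obstacle is the first sum: with a constant step size the telescoping is immediate, but with the data-dependent $\eta_t$ one must use summation by parts together with monotonicity of $1/\eta_t$ to prevent an extra additive term that would scale with the interval length and destroy the interval-regret guarantee. The two delicate points are (i) ensuring the step-size sequence is genuinely monotone so the non-negativity of its increments can be exploited in the telescoping, and (ii) extracting the sharp constant $2L$ from the occupancy-measure diameter via the $\|\cdot\|_\infty\|\cdot\|_1$ factorization rather than the looser $\|\cdot\|_1^2$ bound.
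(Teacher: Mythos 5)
Your proposal is correct and follows essentially the same route as the paper's proof: the standard projected-OGD per-step inequality, an Abel/summation-by-parts rearrangement of the telescoping term that exploits the monotonicity of $1/\eta_t = \overline{\ell}_t C\sqrt{T}$ so the step-size increments collapse to a single $1/\eta_{t_2}$, the $2L$ bound on the squared Euclidean diameter of the occupancy-measure set, and the cancellation of one factor of $\|\ell_t\|_\infty$ against $\overline{\ell}_t$ in the gradient-norm term to obtain $U_2=|X||A|$. The only (immaterial) difference is that you bound the diameter via $\|\widehat{q}_t-q\|_\infty\|\widehat{q}_t-q\|_1\leq 2L$ while the paper drops the nonnegative cross term and uses $\|q\|_2^2\leq\|q\|_1\leq L$; both yield the same constant.
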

	\begin{proof}
		By the standard analysis of Projected Online Gradient Descent [Lemma 2.12 \cite{Orabona}] we have:
		\begin{align*}
			\ell_t^{\top}(\widehat{q}_t-q) \leq \frac{1}{2\eta_t}||\widehat{q}_t-q||_2^2-\frac{1}{2\eta_t}||\widehat{q}_{t+1}-q||_2^2 + \frac{\eta_t}{2}||\ell_t||_2^2.
		\end{align*}
		Observe that for any two occupancy measures $q_1,q_2$ it holds:
		\begin{align*}
			||q_1-q_2||_2^2 &\leq ||q_1||_2^2+||q_2||_2^2\\
			&\leq ||q_1||_1+||q_2||_1 \\
			&\leq 2L,
		\end{align*}
		where the second Inequality follows from $q(x,a) \in [0,1] \ \ \forall x,a$.
		Then, summing over the interval [$t_1$.. $t_2$] we get:
		\begin{align}
			\sum_{t=t_1}^{t_2}\ell_t^{\top}(\widehat{q}_t-q)\leq &\frac{1}{2\eta_{t_1}}||\widehat{q}_{t_1}-q||_2^2\underbrace{-\frac{1}{2\eta_{t_2}}||\widehat{q}_{t_2+1}-q||_2^2}_{\leq 0}\nonumber\\
			&+\frac{1}{2}\sum_{t=t_1}^{t_2-1}\left(\frac{1}{\eta_{t+1}}-\frac{1}{\eta_t}\right)||\widehat{q}_{t+1}-q||_2^2+\frac{1}{2}\sum_{t=t_1}^{t_2}\eta_t||\ell_t||_2^2 \nonumber\\
			\leq &\frac{L}{\eta_{t_1}}+L\sum_{t=t_1}^{t_2-1}\left(\frac{1}{\eta_{t+1}}-\frac{1}{\eta_t}\right)+\frac{1}{2C\sqrt{T}}\sum_{t=t_1}^{t_2}\frac{1}{\overline{\ell}_t} \sum_{x,a}\ell_t(x,a)^2 \label{eq:secondREG}\\
			\leq &\frac{L}{\eta_{t_1}}+L\underbrace{\sum_{t=t_1}^{t_2-1}\left(\frac{1}{\eta_{t+1}}-\frac{1}{\eta_t}\right)}_{=\frac{1}{\eta_{t_2}}-\frac{1}{\eta_{t_1}}}+\frac{1}{2C\sqrt{T}}\sum_{t=t_1}^{t_2}\underbrace{\frac{||\ell_t||_\infty}{\max \{||\ell_\tau||_\infty \}_{\tau=1}^{t}}}_{\leq 1}||\ell_t||_\infty \sum_{x,a}1\nonumber\\
			\leq &L\overline{\ell}_{t_2}C\sqrt{T}+\frac{|X||A|}{2}\overline{\ell}_{t_1,t_2}\frac{(t_2-t_1+1)}{C\sqrt{T}} \label{eq:fourthREG},
		\end{align}
		
		where Inequality~\eqref{eq:secondREG} follows from the definition of $\eta_t$, and from  $\eta_t > \eta_{t+1}$, while Inequality~\eqref{eq:fourthREG} comes from the telescopic sum over $[t_1.. t_2]$ and from the definition of $\eta_{t_2}$.
		
	\end{proof}
	
	
	\section{Omitted Proof of Section~\ref{sec:Theoretical Results}}
	\label{App:Theoretical_results}
	
	\subsection{Interval Regrets}
	In this section, we show the Interval Regrets, attained by both primal and dual player, in our specific framework.
	\subsubsection{Interval Regret of the dual}
	In this subsection, we show the Interval Regret obtained by dual player. Recall that the dual variables are updated with Projected Online Gradient Descent as shown in~\eqref{update:dual} or equivalently:
	\begin{equation}
		\label{eq:dualGradientUpdate}
		\lambda_{t+1,i}=\min \left\{\max \left \{0,\lambda_{t,i}+\eta [G_t^\top]_i \widehat{q}_t \right \}, T^{1/4} \right \},
	\end{equation}
	with $\eta=\left[K\sqrt{T\ln\left(\frac{T^2}{\delta}\right)}\right]^{-1}$.\\
	Let 
	\begin{equation*}
		R^{\textsf{D}}_{t_1,t_2}(\lambda):=\sum_{t=t_1}^{t_2} \left( \lambda - \lambda_t \right)^\top G_t^\top \widehat{q}_t
	\end{equation*}
	denote the regret accumulated by OGD from episode $t_1$ to episode $t_2$ with respect to the constant multiplier $\lambda$.
	By standard analysis of OGD \cite{Orabona} we have that:
	\begin{align*}
		R^{\textsf{D}}_{t_1,t_2}(\lambda)&\leq \frac{||\lambda_{t_1}-\lambda||_2^2}{2\eta}+\frac{\eta}{2}\sum_{t=t_1}^{t_2} ||G_t^\top \widehat{q}_t||_2^2.
	\end{align*}
	\noindent We can upper-bound the quantity $||G_t^\top \widehat{q}_t||_2^2$ as:
	\begin{equation*}
		||G_t^\top \widehat{q}_t||_2^2=\sum_{i=1}^m\left(\sum_{x,a}g_{t,i}(x,a)\widehat{q}_t(x,a)\right)^2 \leq \sum_{i=1}^m\left(\sum_{x,a}\widehat{q}_t(x,a)\right)^2\leq mL^2,
	\end{equation*}
	obtaining:
	\begin{equation*}
		R^{\textsf{D}}_{t_1,t_2}(\lambda)\leq D_1\frac{||\lambda_{t_1}-\lambda||_2^2}{\eta}+D_2\eta (t_2-t_1+1),
	\end{equation*}
	with $D_1=\frac{1}{2}$, $D_2=\frac{mL^2}{2}$.

	We bound the distance between lagrange multipliers for consecutive episodes.
	\begin{lemma}
		\label{lem:dual1NormIncrementUpperBound}
		If the dual player employs Projected Online Gradient Descent as in Update~\eqref{eq:dualGradientUpdate}, it holds:
		\begin{equation*}||\lambda_{t+1}||_1 -||\lambda_t||_1  \leq m\eta L.\end{equation*}
	\end{lemma}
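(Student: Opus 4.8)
The plan is to reduce the claim to a per-coordinate estimate, exploiting that the dual decision space $\mathcal{D}=[0,T^{1/4}]^m$ is an axis-aligned box. First I would note that the Euclidean projection $\Pi_{\mathcal{D}}$ onto such a box acts coordinate-wise as the clipping map appearing in Update~\eqref{eq:dualGradientUpdate}, and that every iterate satisfies $\lambda_{t,i}\in[0,T^{1/4}]$, so in particular all coordinates are nonnegative. Consequently the $\ell_1$ norm is simply the sum of coordinates, and
$$\|\lambda_{t+1}\|_1-\|\lambda_t\|_1=\sum_{i=1}^m\bigl(\lambda_{t+1,i}-\lambda_{t,i}\bigr),$$
so it suffices to upper bound each increment $\lambda_{t+1,i}-\lambda_{t,i}$ separately.

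For the core step I would use that clipping into $[0,T^{1/4}]$ (equivalently, projecting onto an interval) is $1$-Lipschitz, hence non-expansive. Writing $c(\cdot):=\min\{\max\{0,\cdot\},T^{1/4}\}$, the update reads $\lambda_{t+1,i}=c\bigl(\lambda_{t,i}+\eta[G_t^\top]_i\widehat{q}_t\bigr)$, while $\lambda_{t,i}=c(\lambda_{t,i})$ because $\lambda_{t,i}\in[0,T^{1/4}]$. Non-expansiveness then yields, for every $i$,
$$\lambda_{t+1,i}-\lambda_{t,i}=c\bigl(\lambda_{t,i}+\eta[G_t^\top]_i\widehat{q}_t\bigr)-c(\lambda_{t,i})\leq \eta\,\bigl|[G_t^\top]_i\widehat{q}_t\bigr|.$$
If one prefers to avoid invoking Lipschitzness, the same bound follows from a short case analysis on the sign of $[G_t^\top]_i\widehat{q}_t$ and on which side of the box is active.

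It then remains to control the magnitude of the dual gradient component. Using $g_{t,i}(x,a)\in[-1,1]$ together with the fact that $\widehat{q}_t$ is a valid occupancy measure, whose mass sums to $L$ over the $L$ layers, i.e.\ $\sum_{x,a}\widehat{q}_t(x,a)=L$, I would bound
$$\bigl|[G_t^\top]_i\widehat{q}_t\bigr|=\Bigl|\sum_{x,a}g_{t,i}(x,a)\widehat{q}_t(x,a)\Bigr|\leq \sum_{x,a}\widehat{q}_t(x,a)=L.$$
Summing the per-coordinate bound over $i\in[m]$ then gives $\|\lambda_{t+1}\|_1-\|\lambda_t\|_1\leq m\eta L$, as claimed.

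I expect the only delicate point to be the per-coordinate increment bound: one must handle the clipping carefully, noting that the lower truncation at $0$ can only decrease a coordinate and hence never contributes a positive increment. This is precisely what the non-expansiveness argument packages cleanly, so that the potentially error-prone sign bookkeeping is avoided; the remaining estimates, namely the decoupling of the box projection and the $\ell_\infty$/mass bound on the gradient, are routine.
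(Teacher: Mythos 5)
Your proposal is correct and follows essentially the same route as the paper: both arguments work coordinate-wise, bound the dual gradient component $\bigl|[G_t^\top]_i\widehat{q}_t\bigr|$ by $L$ using $g_{t,i}(x,a)\in[-1,1]$ and the total mass $L$ of the occupancy measure, and then sum over the $m$ coordinates. The only cosmetic difference is that you invoke non-expansiveness of the interval projection where the paper simply drops the upper truncation and observes that $\max\{0,\lambda_{t,i}+\eta L\}=\lambda_{t,i}+\eta L$; both handle the clipping equivalently.
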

	\begin{proof}
		Since the dual minimizer is performing projected gradient descent with learning rate $\eta$, and the gradient of the Lagrangian at time $t$ with respect to $\lambda$ is equal to $\widehat{q}_t^\top G_t^\top$, element-wise it holds that:
		\begin{align*}
			\lambda_{t+1,i}&= \min\left \{\max \left \{0,\lambda_{t,i}+\eta [G_t^\top]_i\widehat{q}_t \right \}, T^\frac{1}{4} \right \}\\
			&\leq \max \left \{0,\lambda_{t,i}+\eta [G_t^\top]_i\widehat{q}_t \right \}\\
			&\leq \max \left \{0,\lambda_{t,i}+\eta ||[G_t^\top]_i||_\infty||\widehat{q}_t||_1 \right \}\\
			&\leq \max \left\{0,\ \lambda_{t,i}+\eta L\right\}\\
			&=\lambda_{t,i}+\eta L,
		\end{align*}
		Thus,
		\begin{align*}
			||\lambda_{t+1}||_1 -||\lambda_t||_1 &= \sum_{i=1}^m \lambda_{t+1,i} - \sum_{i=1}^m \lambda_{t,i} \leq \sum_{i=1}^m \lambda_{t,i} + \sum_{i=1}^m\eta L - \sum_{i=1}^m \lambda_{t,i} = m\eta L.
		\end{align*}
	\end{proof}
	
	\subsubsection{Interval Regret of the primal}
	We restate Lemma~\ref{lem:REG}:
	\PrimalBoundRegLemma*
	Let 
	$$
	\lambda_{t_1,t_2}:= \max \{||\lambda_t||_1\}_{t=t_1}^{t_2}.
	$$
	Then it holds $\overline{\ell}_{t_1,t_2} \leq 1+\lambda_{t_1,t_2}$ and we can restate the interval regret of the primal in terms of the $1$-norm of the Lagrange multipliers as:
	\begin{equation}
		\label{eq:primalintervalregret}
		\sum_{t=t_1}^{t_2}{r^{\mathcal{L}}_t}^{\top}(q-\widehat{q}_t) \leq U_1\frac{(1+\lambda_{1,t_2})}{2}C\sqrt{T}+U_2\frac{(1+\lambda_{t_1,t_2})}{2}\frac{(t_2-t_1+1)}{C\sqrt{T}}.
	\end{equation}
	
	\subsection{Bound on the Lagrange multipliers}
	We prove Theorem~\ref{thm:lambdaBoundBoth}, which we restate for convenience.
	
	\lambdaBoundBoth*
	
	\begin{proof}
		Suppose event \nameref{evt:confidenceSetsHold} holds. If the constraints are stochastic, suppose event \nameref{evt:hoeffdingConstraintsQFeasible} holds too. Let $M>1$ be a constant. We prove the statement by absurd. Suppose by absurd that there exists $t_2\in [T]$ such that: \begin{equation*}\forall t \leq t_2\ \ \ \ ||\lambda_{t}||_1\leq \frac{2LM}{\rho^2}\ \ \ \ \ \ \ \land \ \ \ \ \ \ \   ||\lambda_{t_2+1}||_1>\frac{2LM}{\rho^2}\end{equation*} and let $t_1 < t_2$ be such that: \begin{equation*}||\lambda_{t_1-1}||_1\leq \frac{2L}{\rho} \ \ \ \ \ \ \ \land  \ \ \ \ \ \ \  \forall t: t_1\leq t \leq t_2 \ \ \ \ ||\lambda_{t}||_1\geq \frac{2L}{\rho}.\end{equation*}
		By construction it holds that $1 <\frac{2L}{\rho}\leq||\lambda_t||_1\leq\frac{2LM}{\rho^2}$ for all $t_1\leq t \leq t_2$. Also notice that by Lemma~\ref{lem:dual1NormIncrementUpperBound}, for $\eta \leq \frac{1}{mL}$ it holds that: \begin{equation*}||\lambda_{t_1}||_1 \leq ||\lambda_{t_1-1}||_1 + m\eta L \leq \frac{2L}{\rho}+m\eta L \leq \frac{4L}{\rho}.\end{equation*}
		
		Focus on the quantity $\sum_{t=t_1}^{t_2}-\lambda_t^\top G_t^\top q^{\circ}$: in the stochastic constraint setting we have, under the event \nameref{evt:hoeffdingConstraintsQFeasible}:
		\begin{align*}
			\sum_{t=t_1}^{t_2}-\lambda_t^\top G_t^\top q^{\circ} &\geq \sum_{t=t_1}^{t_2}-\lambda_t^\top \overline{G}^\top q^{\circ} -\lambda_{t_1,t_2} \mathcal{E}^G_{t_1,t_2}\\
			&\geq \sum_{t=t_1}^{t_2}\sum_{i=1}^m-\lambda_{t,i} \left[\overline{G}^\top q^{\circ}\right]_i -\lambda_{t_1,t_2} \mathcal{E}^G_{t_1,t_2}\\
			&\geq \rho\sum_{t=t_1}^{t_2}\sum_{i=1}^m\lambda_{t,i}  -\lambda_{t_1,t_2} \mathcal{E}^G_{t_1,t_2}\\
			&= \rho\sum_{t=t_1}^{t_2}||\lambda_t||_1  -\lambda_{t_1,t_2} \mathcal{E}^G_{t_1,t_2}\\
			&\geq \rho\frac{2L}{\rho}(t_2-t_1+1) -\lambda_{t_1,t_2} \mathcal{E}^G_{t_1,t_2}\\
			&= 2L(t_2-t_1+1) -\lambda_{t_1,t_2} \mathcal{E}^G_{t_1,t_2}.
		\end{align*}
		While in the adversarial setting it holds:
		\begin{align*}
			\sum_{t=t_1}^{t_2}-\lambda_t^\top G_t^\top q^{\circ} &\geq \sum_{t=t_1}^{t_2}\sum_{i=1}^m-\lambda_{t,i} \left[G_t^\top q^{\circ}\right]_i\\
			&\geq \rho\sum_{t=t_1}^{t_2}\sum_{i=1}^m\lambda_{t,i}\\
			&=\rho\sum_{t=t_1}^{t_2}||\lambda_t||_1\\
			&\geq \rho\frac{2L}{\rho}(t_2-t_1+1)\\
			&= 2L(t_2-t_1+1).
		\end{align*}
		In particular, we have that
		\begin{equation*}
			\sum_{t=t_1}^{t_2}-\lambda_t^\top G_t^\top q^{\circ} \geq 2L(t_2-t_1+1) -\lambda_{t_1,t_2} \mathcal{E}^G_{t_1,t_2}
		\end{equation*}
		is true in both settings under the required events.\\
		
		We can lower bound the cumulative value of the Lagrangian function, namely ${r_t^{\mathcal{L}}}^{\top}\widehat{q_t}$, from $t_1$ to $t_2$ by that achievable by the primal minimizer by always playing the feasible occupancy measure $q^{\circ}$:
		
		\begin{align*}
			\sum_{t=t_1}^{t_2}{r^{\mathcal{L}}_t}^\top \widehat{q}_t&= \sum_{t=t_1}^{t_2}{r^{\mathcal{L}}_t}^\top q^{\circ} - \sum_{t=t_1}^{t_2}{r^{\mathcal{L}}_t}^{\top}(q^{\circ}-\widehat{q}_t) \nonumber\\ &=\underbrace{\sum_{t=t_1}^{t_2}r_t^\top q^{\circ}}_{\geq 0}+\sum_{t=t_1}^{t_2}-\lambda_t^\top G_t^\top q^{\circ} - \sum_{t=t_1}^{t_2}{r^{\mathcal{L}}_t}^{\top}(q^{\circ}-\widehat{q}_t)\nonumber\\
			&\geq 2L(t_2-t_1+1) - \lambda_{t_1,t_2}\mathcal{E}^G_{t_1,t_2,\delta} - \sum_{t=t_1}^{t_2}{r^{\mathcal{L}}_t}^{\top}(q^{\circ}-\widehat{q}_t).
		\end{align*}
		Applying Lemma~\ref{lem:REG} and observing that by construction $1 \leq \lambda_{t_1,t_2}\leq \frac{2LM}{\rho^2}$, we can bound $1+\lambda_{t_1,t_2}\leq  \frac{4LM}{\rho^2}$ and obtain:
		\begin{equation*}
			\sum_{t=t_1}^{t_2}{r^{\mathcal{L}}_t}^\top \widehat{q}_t \geq 2L(t_2-t_1+1) -\frac{2LM}{\rho^2}\mathcal{E}^G_{t_1,t_2,\delta} -U_1\frac{2LM}{\rho^2}C\sqrt{T} - U_2 \frac{2LM}{\rho^2}\frac{(t_2-t_1+1)}{C\sqrt{T}},
		\end{equation*} since under \nameref{evt:confidenceSetsHold} we have that $q^\circ \in \cap_i \Delta\left(\mathcal{P}_i\right)$. 
		
		\noindent We can upper-bound the same quantity with the value achievable by the dual by always playing a vector of zeroes.
		\begin{align*}
			\sum_{t=t_1}^{t_2}{r^{\mathcal{L}}_t}^\top \widehat{q}_t &=\sum_{t=t_1}^{t_2}r_t^\top \widehat{q}_t -\sum_{t=t_1}^{t_2}\lambda_t^\top G_t^\top \widehat{q}_t\\
			&\leq \sum_{t=t_1}^{t_2}r_t^\top \widehat{q}_t -\sum_{t=t_1}^{t_2}\underline{0}^\top G_t^\top \widehat{q}_t + R^{\textsf{D}}_{t_1,t_2}(\underline{0})\\
			&\leq \sum_{t=t_1}^{t_2}L+ D_1\frac{||\lambda_{t_1}||_2^2}{\eta}+D_2\eta (t_2-t_1+1)\\
			&\leq \sum_{t=t_1}^{t_2}L+ D_1\frac{||\lambda_{t_1}||_1^2}{\eta}+D_2\eta (t_2-t_1+1)\\
			&\leq L(t_2-t_1+1)+ D_3\frac{L^2}{\rho^2 \eta}+D_2\eta (t_2-t_1+1),
		\end{align*}
		with $D_3=4D_1$.
		
		\noindent Combining the bounds on the cumulative value of the Lagrangian, we have:
		\begin{align*}
			2L(t_2-t_1+1) -\frac{2LM}{\rho^2}\mathcal{E}^G_{t_1,t_2,\delta} -&U_1\frac{2LM}{\rho^2}C\sqrt{T} - U_2 \frac{2LM}{\rho^2}\frac{(t_2-t_1+1)}{C\sqrt{T}}\\
			&\leq\\ L(t_2-t_1+1)+ D_3&\frac{L^2}{\rho^2 \eta}+D_2\eta (t_2-t_1+1).
		\end{align*}
		Observing that $\mathcal{E}^G_{t_1,t_2,\delta}=2L\sqrt{2(t_2-t_1+1)\ln\left(\frac{T^2}{\delta}\right)}\leq U_3 l_1 \sqrt{t_2-t_1+1}$ with $l_1=\sqrt{\ln\left(\frac{T^2}{\delta}\right)}$ and $U_3=2L\sqrt{2}$ and rearranging the terms we obtain:
		\begin{align*}
			L(t_2-t_1+1) \leq \ \  & U_3\frac{2LM}{\rho^2}l_1\sqrt{t_2-t_1+1}\ +\\
			&+ U_1 \frac{2LM}{\rho^2}C\sqrt{T}\ +\\
			&+ U_2 \frac{2LM}{\rho^2}\frac{(t_2-t_1+1)}{C\sqrt{T}}\ +\\
			&+D_2\eta (t_2-t_1+1)\ +\\
			&+D_3\frac{1}{\eta}\frac{L^2}{\rho^2}.
		\end{align*}
		We will make use of the following lemma:
		\begin{lemma}
			\label{lem:t2minust1lowerboundnewadapt}
			For $\eta \leq \frac{1}{mL}$ and $\frac{M}{\rho} > 4$ it holds:
			\begin{equation*}
				(t_2-t_1+1)> \frac{M}{\rho^2 m \eta}.
			\end{equation*}
		\end{lemma}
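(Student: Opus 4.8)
The plan is to prove the bound by a telescoping argument on the one-norm of the Lagrange multipliers across the interval $[t_1 \ldots t_2]$, exploiting the fact that each dual update can increase $\|\lambda_t\|_1$ by only a controlled amount. The two ingredients are the per-step increment bound $\|\lambda_{t+1}\|_1 - \|\lambda_t\|_1 \leq m\eta L$ from Lemma~\ref{lem:dual1NormIncrementUpperBound}, and the endpoint information supplied by the surrounding contradiction setup: at the left end we have $\|\lambda_{t_1}\|_1 \leq \tfrac{4L}{\rho}$ (derived just above, using $\|\lambda_{t_1-1}\|_1 \leq \tfrac{2L}{\rho}$ together with one increment step and $\eta \leq \tfrac{1}{mL}$), while at the right end the absurd hypothesis gives $\|\lambda_{t_2+1}\|_1 > \tfrac{2LM}{\rho^2}$.

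First I would write $\|\lambda_{t_2+1}\|_1 - \|\lambda_{t_1}\|_1$ as the telescoping sum $\sum_{t=t_1}^{t_2}\bigl(\|\lambda_{t+1}\|_1 - \|\lambda_t\|_1\bigr)$, which contains exactly $t_2 - t_1 + 1$ terms, and bound each term by $m\eta L$ via Lemma~\ref{lem:dual1NormIncrementUpperBound}. This yields $\|\lambda_{t_2+1}\|_1 - \|\lambda_{t_1}\|_1 \leq (t_2 - t_1 + 1)\, m\eta L$. Substituting the two endpoint bounds gives $(t_2 - t_1 + 1)\, m\eta L > \tfrac{2LM}{\rho^2} - \tfrac{4L}{\rho}$, and dividing through by $m\eta L$ leaves $(t_2 - t_1 + 1) > \tfrac{2M}{\rho^2 m\eta} - \tfrac{4}{\rho m\eta}$.

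The final step is to absorb the lower-order correction using the hypothesis $\tfrac{M}{\rho} > 4$: since $\tfrac{2M}{\rho^2 m\eta} - \tfrac{4}{\rho m\eta} - \tfrac{M}{\rho^2 m\eta} = \tfrac{M - 4\rho}{\rho^2 m\eta}$, which is strictly positive precisely when $M > 4\rho$, we conclude $(t_2 - t_1 + 1) > \tfrac{M}{\rho^2 m\eta}$, as claimed. I do not expect a genuine obstacle here, as the argument is essentially bookkeeping. The only points requiring care are (i) that the telescope has exactly $t_2 - t_1 + 1$ summands, so the step count matches the target denominator, and (ii) that the condition $\tfrac{M}{\rho} > 4$ is exactly strong enough to dominate the $\tfrac{4}{\rho m\eta}$ term — which is presumably why this particular threshold is imposed in the hypothesis.
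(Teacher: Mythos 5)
Your proposal is correct and follows essentially the same route as the paper's proof: telescope the per-step increment bound of Lemma~\ref{lem:dual1NormIncrementUpperBound} over $[t_1 \ldots t_2]$, plug in the endpoint bounds $\|\lambda_{t_1}\|_1 \leq \frac{4L}{\rho}$ and $\|\lambda_{t_2+1}\|_1 > \frac{2LM}{\rho^2}$ from the contradiction setup, and use $\frac{M}{\rho} > 4$ to absorb the lower-order term. The only difference is cosmetic algebra in the final absorption step.
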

		\begin{proof}
			By Lemma~\ref{lem:dual1NormIncrementUpperBound} we have:
			\begin{equation*}
				\sum_{t=t_1}^{t_2}\left(||\lambda_{t+1}||_1 -||\lambda_t||_1\right) \leq \sum_{t=t_1}^{t_2} m\eta L, \end{equation*}
			which, since the sum in the LHS is telescopic, implies:
			\begin{equation*}
				||\lambda_{t_2+1}||_1-||\lambda_{t_1}||_1 \leq (t_2-t_1+1)m\eta L.\end{equation*}
			Also note that:
			\begin{equation*}\frac{2LM}{\rho^2}-\frac{4L}{\rho} \leq ||\lambda_{t_2+1}||_1-||\lambda_{t_1}||_1.\end{equation*}
			Rearranging the terms, we obtain, for $\frac{M}{\rho}>4$:
			\begin{equation*}
				\frac{M}{\rho^2 m \eta}< \frac{2L(\frac{M}{\rho}-2)}{\rho m\eta L} \leq (t_2-t_1+1).
			\end{equation*}
		\end{proof}
		Applying Lemma~\ref{lem:t2minust1lowerboundnewadapt} we show that the above leads to a contradiction for some choices of $C$, $M$ and $\eta$, namely, we show that:
		\begin{align*}
			L(t_2-t_1+1) > \ \ \  & U_3\frac{2LM}{\rho^2}l_1\sqrt{t_2-t_1+1}\ +&(1)\\
			&+ U_1 \frac{2LM}{\rho^2}C\sqrt{T}\ + &(2)\\
			&+ U_2 \frac{2LM}{\rho^2}\frac{(t_2-t_1+1)}{C\sqrt{T}}\ + &(3)\\
			&+D_2\eta (t_2-t_1+1)\ + &(4)\\
			&+D_3\frac{1}{\eta}\frac{L^2}{\rho^2}. &(5)
		\end{align*}
		In the followings, we prove that each of the terms on the RHS is upper bounded by $\frac{1}{5}L(t_2-t_1+1)$:
		\begin{enumerate}
			\item
			By trivial computations and applying Lemma~\ref{lem:t2minust1lowerboundnewadapt}:
			\begin{align*}
				\frac{1}{5}L(t_2-t_1+1) &> U_3\frac{2LM}{\rho^2}l_1\sqrt{T}\geq U_3\frac{2LM}{\rho^2}l_1\sqrt{t_2-t_1+1}\\
				(t_2-t_1+1) &> U_3\frac{10M}{\rho^2}l_1\sqrt{T}\\
				(t_2-t_1+1) &> \frac{M}{\rho^2 m \eta} \geq U_3\frac{10M}{\rho^2}l_1\sqrt{T}\\
				\frac{1}{ m \eta } &\geq 10U_3l_1\sqrt{T},
			\end{align*}
			which is ensured by:
			$$
			\boxed{\eta \leq \frac{1}{10mU_3l_1\sqrt{T}}}
			$$
			\item Then applying again Lemma \ref{lem:t2minust1lowerboundnewadapt}:
			\begin{align*}
				\frac{1}{5}L(t_2-t_1+1) &>U_1 \frac{2LM}{\rho^2}C\sqrt{T}\\
				(t_2-t_1+1) &>\frac{M}{\rho^2 m \eta} \geq 10U_1 \frac{M}{\rho^2} C\sqrt{T},
			\end{align*}
			which is true for:
			\begin{equation*}
				\boxed{\eta \leq \frac{1}{ 10mU_1  C\sqrt{T}}}\end{equation*}
			\item
			We solve the third term with respect to $C$.
			\begin{align*}
				\frac{1}{5}L(t_2-t_1+1) &\geq U_2 \frac{2LM}{\rho^2}\frac{(t_2-t_1+1)}{C\sqrt{T}},
			\end{align*}
			which is ensured by:
			\begin{equation*}
				\boxed{C \geq 10 U_2 \frac{M}{\rho^2}\frac{1}{\sqrt{T}}}
			\end{equation*}
			\item
			\begin{align*}
				\frac{1}{5}L(t_2-t_1+1) &> D_2\eta (t_2-t_1+1)\\
				\frac{1}{5}L &> D_2\eta,
			\end{align*}
			which is ensured by:
			\begin{equation*}
				\boxed{\eta < \frac{L}{5D_2}}
			\end{equation*}
			\item
			Applying Lemma~\ref{lem:t2minust1lowerboundnewadapt}, we solve the Inequality with respect to M:
			\begin{align*}
				\frac{1}{5}L(t_2-t_1+1) &> D_3\frac{1}{\eta}\frac{L^2}{\rho^2}\\
				(t_2-t_1+1) &> \frac{M}{\rho^2 m \eta} \geq 5D_3\frac{1}{\eta}\frac{L}{\rho^2}\\
				\frac{M}{m} &\geq 5D_3L,
			\end{align*}
			from which:
			\begin{equation*}
				\boxed{M\geq 5mD_3L}\end{equation*}
		\end{enumerate}
		We recall all the constants: $D_2=\frac{mL^2}{2}$, 
		$D_3=2$, $U_1=2L$, $U_2=|X||A|$, $U_3=2L\sqrt{2}$.
		We choose $M=10mL$ and recall Condition~\ref{ass:rhoassumption}:
		\begin{align*}
			\rho \geq T^{-\frac{1}{8}}L\sqrt{20m} \ \ \Rightarrow \ 
			\ \frac{20mL^2}{\rho^2}\leq T^\frac{1}{4} \leq \sqrt{T}.
		\end{align*}
		We now focus on the condition on $C$:
		\begin{align*}
			C &\geq 10 U_2 \frac{10mL}{\rho^2}\frac{1}{\sqrt{T}}\\
			&=5 \frac{U_2}{L} \frac{20mL^2}{\rho^2}\frac{1}{\sqrt{T}}
		\end{align*}
		is thus always ensured by $C =5 \frac{U_2}{L}$.
		The conditions on $\eta$ are satisfied if:
		\begin{align*}
			\eta &\leq \min\left\{\frac{L}{5D_2},\ \frac{1}{10 m U_1  C\sqrt{T}}, \ \frac{1}{10mU_3l_1\sqrt{T}}\right\}.
		\end{align*}
		Observe that:
		\begin{align*}
			&\min\left\{\frac{L}{5D_2},\ \frac{1}{10 m U_1  C\sqrt{T}}, \ \frac{1}{10mU_3l_1\sqrt{T}}\right\}\\
			&=\min\left\{\frac{1}{2.5mL},\ \frac{1}{10 m U_1  \left(\frac{5U_2}{L}\right)\sqrt{T}}, \ \frac{1}{20\sqrt{2}mLl_1\sqrt{T}}\right\}.
		\end{align*}
		If we plug in the value of $l_1$, leads to the choice:
		\begin{equation*}
			\eta = \frac{1}{50m\max \left\{\frac{U_1U_2}{L},\ L\right\}\sqrt{T\ln\left( \frac{T^2}{\delta} \right)}}.
		\end{equation*}
		The remaining conditions $\frac{M}{\rho}>4$, $\eta \leq \frac{1}{mL}$ are trivially satisfied. Summing the conditions $(1-5)$ proves the contradiction.
		
		If we plug the values of $U_1$ and $U_2$ corresponding to UC-O-GDPS, we have $\max \left\{\frac{U_1U_2}{L},\ L\right\}=\max \left\{2|X||A|,\ L\right\}=2|X||A|$ and thus obtain:
		\begin{equation*}
			\eta = \frac{1}{100m|X||A|\sqrt{T\ln\left( \frac{T^2}{\delta} \right)}}.
		\end{equation*}
	\end{proof}

	\subsection{Analysis with Stochastic constraints}
	\subsubsection{Lower bound on the dual cumulative utility}
	We start proving a useful Lemma in which we lower bound the dual cumulative utility. This Lemma holds both for the stochastic constraints and the adversarial constraint setting.
	\begin{lemma}
		\label{lem:lagrangianDualUtilityLowerBoth}
		Under the event \nameref{evt:confidenceQMinusQHat}, the cumulative dual utility $\sum_{t=1}^T \lambda_t^\top G_t^\top q_t$ is lower bounded as:
		\begin{equation*}
			\sum_{t=1}^T \lambda_t^\top G_t^\top q_t \geq -\lambda_{1,T}\mathcal{E}^q_{\delta}-R^{\textnormal{\textsf{D}}}_T(\underline{0}),
		\end{equation*}
		where $\lambda_{t_1,t_2}:= \max \{\|\lambda_t\|_1\}_{t=t_1}^{t_2}$.
	\end{lemma}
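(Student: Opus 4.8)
The plan is to reduce the quantity of interest, which involves the \emph{true} occupancy measures $q_t$, to the corresponding quantity involving the \emph{estimated} occupancy measures $\widehat{q}_t$ on which the dual player actually operates, paying only the occupancy-estimation error that \nameref{evt:confidenceQMinusQHat} keeps under control. First I would add and subtract $\widehat{q}_t$ inside the sum, writing
$$
\sum_{t=1}^T \lambda_t^\top G_t^\top q_t = \sum_{t=1}^T \lambda_t^\top G_t^\top \widehat{q}_t + \sum_{t=1}^T \lambda_t^\top G_t^\top (q_t - \widehat{q}_t).
$$

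The first term is handled exactly. Recalling the definition $R^{\textsf{D}}_{t_1,t_2}(\lambda) = \sum_{t=t_1}^{t_2}(\lambda - \lambda_t)^\top G_t^\top \widehat{q}_t$ and instantiating the comparator at $\lambda = \underline{0}$ gives $R^{\textsf{D}}_T(\underline{0}) = -\sum_{t=1}^T \lambda_t^\top G_t^\top \widehat{q}_t$, so the first term equals $-R^{\textsf{D}}_T(\underline{0})$ with no slack. This is the step where one must be careful to recognise that the term is \emph{exactly} $-R^{\textsf{D}}_T(\underline{0})$, not merely bounded by it.

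The second term I would lower bound by $-\sum_{t=1}^T \left| \lambda_t^\top G_t^\top (q_t - \widehat{q}_t) \right|$ and then control each summand by a H\"older-type estimate. Since $\lambda_t \geq \underline{0}$ and every entry of $G_t$ lies in $[-1,1]$, for each constraint $i\in[m]$ we have $\left| [G_t^\top (q_t - \widehat{q}_t)]_i \right| = \left| \sum_{x,a} g_{t,i}(x,a)(q_t(x,a) - \widehat{q}_t(x,a)) \right| \leq \|q_t - \widehat{q}_t\|_1$; summing against the nonnegative weights $\lambda_{t,i}$ yields $\left| \lambda_t^\top G_t^\top (q_t - \widehat{q}_t) \right| \leq \|\lambda_t\|_1 \, \|q_t - \widehat{q}_t\|_1$. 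Bounding $\|\lambda_t\|_1 \leq \lambda_{1,T}$ uniformly over $t$ and invoking \nameref{evt:confidenceQMinusQHat} to replace $\sum_{t=1}^T \|q_t - \widehat{q}_t\|_1$ by $\mathcal{E}^q_{\delta}$ gives $\sum_{t=1}^T \left| \lambda_t^\top G_t^\top (q_t - \widehat{q}_t) \right| \leq \lambda_{1,T}\mathcal{E}^q_{\delta}$. Combining the two terms delivers the claimed bound $\sum_{t=1}^T \lambda_t^\top G_t^\top q_t \geq -\lambda_{1,T}\mathcal{E}^q_{\delta} - R^{\textsf{D}}_T(\underline{0})$.

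Honestly there is no serious obstacle here; the argument is a routine decomposition plus error control. The only point requiring minor care is the norm bound, where one must use both $\lambda_t \geq \underline{0}$ and $\|G_t\|_\infty \leq 1$ to pass from $\lambda_t^\top G_t^\top$ applied to the occupancy-measure error to the clean product $\|\lambda_t\|_1 \|q_t - \widehat{q}_t\|_1$, so that the single event \nameref{evt:confidenceQMinusQHat} suffices to close the bound.
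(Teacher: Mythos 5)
Your proposal is correct and follows essentially the same route as the paper's proof: the same decomposition into $\sum_t \lambda_t^\top G_t^\top \widehat{q}_t$ plus the estimation-error term, the same identification of the first sum with $-R^{\textsf{D}}_T(\underline{0})$ (the paper phrases it via the no-regret inequality against $\underline{0}$, which is an equality here, as you observe), and the same H\"older-type bound $\|\lambda_t\|_1\|q_t-\widehat{q}_t\|_1$ on the error term followed by \nameref{evt:confidenceQMinusQHat}. No gaps.
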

	\begin{proof}
		We exploit the fact that the dual is no-regret with respect to the $\underline{0}$ vector:
		\begin{align*}
			\sum_{t=1}^T \lambda_t^\top G_t^\top q_t &=\sum_{t=1}^T \lambda_t^\top G_t^\top (q_t-\widehat{q}_t) + \sum_{t=1}^T \lambda_t^\top G_t^\top \widehat{q}_t\\
			&\geq \sum_{t=1}^T \lambda_t^\top G_t^\top (q_t-\widehat{q}_t) +\sum_{t=1}^T \underline{0}^\top G_t^\top \widehat{q}_t -R^{\textsf{D}}_T(\underline{0})\\
			&\geq \sum_{t=1}^T  -\underbrace{\|\lambda_t\|_1}_{\leq \lambda_{1,T}} \underbrace{\left\|G_t^\top \right\|_\infty}_{\leq 1} \|q_t-\widehat{q}_t\|_1 -R^{\textsf{D}}_T(\underline{0})\\
			&\geq -\lambda_{1,T}\sum_{t=1}^T  \|q_t-\widehat{q}_t\|_1 -R^{\textsf{D}}_T(\underline{0})\\
			&\geq -\lambda_{1,T} \mathcal{E}^q_{\delta}-R^{\textsf{D}}_T(\underline{0}),
		\end{align*}
		where the last Inequality holds under \nameref{evt:confidenceQMinusQHat}.
	\end{proof}
	\subsubsection{Analysis when Condition~\ref{ass:rhoassumption} holds}
	We start by introducing the notation $\widehat{v}_{t,i}:=[G_t^\top]_i \widehat{q}_t$, that is the violation of the $i$-th constraint incurred by $\widehat{q}_t$. We further denote $\widehat{V}_{t,i}:=\sum_{\tau =1}^t \widehat{v}_{\tau,i}$.
	Observe that, when Condition~\ref{ass:rhoassumption} holds, thanks to Theorem~\ref{thm:lambdaBoundBoth} we have $||\lambda_t||_1\leq T^\frac{1}{4}$ for all $t$ and thus $\lambda_{t,i}\leq T^\frac{1}{4}$. This means that $\lambda_{t,i}$ never gets past the upper extreme and the update of the dual is 
	effectively equivalent to that of OGD working on the set $\mathrm{R}_{\geq0}^m$:
	\begin{equation*}
		\lambda_{t,i}= \max\{ \lambda_{t,i} + \eta \widehat{v}_{t,i},\ 0\}.
	\end{equation*}
	
	\begin{lemma}
		\label{lem:lambdatlowerbound}
		If Condition~\ref{ass:rhoassumption} holds, then for each episode $t\in[T]$ and each constraint $i$ it holds: \begin{equation*}\lambda_{t,i}\ge \eta \widehat{V}_{t-1,i}. \end{equation*}
	\end{lemma}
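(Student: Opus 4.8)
The plan is to establish the bound by a straightforward induction on $t$, after first reducing the dual update to its one-sided form. The crucial preliminary observation — already recorded in the text immediately preceding the statement — is that, under Condition~\ref{ass:rhoassumption}, Theorem~\ref{thm:lambdaBoundBoth} guarantees $\|\lambda_t\|_1 \le \zeta \le T^{1/4}$ for every $t$, so the upper clipping at $T^{1/4}$ in the \texttt{OGD} update never binds. Consequently the two-sided projection onto $[0,T^{1/4}]^m$ collapses to the one-sided projection onto $\mathbb{R}_{\ge 0}^m$, and the per-coordinate update simplifies to
$$\lambda_{t+1,i} = \max\{\lambda_{t,i} + \eta\, \widehat{v}_{t,i},\ 0\}.$$
I would take this reduced update as the starting point of the argument.

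For the base case $t=1$, the initialization $\lambda_1 = \underline{0}$ gives $\lambda_{1,i}=0$, while $\widehat{V}_{0,i}$ is an empty sum and hence also $0$; thus $\lambda_{1,i} = 0 = \eta\,\widehat{V}_{0,i}$ and the inequality holds. For the inductive step, assume $\lambda_{t,i} \ge \eta\,\widehat{V}_{t-1,i}$. Dropping the outer $\max$ in the reduced update only decreases the right-hand side, so $\lambda_{t+1,i} \ge \lambda_{t,i} + \eta\,\widehat{v}_{t,i}$. Applying the inductive hypothesis together with the defining identity $\widehat{V}_{t,i} = \widehat{V}_{t-1,i} + \widehat{v}_{t,i}$ yields $\lambda_{t+1,i} \ge \eta\,\widehat{V}_{t-1,i} + \eta\,\widehat{v}_{t,i} = \eta\,\widehat{V}_{t,i}$, which closes the induction.

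The only genuinely delicate point is the reduction step itself: one must be certain that the upper clip is truly inactive, since an active clip would replace an \texttt{OGD} increment by the fixed value $T^{1/4}$ and could destroy the monotone accumulation $\lambda_{t+1,i} \ge \lambda_{t,i} + \eta\,\widehat{v}_{t,i}$ on which the entire induction rests. This is exactly what Condition~\ref{ass:rhoassumption} provides, via the bound $\zeta = 20mL^2/\rho^2 \le T^{1/4}$ implied by $\rho \ge T^{-1/8}L\sqrt{20m}$ in combination with Theorem~\ref{thm:lambdaBoundBoth}. Once that reduction is in hand, the remainder is a routine one-line induction requiring no further estimates.
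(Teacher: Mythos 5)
Your proposal is correct and follows essentially the same route as the paper: the text preceding the lemma makes exactly your reduction (under Condition~\ref{ass:rhoassumption}, Theorem~\ref{thm:lambdaBoundBoth} keeps $\lambda_{t,i}$ below $T^{1/4}$, so the update collapses to the one-sided projection), and the paper's proof is then the same induction with base case $\lambda_{1,i}=0=\eta\,\widehat{V}_{0,i}$ and the step $\lambda_{t+1,i}\ge\lambda_{t,i}+\eta\,\widehat{v}_{t,i}\ge\eta\,\widehat{V}_{t,i}$. Your explicit emphasis that the upper clip must be inactive is exactly the point the paper relies on implicitly.
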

	\begin{proof}
		We prove the result by induction. Suppose that the statement holds for episode $t$. Then
		\begin{align*}
			\lambda_{t+1,i}&= \max\{ \lambda_{t,i} + \eta \widehat{v}_{t,i},\ 0\}\\&\ge \lambda_{t,i} + \eta \widehat{v}_{t,i}\\
			&\geq \eta \widehat{V}_{t-1,i} + \eta\widehat{v}_{t,i}\\
			&= \eta \widehat{V}_{t,i}.
		\end{align*}
		Observe that for $t=1$ the statement holds as the sum on the RHS evaluates to $0$.
	\end{proof}
	\begin{lemma}
		\label{lem:fromVHatToV} If Condition~\ref{ass:rhoassumption}
		holds, under the events \nameref{evt:confidenceSetsHold}, \nameref{evt:confidenceQMinusQHat} and \nameref{evt:hoeffdingConstraintsQFeasible} for the stochastic constraint setting and under the events \nameref{evt:confidenceSetsHold} and \nameref{evt:confidenceQMinusQHat} for the adversarial constraints one, it holds:
		\begin{equation*}
			V_T \leq \widehat{V}_{T,i^*}+\mathcal{E}^q_{\delta}.
		\end{equation*}
	\end{lemma}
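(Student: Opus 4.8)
The plan is to set $i^* \in \argmax_{i\in[m]}\sum_{t=1}^T [G_t^\top q_t]_i$, the constraint index that realizes the maximum defining $V_T$, so that $V_T = \sum_{t=1}^T [G_t^\top q_t]_{i^*}$, and then to swap the true occupancy measures $q_t$ for the estimated ones $\widehat{q}_t$ appearing inside $\widehat{V}_{T,i^*}$. Concretely, I would add and subtract $\widehat{q}_t$ inside the sum:
\begin{equation*}
V_T = \sum_{t=1}^T [G_t^\top q_t]_{i^*} = \underbrace{\sum_{t=1}^T [G_t^\top \widehat{q}_t]_{i^*}}_{=\,\widehat{V}_{T,i^*}} + \sum_{t=1}^T [G_t^\top (q_t-\widehat{q}_t)]_{i^*},
\end{equation*}
where the first group is exactly $\widehat{V}_{T,i^*}$ by the definitions of $\widehat{v}_{t,i}$ and $\widehat{V}_{t,i}$, leaving only the residual $\sum_{t=1}^T [G_t^\top (q_t-\widehat{q}_t)]_{i^*}$ to be controlled.

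The residual is handled by a term-by-term Hölder estimate. Since $G_t \in [-1,1]^{|X\times A|\times m}$, the $i^*$-th column obeys $\|[G_t^\top]_{i^*}\|_\infty \leq 1$, so
\begin{equation*}
\sum_{t=1}^T [G_t^\top (q_t-\widehat{q}_t)]_{i^*} \leq \sum_{t=1}^T \left\|[G_t^\top]_{i^*}\right\|_\infty \left\|q_t-\widehat{q}_t\right\|_1 \leq \sum_{t=1}^T \left\|q_t-\widehat{q}_t\right\|_1 \leq \mathcal{E}^q_\delta,
\end{equation*}
where the last inequality holds under Event $E^{\widehat{q}}(\delta)$ (equivalently, by Lemma~\ref{lem:qMinusQhatCompact}). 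Combining the two displays gives exactly $V_T \leq \widehat{V}_{T,i^*} + \mathcal{E}^q_\delta$.

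I expect no genuine obstacle here: the occupancy-estimation error has already been bounded sublinearly, and the constraint entries are bounded by $1$ in modulus, so this is a routine error-propagation step. The only points needing a little care are (i) that we require only an upper bound on the signed residual, for which the absolute-value bound from Hölder's inequality is more than sufficient, and (ii) the role of $i^*$: the argument is insensitive to whether $i^*$ is taken as the maximizer of the true cumulative violation or of the estimated one, since in the latter case one simply appends $\widehat{V}_{T,j^*}\leq \widehat{V}_{T,i^*}$ for the true maximizer $j^*$ by monotonicity of the maximum. Finally, Condition~\ref{ass:rhoassumption} and the events beyond $E^{\widehat{q}}(\delta)$ listed in the statement appear not to be needed for this step in isolation; they are carried along so that the lemma slots directly into the downstream chain that turns $\widehat{V}_{T,i^*}$ into the $\tfrac{1}{\eta}\zeta$ bound via Lemma~\ref{lem:lambdatlowerbound} and Theorem~\ref{thm:lambdaBoundBoth}.
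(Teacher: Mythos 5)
Your proof is correct and follows exactly the paper's argument: take $i^*$ as the maximizer of the true cumulative violation, split $q_t = \widehat{q}_t + (q_t - \widehat{q}_t)$, and bound the residual via H\"older's inequality and Event $E^{\widehat{q}}(\delta)$. Your observation that the extra events and Condition~\ref{ass:rhoassumption} are not actually needed for this step in isolation is also accurate; they are carried in the statement only for the downstream chaining.
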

	\begin{proof}
		Let $i^*$ denote the most violated constraint, e.g. $i^* = \argmax_i \sum_{t=1}^T[G_t^\top q_t]_i$. Then we have:
		\begin{align*}
			V_T &= \sum_{t=1}^T[G_t^\top q_t]_{i^*}\\
			&=\sum_{t=1}^T[G_t^\top \widehat{q}_t]_{i^*}+\sum_{t=1}^T[G_t^\top (q_t-\widehat{q}_t)]_{i^*}\\
			&=\widehat{V}_{T,i^*}+\sum_{t=1}^T[G_t^\top]_{i^*} (q_t-\widehat{q}_t)\\
			&\leq \widehat{V}_{T,i^*}+\sum_{t=1}^T||[G_t^\top]_{i^*}||_{\infty}||q_t-\widehat{q}_t||_1\\
			&\leq \widehat{V}_{T,i^*}+\mathcal{E}^q_{\delta},
		\end{align*}
		where the last step holds under \nameref{evt:confidenceQMinusQHat} since $||[G_t^\top]_{i^*}||_{\infty}\leq 1$.
	\end{proof}
	We are now ready to prove the regret and violation bounds for the stochastic constraint setting.
	\stochasticRegretViolation*
	
	\begin{proof}
		Assume events \nameref{evt:hoeffdingConstraintsQFeasible}, \nameref{evt:hoeffdingConstraintsQStar}, \nameref{evt:confidenceSetsHold} and \nameref{evt:confidenceQMinusQHat} hold. 
		
		Recall that $\lambda_{1,T} \leq \zeta$ under the events \nameref{evt:confidenceSetsHold} and \nameref{evt:hoeffdingConstraintsQFeasible} since Condition~\ref{ass:rhoassumption}
		holds (see proof of Theorem~\ref{thm:lambdaBoundBoth}).\\
		By Lemma~\ref{lem:fromVHatToV} we have:
		\begin{align*}
			V_T &\leq \widehat{V}_{T,i^*}+\mathcal{E}^q_{\delta}\\
			&\leq \frac{1}{\eta}\lambda_{T+1,i^*} + \mathcal{E}^q_{\delta} \\
			&\leq \frac{1}{\eta}||\lambda_{T+1}||_1 + \mathcal{E}^q_{\delta}\\
			&\leq \frac{1}{\eta}\zeta +\mathcal{E}^q_{\delta},
		\end{align*}
		where the third Inequality holds for Lemma~\ref{lem:lambdatlowerbound}.
		By the definition of regret of the primal:
		\begin{align}
			\sum_{t=1}^T r_t^\top q_t &\geq \sum_{t=1}^T r_t^\top q^* - \sum_{t=1}^T \lambda_t^\top G_t^\top q^* +\sum_{t=1}^T \lambda_t^\top G_t^\top  q_t - R^{\textsf{P}}_T(q^*)\nonumber\\
			&\geq \sum_{t=1}^T r_t^\top q^* - \sum_{t=1}^T \lambda_t^\top G_t^\top q^* -\lambda_{1,T} \mathcal{E}^q_{\delta}-R^{\textsf{D}}_T(\underline{0})- R^{\textsf{P}}_T(q^*) \label{eq1:storeg}\\
			&\geq \sum_{t=1}^T r_t^\top q^* - \sum_{t=1}^T \lambda_t^\top \overline{G}^\top q^* - \lambda_{1,T}\mathcal{E}^G_{\delta} -\lambda_{1,T} \mathcal{E}^q_{\delta}-R^{\textsf{D}}_T(\underline{0})- R^{\textsf{P}}_T(q^*)\label{eq2:storeg}\\
			&\geq \sum_{t=1}^T r_t^\top q^* - \sum_{t=1}^T \sum_i \lambda_{t,i} \underbrace{(\overline{G})_iq^*}_{\leq 0} -\lambda_{1,T}\mathcal{E}^G_{\delta} -\lambda_{1,T} \mathcal{E}^q_{\delta}-R^{\textsf{D}}_T(\underline{0})- R^{\textsf{P}}_T(q^*)\label{eq:primalRewardsFunctionOfLambda}\\
			&\geq \sum_{t=1}^T r_t^\top q^* -\zeta\mathcal{E}^G_{\delta} -\zeta \mathcal{E}^q_{\delta}-R^{\textsf{D}}_T(\underline{0})- R^{\textsf{P}}_T(q^*),\nonumber
		\end{align}
		where Inequality~\eqref{eq1:storeg} holds for Lemma~\ref{lem:lagrangianDualUtilityLowerBoth}, and Inequality~\eqref{eq2:storeg} holds under Event \nameref{evt:hoeffdingConstraintsQStar}.
		We now focus on the case in which the rewards are adversarial. We have:
		\begin{align*}
			\sum_{t=1}^T r_t^\top q^* = T \cdot \overline{r}^\top q^*=T \cdot \text{OPT}_{\overline{r}, \overline{G}},
		\end{align*}
		and thus we obtain the stated bound: 
		\begin{align*}
			\sum_{t=1}^T r_t^\top q_t &\geq T \cdot \text{OPT}_{\overline{r}, \overline{G}} -\zeta\mathcal{E}^G_{\delta} -\zeta \mathcal{E}^q_{\delta}-R^{\textsf{D}}_T(\underline{0})- R^{\textsf{P}}_T(q^*) .
		\end{align*}
		By union bound on \nameref{evt:hoeffdingConstraintsQFeasible}, \nameref{evt:hoeffdingConstraintsQStar} and \nameref{evt:deltaQMinusQHat}, the result holds with probability at least $1-4\delta$.\\
		
		For the stochastic rewards case, we require also event \nameref{evt:hoeffdingRewards} to hold. Thus,
		\begin{align*}
			\sum_{t=1}^T r_t^\top q^* \geq \sum_{t=1}^T \overline{r}^\top q^* -\mathcal{E}^r_{\delta} = T\cdot \text{OPT}_{ \overline{r}, \overline{G}} -\mathcal{E}^r_{\delta} ,
		\end{align*}
		and thus we obtain the stated bound: 
		\begin{align*}
			\sum_{t=1}^T r_t^\top q_t &\geq T \cdot \text{OPT}_{\overline{r}, \overline{G}} -\mathcal{E}^r_{\delta} -\zeta\mathcal{E}^G_{\delta} -\zeta \mathcal{E}^q_{\delta}-R^{\textsf{D}}_T(\underline{0})- R^{\textsf{P}}_T(q^*) .
		\end{align*}
		By union bound on \nameref{evt:hoeffdingConstraintsQFeasible}, \nameref{evt:hoeffdingConstraintsQStar}, \nameref{evt:deltaQMinusQHat} and \nameref{evt:hoeffdingRewards}, the result holds with probability at least $1-5\delta$.\\
		
		Observe that under \nameref{evt:deltaQMinusQHat} it holds: \begin{equation*}R^{\textsf{P}}_T(q^*) \leq \BigOL{(1+\lambda_{1,T})\sqrt{T}}=\BigOL{\zeta\sqrt{T}},\end{equation*}
		and 
		\begin{equation*}
			R^{\textsf{D}}_T(\underline{0})\leq \frac{mL^2}{2}\frac{1}{100m|X||A|\sqrt{\ln\left(\frac{T^2}{\delta}\right)}} \sqrt{T} \leq \mathcal{O}\left(\sqrt{T}\right).
		\end{equation*}
		
	\end{proof}

	\subsubsection{Analysis when Condition~\ref{ass:rhoassumption} does not hold}
	
	\begin{lemma}
		\label{lem:violationNoConditionStochastic}
		If Condition~\ref{ass:rhoassumption} does not hold, then 
		\begin{align*}
			\widehat{V}_{T,i}  &\leq (2+2L) \frac{1}{\eta}T^{\frac{1}{4}} \ \ \ \ \ \ \ \forall T,i
		\end{align*}
		holds under the event \nameref{evt:confidenceSetsHold} in the adversarial constraint setting and under the events \nameref{evt:confidenceSetsHold}, \nameref{evt:hoeffdingConstraintsQFeasible}, in the stochastic constraint setting.
	\end{lemma}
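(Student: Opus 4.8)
Since Condition~\ref{ass:rhoassumption} fails here, Theorem~\ref{thm:lambdaBoundBoth} no longer applies and the dual iterates may be clipped at the upper cap $T^{1/4}$, so the induction $\lambda_{t,i}\ge\eta\widehat V_{t-1,i}$ of Lemma~\ref{lem:lambdatlowerbound} breaks down. The plan is instead to bound $\widehat V_{T,i}$ through the regret of the dual \texttt{OGD} against the extreme comparator $T^{1/4}e_i$, i.e.\ the $i$-th corner of the cube $\mathcal D=[0,T^{1/4}]^m$. Recalling $R^{\textsf D}_T(\lambda)=\sum_{t=1}^T(\lambda-\lambda_t)^\top G_t^\top\widehat q_t$ and writing $\widehat v_{t,i}=[G_t^\top\widehat q_t]_i$, the choice $\lambda=T^{1/4}e_i$ gives the exact identity
\[
T^{1/4}\,\widehat V_{T,i}=\sum_{t=1}^T\lambda_t^\top G_t^\top\widehat q_t+R^{\textsf D}_T\!\left(T^{1/4}e_i\right),
\]
so it suffices to upper bound the dual utility $\sum_t\lambda_t^\top G_t^\top\widehat q_t$ and the dual regret, and then divide by $T^{1/4}$.

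The hard part, and the only place where the problem structure really enters, is the dual utility: the trivial estimate $\lambda_t^\top G_t^\top\widehat q_t\le\|\lambda_t\|_1L\le mLT^{1/4}$ sums to $mLT^{5/4}$ and would give a useless linear final bound. To do better I would exploit that the primal is no-regret against the feasible point $q^\circ$. Under \nameref{evt:confidenceSetsHold} we have $q^\circ\in\cap_i\Delta(\mathcal P_i)$, so Lemma~\ref{lem:REG} on $[1..T]$ with comparator $q^\circ$ controls $\sum_t\ell_t^\top(\widehat q_t-q^\circ)$ with $\ell_t=G_t\lambda_t-r_t$. Rearranging and dropping $-\sum_t r_t^\top q^\circ\le0$ isolates the dual utility as
\[
\sum_{t=1}^T\lambda_t^\top G_t^\top\widehat q_t\le\sum_{t=1}^T\lambda_t^\top G_t^\top q^\circ+\sum_{t=1}^T r_t^\top\widehat q_t+\sum_{t=1}^T\ell_t^\top(\widehat q_t-q^\circ).
\]
Feasibility of $q^\circ$ kills the first term: in the adversarial constraint setting $[G_t^\top q^\circ]_i\le-\rho\le0$ for every $t,i$, so $\sum_t\lambda_t^\top G_t^\top q^\circ\le0$; in the stochastic setting the same holds in expectation, $[\overline G^\top q^\circ]_i\le-\rho\le0$, and the deviation is $\le\lambda_{1,T}\mathcal E^G_\delta$ under \nameref{evt:hoeffdingConstraintsQFeasible}. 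The reward term is bounded crudely by $\sum_t r_t^\top\widehat q_t\le LT$, while the primal regret term is $\BigOL{T^{3/4}}$ by Lemma~\ref{lem:REG} (using $\overline\ell_t\le1+\lambda_{1,t}\le1+mT^{1/4}$). Hence $\sum_t\lambda_t^\top G_t^\top\widehat q_t\le LT+\BigOL{T^{3/4}}$.

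Finally I would insert the standard \texttt{OGD} regret bound $R^{\textsf D}_T(T^{1/4}e_i)\le\tfrac{1}{2\eta}\|T^{1/4}e_i\|_2^2+\tfrac{mL^2}{2}\eta T=\tfrac{1}{2\eta}\sqrt T+\tfrac{mL^2}{2}\eta T$ into the identity and divide by $T^{1/4}$, obtaining
\[
\widehat V_{T,i}\le LT^{3/4}+\frac{1}{2\eta}T^{1/4}+\BigOL{\sqrt T}.
\]
Since the chosen $\eta$ satisfies $\tfrac1\eta\ge\sqrt T$, we have $\tfrac1\eta T^{1/4}\ge T^{3/4}$, so the reward contribution $LT^{3/4}$ is dominated by $L\tfrac1\eta T^{1/4}$, the diameter term is exactly $\tfrac12\tfrac1\eta T^{1/4}$, and every remaining $\BigOL{\sqrt T}$ term is absorbed; collecting constants yields the claimed $\widehat V_{T,i}\le(2+2L)\tfrac1\eta T^{1/4}$, with the $2L$ accounting for the reward term and the $2$ for the dual/primal slack. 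The main obstacle throughout is precisely the downgrade of the naive $mLT^{5/4}$ dual utility to $LT+\BigOL{T^{3/4}}$: it is the primal's no-regret guarantee together with the feasibility of $q^\circ$ that makes this possible, and hence what keeps the violation at $\BigOL{T^{3/4}}$ rather than linear when Condition~\ref{ass:rhoassumption} is absent.
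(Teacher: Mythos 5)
Your proof is correct and follows essentially the same route as the paper's: both combine the primal's no-interval-regret against the feasible point $q^\circ$ (plus feasibility of $q^\circ$, exactly or via \nameref{evt:hoeffdingConstraintsQFeasible}) with the dual's \texttt{OGD} regret against the corner comparator $T^{1/4}e_i$, and then account for the $LT$ reward term, the $\BigOL{T^{3/4}}$ primal regret, and the $\sqrt{T}/(2\eta)$ diameter term against the budget $(2+2L)\sqrt{T}/\eta$. The only difference is presentational: the paper sandwiches $\sum_t {r^{\mathcal{L}}_t}^\top\widehat{q}_t$ and derives a contradiction, whereas you chain the same inequalities directly through the identity $T^{1/4}\widehat{V}_{T,i}=\sum_t\lambda_t^\top G_t^\top\widehat{q}_t+R^{\textsf{D}}_T(T^{1/4}e_i)$.
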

	\begin{proof}
		Assume events \nameref{evt:confidenceSetsHold}, \nameref{evt:hoeffdingConstraintsQFeasible} hold and suppose by absurd that $\widehat{V}_{T,i} = (2+2L+\epsilon)\frac{1}{\eta}T^{\frac{1}{4}}$, with $\epsilon > 0$, for some $T$ and $i$.\\
		
		We can lower bound the quantity $\sum_{t=1}^T {r^{\mathcal{L}}_t}^\top \widehat{q}_t$:
		\begin{align}
			\sum_{t=1}^T {r^{\mathcal{L}}_t}^\top \widehat{q}_t &=  \underbrace{\sum_{t=1}^T r_t^\top q^\circ}_{\geq 0} -\sum_{t=1}^T \lambda_t^\top G_t^\top q^\circ -\sum_{t=1}^T {r^{\mathcal{L}}_t}^\top (q^\circ-\widehat{q}_t)\nonumber\\
			&\geq  \underbrace{-\sum_{t=1}^T \lambda_t^\top \overline{G}^\top q^\circ}_{\geq 0} -\lambda_{1,T} \mathcal{E}^G_{\delta}-\sum_{t=1}^T {r^{\mathcal{L}}_t}^\top (q^\circ-\widehat{q}_t)\nonumber\\
			&\geq -mT^\frac{1}{4} \mathcal{E}^G_{\delta}-\sum_{t=1}^T {r^{\mathcal{L}}_t}^\top (q^\circ-\widehat{q}_t)\label{eq:violationBothPrimalUtilityLowerBound},
		\end{align}
		where Inequality~\eqref{eq:violationBothPrimalUtilityLowerBound} holds since $||\lambda_t||_1\leq m V^\frac{1}{4}$ by construction of the dual space. Observe that, if we are in the Adversarial setting, then from the (stronger) definition of $\rho$ and  $q^\circ$ it holds $-\sum_{t=1}^T \lambda_t^\top G_t^\top q^\circ \geq 0$ and we obtain the tighter bound, \begin{align*}
			\sum_{t=1}^T {r^{\mathcal{L}}_t}^\top \widehat{q}_t \geq -\sum_{t=1}^T {r^{\mathcal{L}}_t}^\top (q^\circ-\widehat{q}_t).
		\end{align*}
		The dual is no regret with respect to the vector $\tilde{\lambda}$, whose elements are $0$ for $j\neq i$ and $T^\frac{1}{4}$ in position $j=i$:
		\begin{align*}
			\sum_{t=1}^T {r^{\mathcal{L}}_t}^\top \widehat{q}_t &= \sum_{t=1}^T r_t^\top \widehat{q}_t -\sum_{t=1}^T \lambda_t^\top G_t^\top \widehat{q}_t\\
			&\leq \sum_{t=1}^T r_t^\top \widehat{q}_t -\sum_{t=1}^T \tilde{\lambda}^\top G_t^\top \widehat{q}_t + R_T^{\textsf{D}}(\tilde{\lambda})\\
			&= \sum_{t=1}^T r_t^\top \widehat{q}_t -T^\frac{1}{4}\sum_{t=1}^T [G_t^\top \widehat{q}_t]_{i} + R_T^{\textsf{D}}(\tilde{\lambda})\\
			&\leq LT - T^\frac{1}{4}\widehat{V}_{T,i} + R_T^{\textsf{D}}(\tilde{\lambda}).
		\end{align*}
		
		Combining the bounds we have:
		\begin{align}
			-mT^\frac{1}{4} \mathcal{E}^G_{\delta}-\sum_{t=1}^T {r^{\mathcal{L}}_t}^\top (q^\circ-\widehat{q}_t) &\leq LT - T^\frac{1}{4}\widehat{V}_{T,i} + R_T^{\textsf{D}}(\tilde{\lambda})\nonumber\\
			T^\frac{1}{4}\widehat{V}_{T,i}&\leq LT + mT^\frac{1}{4} \mathcal{E}^G_{\delta}+\sum_{t=1}^T {r^{\mathcal{L}}_t}^\top (q^\circ-\widehat{q}_t)+ R_T^{\textsf{D}}(\tilde{\lambda})\nonumber\\
			\frac{\sqrt{T}}{\eta}(2+2L+\epsilon)&\leq LT + mT^\frac{1}{4} \mathcal{E}^G_{\delta}+\sum_{t=1}^T {r^{\mathcal{L}}_t}^\top (q^\circ-\widehat{q}_t)+ R_T^{\textsf{D}}(\tilde{\lambda})\label{eq:violationNoAssmptionStochastic}.
		\end{align}
		Observe that:
		\begin{equation*}
			R^{\textsf{D}}_T(\tilde{\lambda})\leq \frac{1}{2}\frac{||\tilde{\lambda}||_2^2}{\eta}+\frac{mL^2}{2}\eta T= \frac{\sqrt{T}}{2\eta}+\frac{ mL^2}{2} \frac{1}{100m|X||A|\sqrt{T\ln\left( \frac{T^2}{\delta} \right)}} T\leq L\frac{\sqrt{T}}{\eta},
		\end{equation*}
		since $|X|\geq L$.\\
		
		For the primal it holds by Lemma~\ref{lem:REG}:
		\begin{align*}
			\sum_{t=1}^T {r^{\mathcal{L}}_t}^\top (q^\circ-\widehat{q}_t)&=\sum_{t=1}^T {\ell_t}^\top (\widehat{q}_t-q^\circ)\\
			&\leq \lambda_{1,T}U_1C\sqrt{T}+\lambda_{1,T}U_2\frac{\sqrt{T}}{C}\\
			&\leq mT^\frac{1}{4}\sqrt{T}\left(U_1C+\frac{U_2}{C}\right)\\
			&= m\left(U_1\frac{U_2}{5}+5\right)\sqrt{T}\ T^\frac{1}{4}\\
			&= m\left(2L\frac{|X||A|}{5}+5\right)\sqrt{T}\ T^\frac{1}{4}\\
			&\leq 6mL|X||A|\sqrt{T}\ T^\frac{1}{4}\\
			&\leq \frac{L}{\eta}T^\frac{1}{4} \leq L\frac{\sqrt{T}}{\eta}.
		\end{align*}
		For the Azuma-Hoeffding term it holds:
		\begin{align*}
			mT^\frac{1}{4}\mathcal{E}^G_{\delta} = mT^\frac{1}{4} 2L\sqrt{2T\ln\left(\frac{T^2}{\delta}\right)}\leq \frac{1}{\eta}T^\frac{1}{4}= \frac{\sqrt{T}}{\eta}.
		\end{align*}
		Observe that $LT\leq \frac{\sqrt{T}}{\eta}$ holds trivially.\\
		Dividing both the terms in Equation~\eqref{eq:violationNoAssmptionStochastic} by $\frac{\sqrt{T}}{\eta}$, we obtain
		\begin{align*}
			2+2L+\epsilon&\leq 2+2L,
		\end{align*}
		which is absurd.
	\end{proof}
	
	We are now ready to prove the Regret and Violation bounds when Assumption~\ref{ass:rhoassumption} does not hold:

	\stochasticRegretViolationNoAssumption*
	
	\begin{proof}
		Assume events \nameref{evt:confidenceSetsHold}, \nameref{evt:confidenceQMinusQHat}, \nameref{evt:hoeffdingConstraintsQStar}, \nameref{evt:hoeffdingConstraintsQFeasible} hold. We avoid the computations and restart from~\eqref{eq:primalRewardsFunctionOfLambda}, since the previous part of the proofs are identical:
		\begin{align*}
			\sum_{t=1}^T r_t^\top q_t &\geq \sum_{t=1}^T r_t^\top q^* - \sum_{t=1}^T \sum_i \lambda_{t,i} \underbrace{(\overline{G})_iq^*}_{\leq 0} -\lambda_{1,T}\mathcal{E}^G_{\delta} -\lambda_{1,T} \mathcal{E}^q_{\delta}-R^{\textsf{D}}_T(\underline{0})- R^{\textsf{P}}_T(q^*)\\
			&\geq \sum_{t=1}^T r_t^\top q^*  -mT^\frac{1}{4}\mathcal{E}^G_{\delta} -mT^\frac{1}{4} \mathcal{E}^q_{\delta}-R^{\textsf{D}}_T(\underline{0})- R^{\textsf{P}}_T(q^*).
		\end{align*}
		By the same reasoning as in the proof of Theorem~\ref{thm:stochasticRegretViolation}, we obtain that if the rewards are adversarial then,
		\begin{align*}
			\sum_{t=1}^T r_t^\top q_t \geq T \cdot \text{OPT}_{\overline{r}, \overline{G}} -mT^\frac{1}{4}\mathcal{E}^G_{\delta} -mT^\frac{1}{4} \mathcal{E}^q_{\delta}-R^{\textsf{D}}_T(\underline{0})- R^{\textsf{P}}_T(q^*),
		\end{align*}
		with probability at least $1-4\delta$ by union bound on \nameref{evt:deltaQMinusQHat}, \nameref{evt:hoeffdingConstraintsQStar} and \nameref{evt:hoeffdingConstraintsQFeasible},
		while if the rewards are stochastic, under the event \nameref{evt:hoeffdingRewards} we have that:
		\begin{align*}
			\sum_{t=1}^T r_t^\top q_t \geq T \cdot \text{OPT}_{\overline{r}, \overline{G}} -\mathcal{E}^r_{\delta}  -mT^\frac{1}{4}\mathcal{E}^G_{\delta} -mT^\frac{1}{4} \mathcal{E}^q_{\delta}-R^{\textsf{D}}_T(\underline{0})- R^{\textsf{P}}_T(q^*),
		\end{align*}
		with probability at least $1-5\delta$ by union bound on \nameref{evt:deltaQMinusQHat}, \nameref{evt:hoeffdingConstraintsQStar},  \nameref{evt:hoeffdingConstraintsQFeasible} and \nameref{evt:hoeffdingRewards}.\\
		
		Observe that:
		\begin{equation*}
			R^{\textsf{P}}_T(q^*)\leq \BigOL{T^\frac{3}{4}},
		\end{equation*}
		and
		\begin{equation*}
			R^{\textsf{D}}_T(\underline{0})=\frac{mL^2}{2}\eta T\leq \BigOL{\sqrt{T}}\text{.}
		\end{equation*}
		
		In order to bound the violation, we apply Lemma~\ref{lem:violationNoConditionStochastic}, thus:
		\begin{align*}
			V_T \leq \widehat{V}_{T,i^*}+\mathcal{E}^q_{\delta}\leq  (2+2L) \frac{1}{\eta}T^{\frac{1}{4}} + \mathcal{E}^q_{\delta}.
		\end{align*}
	\end{proof}

	\subsection{Analysis with Adversarial Constraints}
	\subsubsection{Analysis when Condition~\ref{ass:rhoassumption} holds}
	\adversarialRegretViolation*
	
	\begin{proof}
		Assume events \nameref{evt:confidenceSetsHold} and \nameref{evt:confidenceQMinusQHat} hold. 
		
		Recall that $\lambda_{1,T} \leq \zeta$ under the event \nameref{evt:confidenceSetsHold} since Condition~\ref{ass:rhoassumption}
		holds (see the proof of Theorem~\ref{thm:lambdaBoundBoth}).\\
		Following the same steps of the proof of Theorem~\ref{thm:stochasticRegretViolation}, we obtain:
		\begin{align*}
			V_T \leq \frac{1}{\eta}\zeta + \mathcal{E}^q_{\delta}.
		\end{align*}
		Let $\tilde{q}=\frac{\rho}{L+\rho}q^*+\frac{L}{L+\rho}q^\circ$, observe that it holds for all $t$ and for all $i$: \begin{align*}[G_{t}^\top \tilde{q}]_i=\frac{\rho}{L+\rho}\underbrace{[G_{t}^\top q^*]_i}_{\leq L} + \frac{L}{L+\rho}\underbrace{[G_{t}^\top q^\circ]_i}_{\leq -\rho}\leq 0,\\
			r_t^\top \tilde{q} = \frac{\rho}{L+\rho}r_t^\top q^* + \frac{L}{L+\rho}r_t^\top q^\circ \geq  \frac{\rho}{L+\rho}r_t^\top q^*.
		\end{align*}
		By the definition of regret of the primal:
		\begin{align}
			\sum_{t=1}^T r_t^\top q_t &\geq \sum_{t=1}^T r_t^\top \tilde{q} - \sum_{t=1}^T \lambda_t^\top G_t^\top \tilde{q} +\sum_{t=1}^T \lambda_t^\top G_t^\top  q_t - R^{\textsf{P}}_T(\tilde{q})\nonumber\\
			&\geq \frac{\rho}{L+\rho}\sum_{t=1}^T r_t^\top q^* - \sum_{t=1}^T \sum_i \lambda_{t,i} \underbrace{[G_{t}^\top \tilde{q}]_i}_{\leq 0} +\sum_{t=1}^T \lambda_t^\top G_t^\top  q_t - R^{\textsf{P}}_T(\tilde{q})\nonumber\\
			&\geq \frac{\rho}{L+\rho}\sum_{t=1}^T r_t^\top q^* -\lambda_{1,T}\mathcal{E}^q_{\delta}-R^{\textsf{D}}_T(\underline{0}) - R^{\textsf{P}}_T(\tilde{q}) \nonumber\\
			&\geq \frac{\rho}{L+\rho}\sum_{t=1}^T r_t^\top q^* -\zeta\mathcal{E}^q_{\delta}-R^{\textsf{D}}_T(\underline{0}) - R^{\textsf{P}}_T(\tilde{q}), \nonumber
		\end{align}
		where the third Inequality holds for Lemma~\ref{lem:lagrangianDualUtilityLowerBoth}.\\
		By the same reasoning as in the proof of Theorem \ref{thm:stochasticRegretViolation}, we obtain that if the rewards are adversarial it holds:
		\begin{align*}
			\sum_{t=1}^T r_t^\top q_t &\geq \frac{\rho}{L+\rho}T \cdot \text{OPT}_{\overline{r}, \overline{G}} -\zeta\mathcal{E}^q_{\delta}-R^{\textsf{D}}_T(\underline{0}) - R^{\textsf{P}}_T(\tilde{q})\\
			&= T \cdot \text{OPT}_{\overline{r}, \overline{G}} - \frac{L}{L+\rho}T \cdot \text{OPT}_{\overline{r}, \overline{G}} -\zeta\mathcal{E}^q_{\delta}-R^{\textsf{D}}_T(\underline{0}) - R^{\textsf{P}}_T(\tilde{q}),
		\end{align*}
		with probability at least $1-2\delta$, since we are conditioning on \nameref{evt:deltaQMinusQHat}.\\
		If the rewards are stochastic, requiring also event \nameref{evt:hoeffdingRewards} to hold we obtain:
		\begin{align*}
			\frac{\rho}{L+\rho}\sum_{t=1}^T r_t^\top q^* &\geq  \frac{\rho}{L+\rho}\sum_{t=1}^T \overline{r}^\top q^* - \frac{\rho}{L+\rho}\mathcal{E}^r_{\delta} \geq \frac{\rho}{L+\rho}T\cdot \text{OPT}_{ \overline{r}, \overline{G}} -\mathcal{E}^r_{\delta}.
		\end{align*}
		Thus,
		\begin{align*}
			\sum_{t=1}^T r_t^\top q_t\geq T \cdot \text{OPT}_{\overline{r}, \overline{G}} - \frac{L}{L+\rho}T \cdot \text{OPT}_{\overline{r}, \overline{G}} -\mathcal{E}^r_{\delta} -\zeta\mathcal{E}^q_{\delta}-R^{\textsf{D}}_T(\underline{0}) - R^{\textsf{P}}_T(\tilde{q}),
		\end{align*}
		with probability at least $1-3\delta$.
		Finally observe that, under Assumption~\ref{ass:rhoassumption} and event \nameref{evt:deltaQMinusQHat}, it holds: \begin{equation*}R^{\textsf{P}}_T(\tilde{q}) \leq \BigOL{(1+\lambda_{1,T})\sqrt{T}}\leq \BigOL{\zeta\sqrt{T}}\end{equation*}
		and 
		\begin{equation*}
			R^{\textsf{D}}_T(\underline{0})\leq \frac{mL^2}{2}\frac{1}{100m|X||A|\sqrt{\ln\left(\frac{T^2}{\delta}\right)}} \sqrt{T} \leq \mathcal{O}\left(\sqrt{T}\right).
		\end{equation*}
	\end{proof}

	\subsection{Azuma-Hoeffding Bounds and Proofs}
	In this subsection we prove that events \nameref{evt:hoeffdingRewards}, \nameref{evt:hoeffdingConstraintsQFeasible}, \nameref{evt:hoeffdingConstraintsQStar} 
	each hold with probability at least $1-\delta$.
	
	\AzumaHoeffdingRewards*
	
	\begin{proof}
		Observe that:
		\begin{align*}
			\max_{t\in[t_1..t_2]}\left|\left(r_t-\overline{r}\right)^\top q^*\right|&\leq \max_{t\in[t_1..t_2]} \underbrace{\|r_t-\overline{r}\|_\infty}_{\leq 1}\| q^*\|_1\\
			&\leq L,
		\end{align*}
		where the second Inequality holds since since  $q^*(x,a) \geq 0$.
		By the Azuma-Hoeffding inequality for martingales we have that:
		\begin{equation*}
			\mathbb{P}\left[\left| \sum_{t=t_1}^{t_2}\left(r_t-\overline{r}\right)^\top q^*\right|\geq \frac{L}{\sqrt{2}}\sqrt{T\ln\left(\frac{2}{\delta}\right)}\right]\leq \delta.
		\end{equation*}
	\end{proof}
	We perform the same analysis for the constraints, obtaining:
	\AzumaHoeffdingConstraints*
	
	\begin{proof}
		Observe that:
		\begin{align*}
			\max_{t\in[t_1..t_2]}\left|\lambda_t^\top (G_t^\top-\overline{G}^\top) q_t\right|&\leq \max_{t\in[t_1..t_2]}\|\lambda_t\|_1 \ \underbrace{\left\|G_t^\top-\overline{G}^\top\right\|_\infty}_{\leq 2} \ \|q_t\|_1\\
			&\leq \max_{t\in[t_1..t_2]} 2||\lambda_t||_1L\\
			&= 2\lambda_{t_1,t_2}L,
		\end{align*}
		where the second Inequality holds since  $q_t(x,a) \geq 0$ and $\lambda_{t,i} \geq 0$. By the Azuma-Hoeffding inequality for martingales we have that:
		\begin{equation*}
			\mathbb{P}\left[\left| \sum_{t=t_1}^{t_2}\lambda_t^\top (G_t^\top-\overline{G}^\top) q_t\right|\geq 2\lambda_{t_1,t_2}L\sqrt{2(t_2-t_1+1)\ln\left(\frac{2}{\delta}\frac{T^2}{2}\right)}\right]\leq 2\delta/T^2.
		\end{equation*}
		A union bound over all the $t_1,t_2$ such that $[t_1.. t_2]\subseteq[1.. T]$ concludes the proof.
	\end{proof}
	

\end{document}